\tikzstyle{mathtext}=[text badly centered, font={\fontsize{12pt}{12pt}\selectfont}]
\tikzstyle{smalltext}=[font={\fontsize{8pt}{8pt}\selectfont}]
\tikzstyle{arrow} = [thick,->]
\tikzstyle{dasharrow} = [dashed,->]
\tikzstyle{doublearrow} = [thick,->,-{Implies},double]
\newcommand\Tstrut{\rule{0pt}{2.6ex}}         %
\newcommand\Bstrut{\rule[-0.9ex]{0pt}{0pt}}   %
\theoremstyle{plain}
\newtheorem{theorem}{Theorem}[section]
\newtheorem{proposition}[theorem]{Proposition}
\newtheorem{lemma}[theorem]{Lemma}
\newtheorem{corollary}[theorem]{Corollary}
\theoremstyle{definition}
\newtheorem{assumption}[theorem]{Assumption}
\newtheorem*{remark}{Remark}
\newenvironment{proof-sketch}{\noindent{\textit{Sketch of proof.}}\hspace*{1em}}{\qed\bigskip}
\newif\ifdebug
\newif\ifdebug
\newcommand{\sde}{ScoreSDE}
\newcommand{\ode}{ScoreODE}
\newcommand{\FISH}{\text{Fisher}}
\newcommand{\DSM}{\text{DSM}}
\newcommand{\SM}{\text{SM}}
\newcommand{\DIFF}{\text{Diff}}
\newcommand{\SDE}{\text{SDE}}
\newcommand{\ODE}{\text{ODE}}
\DeclarePairedDelimiterX{\infdivx}[2]{(}{)}{%
	#1\;\delimsize\|\;#2%
}
\newcommand{\kl}{D_{\mathrm{KL}}\infdivx}
\newcommand{\fisher}{D_{\mathrm{F}}\infdivx}
\newcommand{\vect}[1]{\bm{#1}}
\newcommand{\argmin}{\operatornamewithlimits{argmin}}
\newcommand{\x}{\xv}
\newcommand{\y}{\yv}
\newcommand{\z}{\zv}
\newcommand{\s}{\sv}
\newcommand{\dm}{\mathrm{d}}
\newcommand{\E}{\mathbb{E}}
\newcommand{\R}{\mathbb{R}}
\newcommand{\N}{\mathcal{N}}
\newcommand{\dxv}{\mathrm{d}\xv}
\newcommand{\dzv}{\mathrm{d}\zv}
\newcommand{\dwv}{\mathrm{d}\wv}
\newcommand{\dt}{\mathrm{d}t}
\newcommand{\epsilonv}{\vect\epsilon}
\newcommand{\av}{\vect a}
\newcommand{\bv}{\vect b}
\newcommand{\fv}{\vect f}
\newcommand{\hv}{\vect h}
\newcommand{\ellv}{\vect \ell}
\newcommand{\sv}{\vect s}
\newcommand{\vv}{\vect v}
\newcommand{\wv}{\vect w}
\newcommand{\xv}{\vect x}
\newcommand{\yv}{\vect y}
\newcommand{\zv}{\vect z}
\newcommand{\Iv}{\vect I}
\newcommand{\Jc}{\mathcal J}
\newcommand{\Nc}{\mathcal N}
\newcommand{\Oc}{\mathcal O}
\newcommand{\Uc}{\mathcal U}
\newcommand{\Cov}{\mbox{Cov}}
\newcommand{\diag}{\mbox{diag}}
\icmltitlerunning{Maximum Likelihood Training for Score-Based Diffusion ODEs}
\begin{document}

\twocolumn[
\icmltitle{Maximum Likelihood Training for Score-Based Diffusion ODEs\\ by High-Order Denoising Score Matching}

\icmlsetsymbol{equal}{*}

\begin{icmlauthorlist}
\icmlauthor{Cheng Lu}{tsinghua}
\icmlauthor{Kaiwen Zheng}{tsinghua}
\icmlauthor{Fan Bao}{tsinghua}
\icmlauthor{Jianfei Chen}{tsinghua}
\icmlauthor{Chongxuan Li}{ruc}
\icmlauthor{Jun Zhu}{tsinghua}
\end{icmlauthorlist}

\icmlaffiliation{tsinghua}{Dept. of Comp. Sci. \& Tech., Institute for AI, Tsinghua-Bosch Joint Center for ML, BNRist Center, State Key Lab for Intell. Tech. \& Sys., Tsinghua University; Peng Cheng Laboratory}
\icmlaffiliation{ruc}{Gaoling School of AI, Renmin University of China; Beijing Key Lab of Big Data Management and Analysis Methods , Beijing, China}

\icmlcorrespondingauthor{Jianfei Chen}{jianfeic@tsinghua.edu.cn}
\icmlcorrespondingauthor{Jun Zhu}{dcszj@tsinghua.edu.cn}

\icmlkeywords{neural ODE, high order score matching, score-based generative model}

\vskip 0.3in
]

\printAffiliationsAndNotice{}  %

\begin{abstract}
Score-based generative models have excellent performance in terms of generation quality and likelihood. They model the data distribution by matching a parameterized score network with first-order data score functions. The score network can be used to define an ODE (``score-based diffusion ODE'') for exact likelihood evaluation. However, the relationship between the likelihood of the ODE and the score matching objective is unclear. In this work, we prove that matching the first-order score is not sufficient to maximize the likelihood of the ODE, by showing a gap between the maximum likelihood and score matching objectives. To fill up this gap, we show that the negative likelihood of the ODE can be bounded by controlling the first, second, and third-order score matching errors; and we further present a novel high-order denoising score matching method to enable maximum likelihood training of score-based diffusion ODEs. Our algorithm guarantees that the higher-order matching error is bounded by the training error and the lower-order errors. We empirically observe that by high-order score matching, score-based diffusion ODEs achieve better likelihood on both synthetic data and CIFAR-10, while retaining the high generation quality.

\end{abstract}

\section{Introduction}
\begin{figure}[t]
	\centering
	\begin{minipage}{0.32\linewidth}
		\centering
			\includegraphics[width=\linewidth]{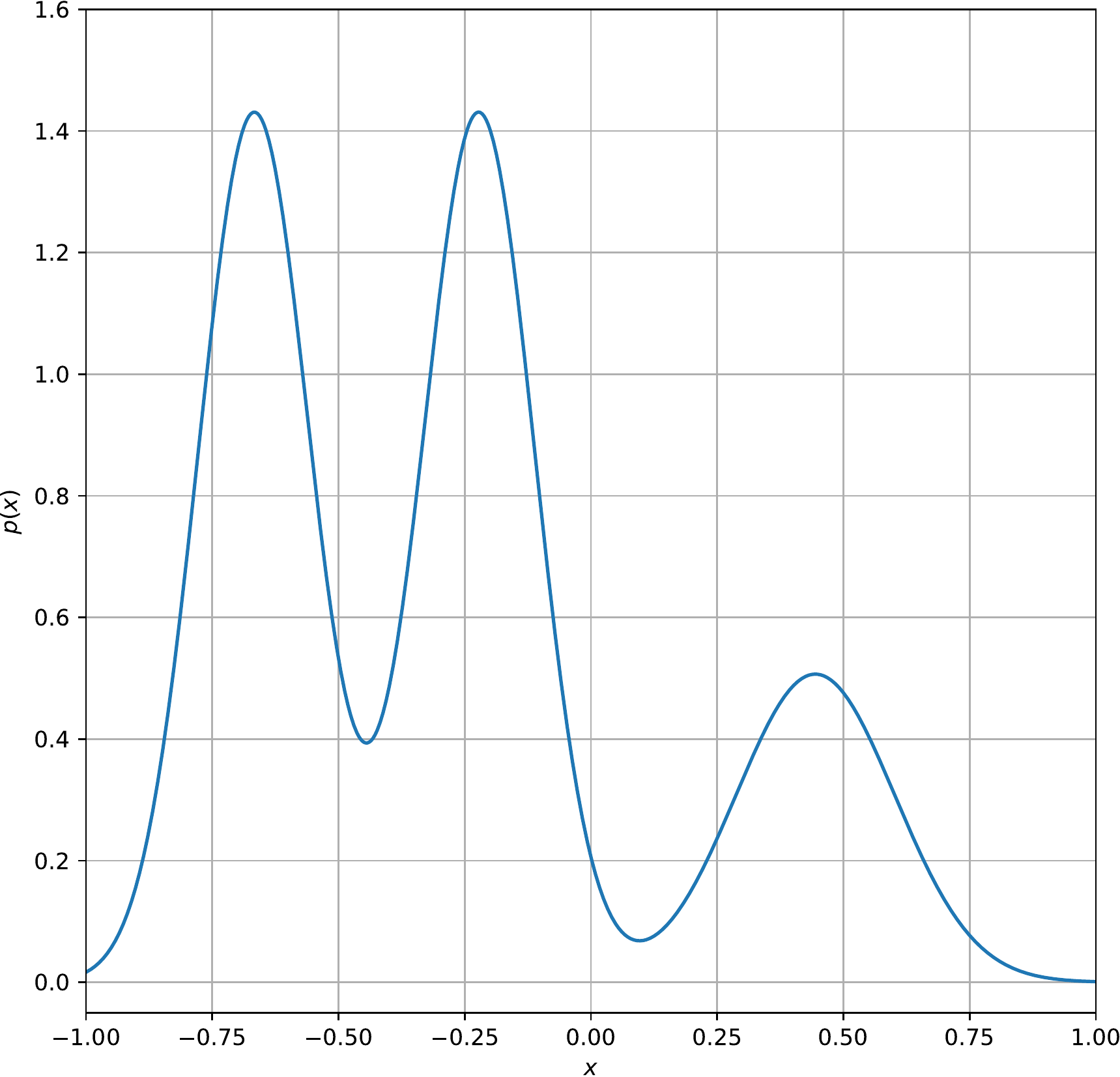}\\
\small{(a) Data \\ \ }
	\end{minipage}
	\begin{minipage}{0.32\linewidth}
		\centering		
 	\includegraphics[width=\linewidth]{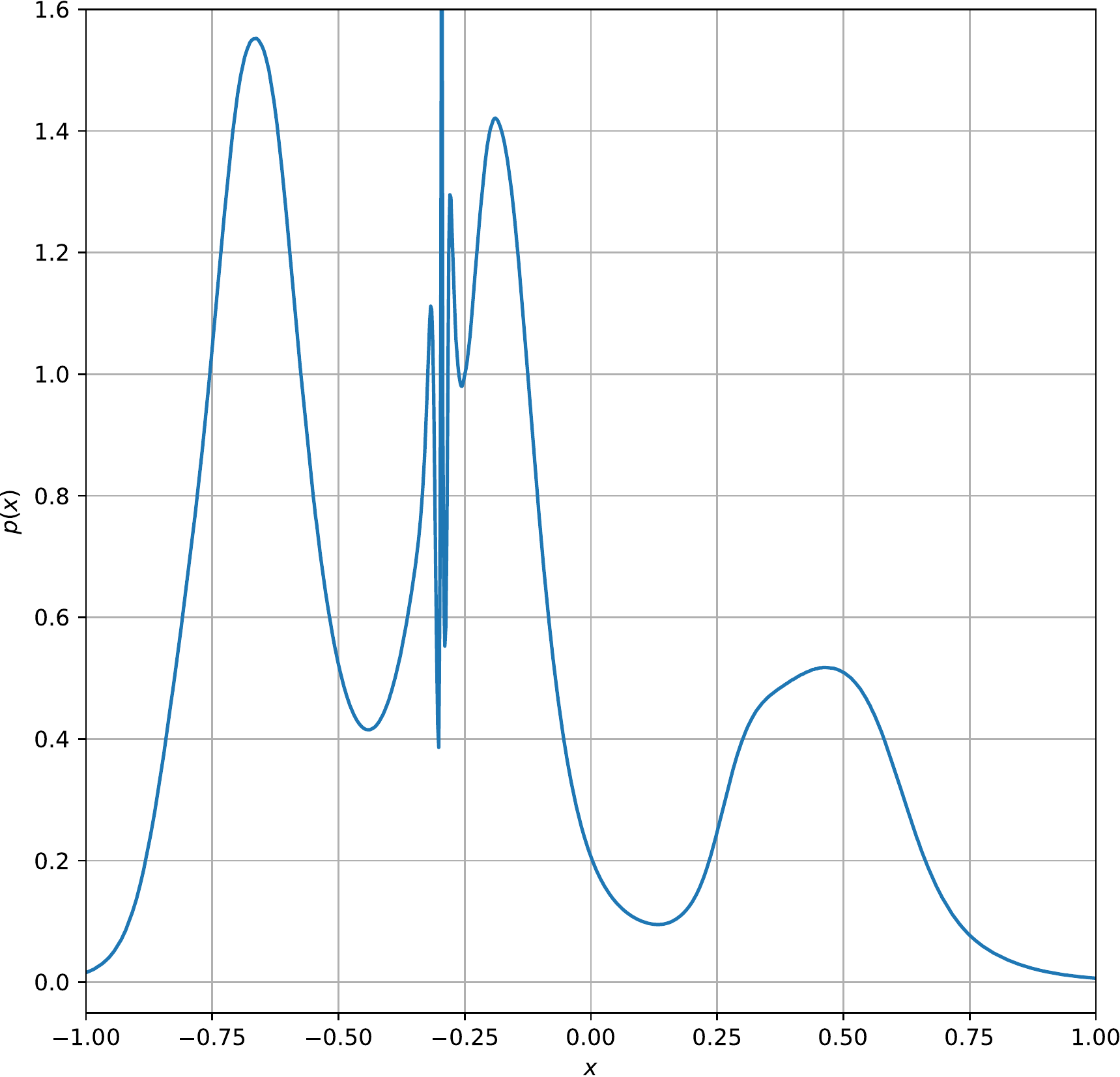}\\
\small{(b) {\ode}\\ (first-order SM) %
}
   \end{minipage}
\begin{minipage}{0.32\linewidth}
		\centering		
 	\includegraphics[width=\linewidth]{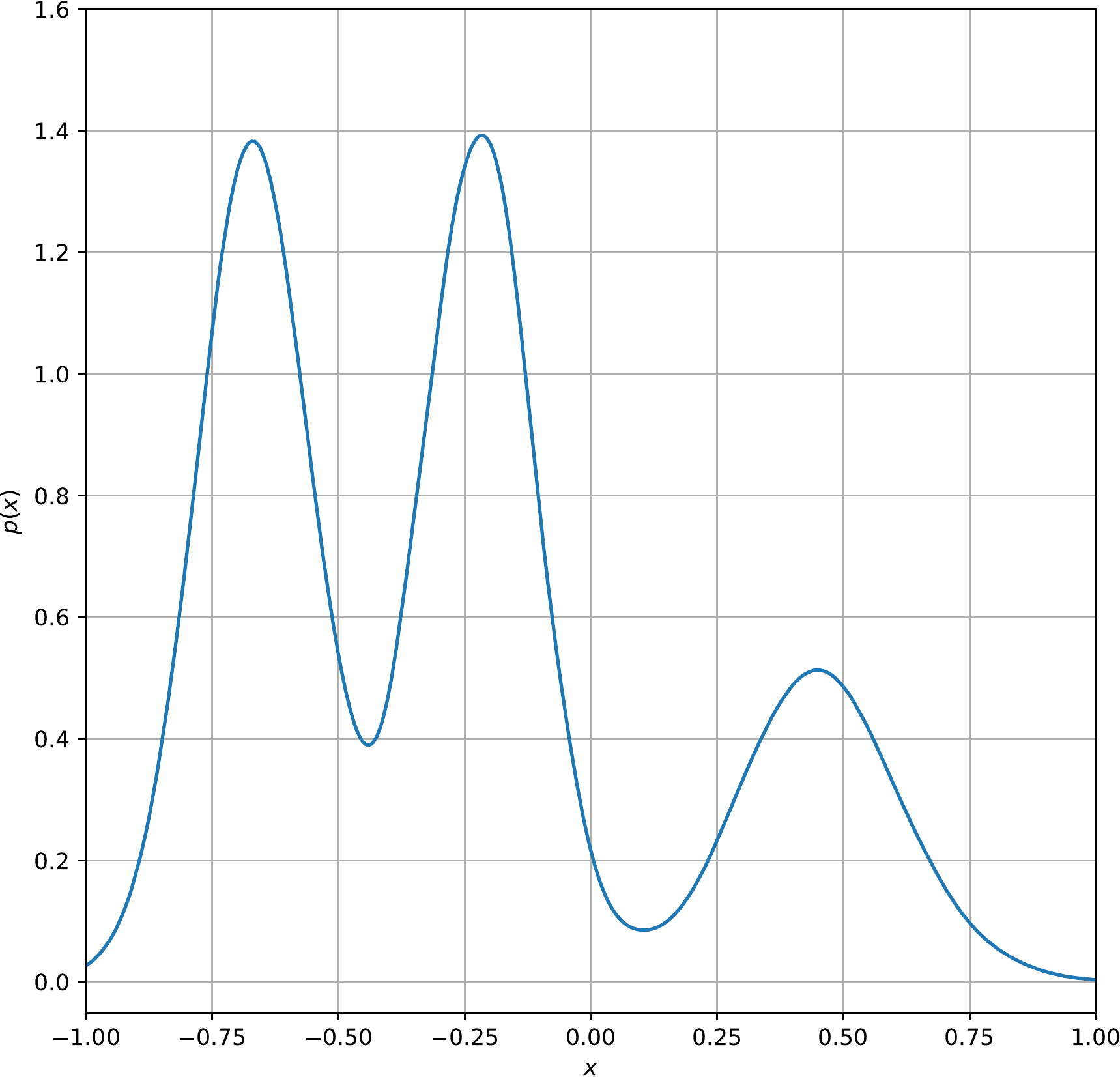}\\
\small{(c) {\ode}\\ (third-order SM) %
}
   \end{minipage}
   \vspace{-.05in}
	\caption{\label{fig:mog_intro}
An 1-D mixture-of-Gaussians data distribution (a) and \ode 's density (b,c). (b): {\ode} trained by minimizing the first-order score matching (SM) objective in~\cite{song2021maximum}, which fails to estimate the density around $x=-0.25$. (c): {\ode} trained by minimizing our proposed third-order SM objective, which approximates the data density well.}
\vspace{-.15in}
\end{figure}

Score-based generative models (SGMs) have shown promising %
performance in both sample quality and likelihood evaluation~\cite{song2020score,vahdat2021score,dockhorn2021score,kingma2021variational}, with applications %
on many tasks, such as image generation~\cite{dhariwal2021diffusion,meng2021sdedit}, speech synthesis~\cite{chen2020wavegrad,kong2020diffwave}, shape generation~\cite{luo2021diffusion,zhou20213d} and lossless compression~\cite{kingma2021variational}. SGMs define a forward diffusion process by a stochastic differential equation (SDE) that gradually perturbs the data distribution to a simple noise distribution. The forward diffusion process has an equivalent reverse-time process with analytical form~\cite{anderson1982reverse}, which depends on the \textit{score function} (the gradient of the log probability density) of the forward process at each time step. SGMs learn a parameterized neural network (called ``score model'')~\cite{song2020sliced,song2019generative,song2020improved,song2020score} to estimate the score functions, and further define probabilistic models by the score model.

Two types of probabilistic models are proposed with the learned score model. One is the score-based diffusion SDE (\citet{song2020score}, ``\sde'' for short), which is defined by approximately reversing the diffusion process from the noise distribution by the score model and it can generate high-quality samples. The other is the score-based diffusion ordinary differential equation (\citet{song2020score, song2021maximum}, ``\ode'' for short), which is defined by approximating the probability flow ODE~\cite{song2020score} of the forward process whose marginal distribution is the same as the forward process at each time. 
\ode s can be viewed as continuous normalizing flows~\cite{chen2018neural}. Thus, unlikely SDEs, they can compute exact likelihood by ODE solvers~\cite{grathwohl2018ffjord}.

In practice, the score models are trained by minimizing an objective of a time-weighted combination of score matching losses~\cite{hyvarinen2005estimation,vincent2011connection}. It is proven that for a specific weighting function, minimizing the score matching objective is equivalent to maximizing the likelihood of the {\sde}~\cite{song2021maximum}.
However, minimizing the score matching objective does not necessarily maximize the likelihood of the \ode.
This is problematic since \ode s are used for likelihood estimation. 
In fact, \ode s trained with the score matching objective may even fail to estimate the likelihood of very simple data distributions, as shown in Fig.~\ref{fig:mog_intro}.

In this paper, we systematically analyze the relationship between the score matching objective and the maximum likelihood objective of {\ode}s, and present a new algorithm to learn \ode s via maximizing likelihood. Theoretically, we derive an equivalent form of the KL divergence\footnote{Maximum likelihood estimation is equivalent to minimizing a KL divergence.} between the data distribution and the \ode~distribution, which explicitly reveals the gap between the KL divergence and the original first-order score matching objective. 
Computationally, as directly optimizing the equivalent form requires an ODE solver, which is time-consuming~\cite{chen2018neural}, we derive an upper bound of the KL divergence by controlling the approximation errors of the first, second, and third-order score matching. The optimal solution for this score model by minimizing the upper bound is still the data score function, so the learned score model can still be used for the sample methods in SGMs (such as PC samplers in~\cite{song2020score}) to generate high-quality samples.

Based on the analyses, we propose a novel high-order denoising score matching algorithm to train the score models, which theoretically guarantees bounded approximation errors of high-order score matching. We further propose some scale-up techniques for practice. Our experimental results empirically show that the proposed training method can improve the likelihood of \ode s on both synthetic data and CIFAR-10, while retaining the high generation quality.

\section{Score-Based Generative Models}
\label{sec:background}
We first review the dynamics and probability distributions defined by SGMs. The relationship between these dynamics is illustrated in Fig.~\ref{fig:relationship}(a). 
\subsection{\sde s and Maximum Likelihood Training}
Let $q_0(\x_0)$ denote an unknown $d$-dimensional data distribution. SGMs 
define an SDE from time $0$ to $T$ (called ``forward process") with $\x_0\sim q_0(\x_0)$ and
\begin{equation}
\label{eqn:forward_sde}
    \dxv_t=\fv(\x_t,t)\dt+g(t)\dwv_t,
\end{equation}
where $\x_t$ is the solution at time $t$, $\fv(\cdot,t):\R^d\rightarrow\R^d$ and $g(t)\in\R$ are fixed functions such that the marginal distribution of $\x_T$ is approximately $\N(\x_T|\vect{0},\sigma_T^2\Iv)$ with $\sigma_T>0$, and $\wv_t\in\R^d$ is the standard Wiener process. Let $q_t(\x_t)$ denote the marginal distribution of $\x_t$. Under some regularity conditions, the forward process in Eqn.~\eqref{eqn:forward_sde} has an equivalent reverse process~\cite{anderson1982reverse} from time $T$ to $0$ with $\x_T\sim q_T(\x_T)$:
\begin{equation}
\label{eqn:reverse_sde_q}
    \dxv_t=[\fv(\x_t,t)-g(t)^2\nabla_{\x}\log q_t(\x_t)]\dt+g(t)\dm \bar{\wv}_t,
\end{equation}
where $\bar{\wv}_t$ is a standard Wiener process in the reverse-time. The marginal distribution of $\x_t$ in the reverse process in Eqn.~\eqref{eqn:reverse_sde_q} is also $q_t(\x_t)$. The only unknown term in Eqn.~\eqref{eqn:reverse_sde_q} is the time-dependent score function $\nabla_{\x}\log q_t(\x_t)$. SGMs use a neural network $\s_\theta(\x_t,t)$ parameterized by $\theta$ to estimate the score function for all $\x_t\in\R^d$ and $t\in[0,T]$. The neural network is trained by matching $\s_\theta(\x_t,t)$ and $\nabla_{\x}\log q_t(\x_t)$ with
the \emph{score matching objective}:
\begin{equation*}
\begin{aligned}
    &\Jc_{\SM}(\theta;\lambda(\cdot))\\
    &\coloneqq  \frac{1}{2}\int_0^T\lambda(t)\E_{q_t(\x_t)}\Big[\|\s_\theta(\x_t,t)-\nabla_{\x}\log q_t(\x_t)\|_2^2\Big]\dt,
\end{aligned}
\end{equation*}
where $\lambda(\cdot)>0$ is a weighting function. By approximating the score function with $\s_\theta(\x_t,t)$, SGMs define a parameterized reverse SDE from time $T$ to time $0$ with probability at each time $t$ denoted as $p^{\SDE}_t(\x_t)$ (omitting the subscript $\theta$). In particular, suppose $\x_T\sim p^{\SDE}_T(\x_T)=\Nc(\x_T|\vect{0},\sigma^2_T\Iv)$, the parameterized reverse SDE is defined by
\begin{equation}
\label{eqn:reverse_sde_p}
    \dxv_t=[\fv(\x_t,t)-g(t)^2\s_\theta(\x_t,t)]\dt+g(t)\dm \bar{\wv}_t,
\end{equation}
where the marginal distribution of $\x_t$ is $p^{\SDE}_t(\x_t)$. By solving the parameterized reverse SDE, SGMs achieve excellent sample quality in many tasks~\cite{song2020score,dhariwal2021diffusion}. Moreover, the KL divergence between $q_0(\x_0)$ and $p_0^{\SDE}(\x_0)$ can be bounded by the score matching error with the weight $g(\cdot)^2$~\cite{song2021maximum}:
\begin{equation}
\label{eqn:sde_elbo}
    \kl{q_0}{p_0^{\SDE}}\leq \kl{q_T}{p_T^{\SDE}}+\Jc_{\SM}(\theta;g(\cdot)^2).
\end{equation}
Therefore, minimizing $\Jc_{\SM}(\theta;g(\cdot)^2)$ is equivalent to maximum likelihood training of $p^{\SDE}_0$. In the rest of this work, we mainly consider the maximum likelihood perspective, namely $\lambda(\cdot)=g(\cdot)^2$, and denote $\Jc_{\SM}(\theta)\coloneqq\Jc_{\SM}(\theta;g(\cdot)^2)$.

\subsection{\ode s and Exact Likelihood Evaluation}
The parameterized \sde~cannot be used for exact likelihood evaluation, because we cannot evaluate $p_0^{\SDE}(\x_0)$ exactly for a given point $\x_0\in\R^d$. However, every SDE has an associated \textit{probability flow ODE}~\cite{song2020score} whose marginal distribution at each time $t$ is the same as that of the SDE. Unlike SDEs, the log-density of the ODE can be exactly computed by the ``Instantaneous Change of Variables"~\cite{chen2018neural}. Particularly, the probability flow ODE of the forward process in Eqn.~\eqref{eqn:forward_sde} is
\begin{equation}
\label{eqn:forward_ode}
    \frac{\dxv_t}{\dt}=\hv_q(\x_t,t)\coloneqq\fv(\x_t,t)-\frac{1}{2}g(t)^2\nabla_{\x}\log q_t(\x_t).
\end{equation}
If we start from $\x_0\sim q_0(\x_0)$, the distribution of each $\x_t$ during the ODE trajectory is also $q_t(\x_t)$.
Note that the only unknown term in Eqn.~\eqref{eqn:forward_ode} is also the score function $\nabla_{\x}\log q_t(\x_t)$. By approximating the score function with $\s_\theta(\x_t,t)$, SGMs define a parameterized ODE called ``score-based diffusion ODE'' (\ode) with probability $p^{\ODE}_t(\x_t)$ (omitting the subscript $\theta$) at each time $t$. In particular, suppose $\x_T\sim p_T^{\ODE}(\x_T)=p_T^{\SDE}(\x_T)=\N(\x_T|\vect{0},\sigma_T^2\Iv)$, the {\ode} is defined by
\begin{equation}
\label{eqn:score_ode}
    \frac{\dxv_t}{\dt}=\hv_p(\x_t,t)\coloneqq\fv(\x_t,t)-\frac{1}{2}g(t)^2\s_\theta(\x_t,t),
\end{equation}
where the marginal distribution of $\x_t$ is $p_t^{\ODE}(\x_t)$. By the ``Instantaneous Change of Variables"~\cite{chen2018neural}, $\log p_t^{\ODE}(\x_t)$ of $\x_t$ can be computed by integrating:
\begin{equation}
\label{eqn:change_of_variable_score_ode}
    \frac{\dm \log p^{\ODE}_t(\x_t)}{\dt}=-\tr(\nabla_{\x}\hv_p(\x_t,t)).
\end{equation}
If $\s_\theta(\x_t,t)\equiv\nabla_{\x}\log q_t(\x_t,t)$ for all $\x_t\in\R^d$ and $t\in[0,T]$, we have $\hv_p(\x_t,t)=\hv_q(\x_t,t)$. In this case, for all $\x_0\in\R^d$, $\log p_0^{\ODE}(\x_0)-\log q_0(\x_0)=\log p_T^{\ODE}(\x_T)-\log q_T(\x_T)\approx 0$ (because $q_T\approx p^{\ODE}_T$), where $\x_T$ is the point at time $T$ given by Eqn.~\eqref{eqn:forward_ode} starting with $\x_0$ at time $0$. Inspired by this, SGMs use the {\ode} with the score model for exact likelihood evaluations.

However, if $\s_\theta(\x_t,t)\neq\nabla_{\x}\log q_t(\x_t,t)$, there is no theoretical guarantee for bounding $\kl{q_0}{p_0^{\ODE}}$ of {\ode}s by the score matching objective $\Jc_{\SM}(\theta)$. As a result, after minimizing $\Jc_{\SM}(\theta)$, \citet{song2021maximum} find that the {\sde} of the ``Variance Exploding'' type~\cite{song2020score} achieves state-of-the-art sample quality, but the log-likelihood of the corresponding {\ode} is much worse (e.g., $\approx3.45$ bits/dim on CIFAR-10 dataset) than that of other comparable models (e.g., $\approx3.13$ bits/dim of the "Variance Preserving" type~\cite{song2020score}). Such observations pose a serious concern on the behavior of the likelihood of $p^{\ODE}_0$, which is not well understood. We aim to provide a systematical analysis in this work.

\section{Relationship between Score Matching and KL Divergence of \ode s}
\label{sec:ode_likelihood}

\begin{figure*}[t]
\centering
	\begin{minipage}{0.53\linewidth}
\begin{tikzpicture}[node distance=1.0cm]
    \draw [decorate, decoration = {brace,
            raise=5pt,
            amplitude=5pt,
            aspect=0.5},thick] (0, -2) --  (0,0)
    node(q)[pos=0.5,left=10pt,black]{$q_t(\x_t)$};
    \node(fs)[smalltext, right of=q, yshift=1.0cm, anchor=west, xshift=-0.2cm] {\scriptsize Forward SDE (Eqn.\eqref{eqn:forward_sde})};
    \node(rs)[smalltext, right of=q, anchor=west, xshift=-0.2cm] {\scriptsize Reverse SDE (Eqn.\eqref{eqn:reverse_sde_q})};
    \node(qode)[smalltext, right of=q, yshift=-1.0cm, anchor=west, xshift=-0.2cm] {\scriptsize Probability flow ODE (Eqn.\eqref{eqn:forward_ode})};
    \node(psde)[right of=rs, anchor=west, xshift=3.5cm]{$p_t^{\SDE}(\x_t)$};
    \node(pode)[right of=qode, xshift=3.cm, anchor=west]{$p_t^{\ODE}(\x_t)$};
    \draw [arrow] (rs) -> (psde);
    \node(Def)[smalltext, below of=psde, xshift=-2.35cm, yshift=0.5cm] {\scriptsize approximate $\nabla_{\x}\log q_t$ by $\s_\theta$};
    \draw [arrow] (qode) -> (pode);
    \node(Def)[smalltext, above of=psde, xshift=-2.3cm, yshift=-0.7cm] {\scriptsize Eqn.~\eqref{eqn:reverse_sde_p}};
    \node(Def)[smalltext, below of=pode, xshift=-1.85cm, yshift=0.7cm] {\scriptsize Eqn.~\eqref{eqn:score_ode}};
    \node(pneq)[rotate=90, below of=psde, xshift=-0.5cm, yshift=1.0cm]{$\neq$};
    \node(Def)[smalltext, below of=psde, xshift=0.9cm, yshift=0.5cm] {\scriptsize (Appendix.~\ref{appendix:distribution_gap})};
\end{tikzpicture}\\
	\centering
    \small{(a) Relationship between $q_t$, $p_t^{\SDE}$ and $p_t^{\ODE}$.}
	\end{minipage}
		\begin{minipage}{0.45\linewidth}
		\centering
        \begin{tikzpicture}[node distance=1.0cm]
        \node(klsde) {$\kl{q_0}{p_0^{\SDE}}$};
        \node(klode)[right of=klsde, xshift=2.1cm] {$\kl{q_0}{p_0^{\ODE}}$};
        \node(sdeup)[below of=klsde, yshift=-0.4cm]{$\Jc_{\SM}(\theta)$};
        \draw [dasharrow] (klsde.south) -> (sdeup.north);
        \node(odeup)[below of=klode, yshift=-0.4cm]{$\sqrt{\Jc_{\SM}(\theta)}\cdot\sqrt{\Jc_{\FISH}(\theta)}$};
        \draw [arrow] (klode.south) -> (odeup.north);
        \node(sdesm)[below of=sdeup, yshift=-0.2cm]{\scriptsize first-order SM};
        \node(odesm1)[below of=odeup, xshift=-0.9cm, yshift=-0.2cm]{\scriptsize first-order SM};
        \node(odesm2)[below of=odeup, xshift=-0.1cm, anchor=west, yshift=-0.2cm]{\scriptsize second, third-order SM};
        \node(thrm)[below of=odesm2, yshift=0.6cm]{\scriptsize (Theorem~\ref{thrm:second}, \ref{coro:second_trace}, \ref{thrm:third})};
        \draw [dasharrow] (sdeup.south) -> (sdesm.north);
        \draw [dasharrow] (odesm1.north |- odeup.south) -> (odesm1.north);
        \draw [arrow] (odesm2.north |- odeup.south) -> (odesm2.north);
        \node(Def)[below of=klsde, xshift=-0.1cm, yshift=0.35cm, anchor=east]{\scriptsize \cite{song2021maximum}};
        \node(Def)[below of=klode, xshift=0.1cm, anchor=west, yshift=0.35cm]{\scriptsize (Theorem~\ref{thrm:kl-ode} \& Eqn.~\eqref{eqn:cauchy_ode})};
        \node(Def)[below of=klode, xshift=-1.55cm, yshift=0.35cm] {\scriptsize $\leftarrow$minimizing upper bound$\rightarrow$};
        \node(Def)[above of=odesm2, yshift=-0.4cm, xshift=0.8cm]{\scriptsize (Theorem~\ref{thrm:fisher_bound})};
        \end{tikzpicture}\\
        \small{(b) Relationship between SM objectives and KL divergence.}
		\end{minipage}
\centering
\caption{An illustration of the theoretical analysis of score-based generative models and score matching (SM) objectives.\label{fig:relationship}}
\vspace{-.1in}
\end{figure*}
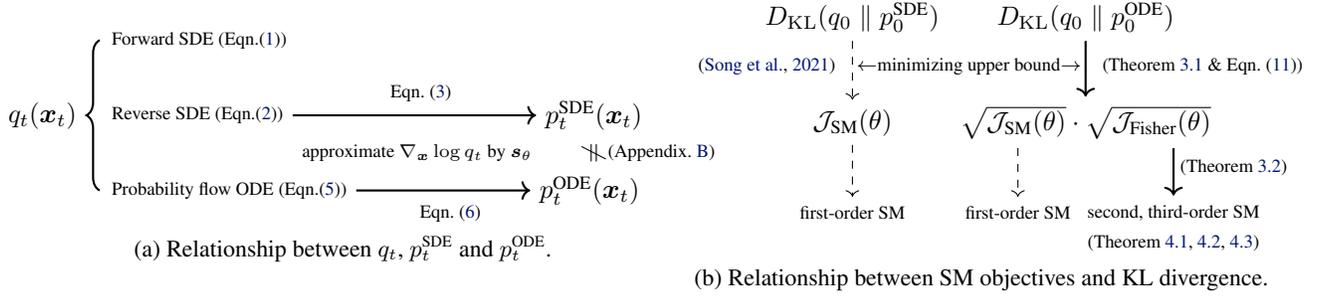

The likelihood evaluation needs the distribution $p_0^{\ODE}$ of {\ode}s, which is usually different from the distribution $p^{\SDE}_0$ of {\sde}s (see Appendix~\ref{appendix:distribution_gap} for detailed discussions). Although the score matching objective $\Jc_{\SM}(\theta)$ can upper bound the KL divergence of {\sde}s (up to constants) according to Eqn.~\eqref{eqn:sde_elbo}, it is not 
enough to upper bound the KL divergence of {\ode}s. To the best of our knowledge, there is no theoretical analysis of the relationship between the score matching objectives and the KL divergence of {\ode}s.

In this section, we reveal the relationship between the first-order score matching objective $\Jc_{\SM}(\theta)$ and the KL divergence $\kl{q_0}{p_0^{\ODE}}$ of {\ode}s, and upper bound $\kl{q_0}{p_0^{\ODE}}$ by involving high-order score matching. Fig.~\ref{fig:relationship} demonstrates our theoretical analysis and the connection with previous works.

\subsection{KL Divergence of \ode s}

In the following theorem, we propose a formulation of the KL divergence $\kl{q_0}{p_0^{\ODE}}$ by integration, which shows that there is a non-zero gap between the KL divergence and the score matching objective $\Jc_{\SM}(\theta)$.%

\begin{theorem}[Proof in Appendix~\ref{appendix:thrm_kl_ode}]
\label{thrm:kl-ode}
Let $q_0$ be the data distribution and $q_t$ be the marginal distribution at time $t$ through the forward diffusion process defined in Eqn.~\eqref{eqn:forward_sde}, and $p^{\ODE}_t$ be the marginal distribution at time $t$ through the {\ode} defined in Eqn.~\eqref{eqn:score_ode}. Under some regularity conditions in Appendix~\ref{appendix:assumptions}, we have
\begin{equation*}
\begin{medsize}
\begin{aligned}
    \nonumber
    \kl{q_0}{p^{\ODE}_0}\!&=\!\kl{q_T}{p^{\ODE}_T}+\Jc_{\ODE}(\theta)\\
    &=\!\!\!\!\!\!\underbrace{\kl{q_T}{p^{\ODE}_T}+\Jc_{\SM}(\theta)}_{\text{upper bound of $\kl{q_0}{p_0^{\SDE}}$ in Eqn.~\eqref{eqn:sde_elbo}}}\!\!\!+\ \Jc_{\DIFF}(\theta),
\end{aligned}
\end{medsize}
\end{equation*}
where
\begin{small}
\begin{align}
    \nonumber
    \Jc_{\ODE}(\theta)\!&\coloneqq\!\frac{1}{2}\!\int_0^T \!\!\!g(t)^2\E_{q_t(\x_t)}\!\Big[(\s_\theta(\x_t,t)\!-\!\nabla_{\x}\log q_t(\x_t))^\top\\ 
    &(\nabla_{\x}\log p_t^{\ODE}(\x_t)\!-\!\nabla_{\x}\log q_t(\x_t))\Big]\dt,
    \label{eqn:kl_ode}\\
    \nonumber
    \Jc_{\DIFF}(\theta)\!&\coloneqq\!\frac{1}{2}\int_0^T \!\!\!g(t)^2\E_{q_t(\x_t)}\!\Big[(\s_\theta(\x_t,t)\!-\!\nabla_{\x}\log q_t(\x_t))^\top\\
    &(\nabla_{\x}\log p_t^{\ODE}(\x_t)\!-\!\s_\theta(\x_t,t))\Big]\dt.
    \label{eqn:J_diff}
\end{align}
\end{small}%
\end{theorem}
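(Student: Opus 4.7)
The plan is to compute the time derivative of $\kl{q_t}{p^{\ODE}_t}$ and integrate from $0$ to $T$. First I would write down the continuity equations satisfied by both densities: $q_t$ evolves under the forward SDE~\eqref{eqn:forward_sde}, and its Fokker--Planck equation can be rearranged as $\partial_t q_t = -\nabla\cdot(\hv_q q_t)$ with $\hv_q$ defined in Eqn.~\eqref{eqn:forward_ode}; meanwhile $p_t^{\ODE}$ satisfies the transport equation $\partial_t p_t^{\ODE} = -\nabla\cdot(\hv_p p_t^{\ODE})$ with $\hv_p$ as in Eqn.~\eqref{eqn:score_ode}. This puts the two densities on equal footing as solutions to continuity equations, differing only through the drifts $\hv_q$ versus $\hv_p$.

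Next I would differentiate under the integral and apply integration by parts. Writing $\frac{\dm}{\dt}\kl{q_t}{p_t^{\ODE}} = \int(\partial_t q_t)\log(q_t/p_t^{\ODE})\,\dxv - \int (q_t/p_t^{\ODE})\,\partial_t p_t^{\ODE}\,\dxv$ (the $\int \partial_t q_t\,\dxv$ term vanishes), substitute the continuity equations, and integrate by parts in both terms. Assuming boundary terms vanish at infinity (this is where the regularity conditions in Appendix~\ref{appendix:assumptions} enter), both integrations by parts collapse into $\int q_t(\hv_q-\hv_p)^\top \nabla\log(q_t/p_t^{\ODE})\,\dxv$. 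Since $\hv_q-\hv_p = \tfrac12 g(t)^2(\s_\theta-\nabla_{\x}\log q_t)$, this yields
\begin{equation*}
\frac{\dm}{\dt}\kl{q_t}{p_t^{\ODE}} = -\tfrac12 g(t)^2\E_{q_t}\bigl[(\s_\theta-\nabla_{\x}\log q_t)^\top(\nabla_{\x}\log p_t^{\ODE}-\nabla_{\x}\log q_t)\bigr].
\end{equation*}
Integrating this identity from $0$ to $T$ and rearranging gives $\kl{q_0}{p_0^{\ODE}} = \kl{q_T}{p_T^{\ODE}} + \Jc_{\ODE}(\theta)$, which is the first equality.

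For the second equality, I would simply split the inner factor by adding and subtracting $\s_\theta$: writing $\nabla_{\x}\log p_t^{\ODE}-\nabla_{\x}\log q_t = (\s_\theta-\nabla_{\x}\log q_t) + (\nabla_{\x}\log p_t^{\ODE}-\s_\theta)$ turns the integrand of $\Jc_{\ODE}$ into $\|\s_\theta-\nabla_{\x}\log q_t\|_2^2$ plus the $\Jc_{\DIFF}$ integrand, and the $\tfrac12\int_0^T g(t)^2\E_{q_t}[\cdot]\,\dt$ wrapper identifies the two pieces as $\Jc_{\SM}(\theta)$ and $\Jc_{\DIFF}(\theta)$ exactly as defined.

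The main technical obstacle is justifying the integration by parts: one needs the vector fields $\hv_q q_t$ and $\hv_p p_t^{\ODE}$ together with $q_t$ and $\log(q_t/p_t^{\ODE})$ to decay sufficiently fast so that boundary terms vanish, and one also needs enough regularity (finiteness of the relevant Fisher-information-like integrals, smoothness of $\s_\theta$, integrability of $\partial_t q_t$, etc.) to swap $\frac{\dm}{\dt}$ with $\int\dxv$. These are exactly the conditions to be collected in Appendix~\ref{appendix:assumptions}; once they are in place, the rest of the argument is an essentially mechanical manipulation of the continuity equations.
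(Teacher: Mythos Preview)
Your proposal is correct and follows essentially the same route as the paper's proof: both write $q_t$ and $p_t^{\ODE}$ as solutions of continuity equations with drifts $\hv_q$ and $\hv_p$, differentiate $\kl{q_t}{p_t^{\ODE}}$ in time, integrate by parts (invoking the decay assumptions to kill boundary terms), and then integrate in $t$ to obtain $\Jc_{\ODE}(\theta)$; the split into $\Jc_{\SM}+\Jc_{\DIFF}$ is the same add-and-subtract of $\s_\theta$ in both.
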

As $p^{\ODE}_T$ is fixed, minimizing $\kl{q_0}{p^{\ODE}_0}$ is equivalent to minimizing $\Jc_{\ODE}(\theta)$, which includes $\Jc_{\SM}(\theta)$ and $\Jc_{\DIFF}(\theta)$. While minimizing $\Jc_{\SM}(\theta)$ reduces the difference between $\s_\theta(\cdot,t)$ and the data score $\nabla_{\x}\log q_t$, it cannot control the error of $\Jc_{\DIFF}(\theta)$, which includes the difference between $\s_\theta(\cdot,t)$ and the model score $\nabla_{\x}\log p_t^{\ODE}$. As we show in Fig.~\ref{fig:mog_intro}, only minimizing $\Jc_{\SM}(\theta)$ may cause problematic model density of the {\ode}.

\subsection{Bounding the KL Divergence of \ode s by High-Order Score Matching Errors}
\label{sec:kl_bounding}
The KL divergence in Theorem~\ref{thrm:kl-ode} includes the model score function $\nabla_{\x}\log p^{\ODE}_t(\x_t)$, which needs ODE solvers to compute (see Appendix.~\ref{appendix:change_of_score} for details) and thus is hard to scale up. Below we derive an upper bound of $\kl{q_0}{p_0^{\ODE}}$ by involving high-order score matching, which can avoid the computation of $\nabla_{\x}\log p^{\ODE}_t(\x_t)$. Specifically, we show that by bounding the first, second and third-order score matching errors for $\s_\theta, \nabla_{\x}\s_\theta$ and $\nabla_{\x}\tr(\nabla_{\x}\s_\theta)$ with the corresponding order score functions\footnote{In this work, we refer to $\nabla_{\x}^2\log q_t(\x_t)$ as the second-order score function of $q_t$, and $\nabla_{\x}\tr(\nabla_{\x}^2\log q_t(\x_t))$ as the third-order score function of $q_t$.} of $q_t$, we can further upper bound $\kl{q_0}{p_0^{\ODE}}$.

Let $\fisher{q}{p}\coloneqq \E_{q(\x)}\|\nabla_{\x}\log p(\x)-\nabla_{\x}\log q(\x)\|_2^2$ denote the Fisher divergence between two distributions $q$ and $p$. Define
\begin{equation}
    \Jc_{\FISH}(\theta)\coloneqq \frac{1}{2}\int_0^T g(t)^2\fisher{q_t}{p^{\ODE}_t}\dt.
\end{equation}
By straightforwardly using Cauchy-Schwarz inequality for Eqn.~\eqref{eqn:kl_ode}, we have
\begin{equation}
\label{eqn:cauchy_ode}
    \Jc_{\ODE}(\theta)\leq \sqrt{\Jc_{\SM}(\theta)}\cdot\sqrt{\Jc_{\FISH}(\theta)}.
\end{equation}%
The upper bound shows that we can minimize the KL divergence $\kl{q_0}{p_0^{\ODE}}$ of {\ode} by minimizing $\Jc_{\SM}(\theta)$ and $\Jc_{\FISH}(\theta)$ simultaneously. %
When $\s_\theta(\x_t,t)\equiv\nabla_{\x}\log q_t(\x_t)$ for all $\x_t\in\R^d$ and $t\in[0,T]$, the equality holds and $\Jc_{\ODE}(\theta)=\Jc_{\SM}(\theta)=0$.

We can regard $\Jc_{\FISH}(\theta)$ as a weighted Fisher divergence between $q_t$ and $p_t^{\ODE}$ during $t\in[0,T]$. Directly computing the Fisher divergence also needs ODE solvers (see Appendix.\ref{appendix:change_of_score} for details) and is hard to scale up. To address this problem, we present the following theorem for bounding $\fisher{q_t}{p^{\ODE}_t}$, which introduces high-order score matching and avoids the computation costs of ODE solvers.

\begin{theorem}[Proof in Appendix~\ref{appendix:thrm_fisher_bound}]
\label{thrm:fisher_bound}
Assume that there exists $C>0$, such that for all $t\in [0,T]$ and $\x_t\in\R^d$, $\|\nabla_{\x}^2\log p_t^{\ODE}(\x_t)\|_2\leq C$ (where $\nabla^2$ denotes the second-order derivative (Hessian), and $\|\cdot\|_2$ is the spectral norm of a matrix). And assume that there exist $\delta_1,\delta_2,\delta_3>0$, such that for all $\x_t\in\R^d$, $t\in[0,T]$, $s_\theta(\x_t,t)$ satisfies

\begin{small}
\begin{equation*}
\begin{aligned}
    &\|\s_\theta(\x_t,t)-\nabla_{\x}\log q_t(\x_t)\|_2\leq\delta_1,\\
    &\|\nabla_{\x}\s_\theta(\x_t,t)-\nabla^2_{\x}\log q_t(\x_t)\|_F\leq\delta_2,\\
    &\|\nabla_{\x}\tr(\nabla_{\x}\s_\theta(\x_t,t))-\nabla_{\x}\tr(\nabla_{\x}^2\log q_t(\x_t))\|_2\leq\delta_3,
\end{aligned}
\end{equation*}
\end{small}%
where $\|\cdot\|_F$ is the Frobenius norm of matrices. Then there exists $U(t;\delta_1,\delta_2,\delta_3,C,q)\geq 0$ which depends on the forward process $q$ while is independent of $\theta$, such that $\fisher{q_t}{p^{\ODE}_t}\leq U(t;\delta_1,\delta_2,\delta_3,C,q)$. Moreover, for all $t\in[0,T]$, if $g(t)\neq 0$, then $U(t;\delta_1,\delta_2,\delta_3,C,q)$ is a strictly increasing function of $\delta_1$, $\delta_2$ and $\delta_3$, respectively.%

\end{theorem}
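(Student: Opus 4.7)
The plan is to track how the Fisher divergence
\begin{equation*}
F(t) \coloneqq \fisher{q_t}{p_t^{\ODE}} = \E_{q_t(\xv)}\|\sv_p(\xv,t)-\sv_q(\xv,t)\|_2^2,
\end{equation*}
with $\sv_q\coloneqq\nabla_{\xv}\log q_t$ and $\sv_p\coloneqq\nabla_{\xv}\log p_t^{\ODE}$, evolves in $t$, and to integrate backward from $t=T$, where $p_T^{\ODE}=\Nc(\vect 0,\sigma_T^2\Iv)$ by definition so that $F(T)$ is a purely $q$-dependent quantity. The first step is to rewrite the continuity equation $\partial_t p_t^{\ODE}+\nabla\!\cdot\!(p_t^{\ODE}\hv_p)=0$ in a Fokker--Planck form parallel to the one obeyed by $q_t$,
\begin{equation*}
\partial_t p_t^{\ODE} = -\nabla\!\cdot\!(p_t^{\ODE}\fv) + \tfrac{1}{2}g(t)^2\,\Delta p_t^{\ODE} + \tfrac{1}{2}g(t)^2\,\nabla\!\cdot\!\bigl(p_t^{\ODE}(\sv_\theta-\sv_p)\bigr),
\end{equation*}
so that $p_t^{\ODE}$ satisfies the same Fokker--Planck equation as $q_t$ modulo a drift correction $\tfrac{1}{2}g(t)^2(\sv_\theta-\sv_p)$. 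Taking $\log$ and then $\nabla_{\xv}$ on both sides yields evolution equations of the form $\partial_t\sv=-\nabla(\nabla\!\cdot\!\vv)-(\nabla\vv)^\top\sv-(\nabla\sv)\vv$, with $\vv=\hv_q$ for $\sv_q$ and $\vv=\hv_p$ for $\sv_p$.

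I would then differentiate $F(t)$, expanding $\partial_t q_t$ with Fokker--Planck and $\partial_t(\sv_p-\sv_q)$ with the evolution equations above, and integrate by parts (boundary terms vanish under the regularity conditions of Appendix~\ref{appendix:assumptions}). The three mismatches between $\hv_p$ and $\hv_q$ decompose cleanly:
\begin{equation*}
\hv_p-\hv_q = -\tfrac{1}{2}g(t)^2(\sv_\theta-\sv_q),\qquad \nabla\hv_p-\nabla\hv_q = -\tfrac{1}{2}g(t)^2(\nabla\sv_\theta-\nabla\sv_q),
\end{equation*}
\begin{equation*}
\nabla(\nabla\!\cdot\!\hv_p)-\nabla(\nabla\!\cdot\!\hv_q) = -\tfrac{1}{2}g(t)^2\bigl(\nabla\tr(\nabla\sv_\theta)-\nabla\tr(\nabla\sv_q)\bigr),
\end{equation*}
so each of the three $\delta_i$ hypotheses pointwise controls exactly one of these differences (in $\ell_2$, Frobenius, and $\ell_2$ norms respectively). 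Any leftover Jacobian $\nabla\sv_p$ that appears is absorbed by the spectral-norm bound $C$, while intrinsic factors such as $\|\sv_q\|_2$, $\|\nabla\sv_q\|_F$, $\|\fv\|_2$, and $|g(t)|$ are folded into a $q$-dependent prefactor.

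Cauchy--Schwarz and the elementary inequality $2ab\le a^2+b^2$ applied inside the integral then collapse everything into a differential inequality
\begin{equation*}
F'(t) \ge -A(t;C,q)\,F(t) - B(t;\delta_1,\delta_2,\delta_3,C,q),
\end{equation*}
where $B$ is a polynomial in $\delta_1,\delta_2,\delta_3$ with strictly positive, $\theta$-independent coefficients each carrying a $g(t)^2$ factor, and $A$ contains no $\delta_i$. A backward Gr\"onwall step from $T$ to $t$ then gives
\begin{equation*}
F(t) \le F(T)\,e^{\int_t^T A(s)\dm s} + \int_t^T B(s;\delta_1,\delta_2,\delta_3,C,q)\,e^{\int_t^s A(u)\dm u}\,\dm s,
\end{equation*}
and I would take this right-hand side as the definition of $U(t;\delta_1,\delta_2,\delta_3,C,q)$. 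Since $B$ depends on each $\delta_i$ with a strictly positive coefficient proportional to $g(\cdot)^2$, and the integrand in the Gr\"onwall bound is nonnegative, $U$ is strictly increasing in each $\delta_i$ whenever $g(t)\neq 0$, which yields the monotonicity conclusion.

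The hard part is the bookkeeping inside the derivation of the differential inequality: the evolution equation for $\sv_p-\sv_q$ contains Jacobian-type terms like $(\nabla\sv_p)\hv_p-(\nabla\sv_q)\hv_q$ in which $\nabla\sv_p$ is not directly tied to any $\delta_i$, so one must add-and-subtract intermediates ($\sv_\theta$, $\nabla\sv_\theta$, or $\nabla\tr(\nabla\sv_\theta)$) in each term to split it into a $\delta_i$-controlled piece and a piece controlled purely by $C$ and $q$-intrinsic constants. A secondary subtlety is the third-order term $\nabla\tr(\nabla\sv_\theta)-\nabla\tr(\nabla\sv_q)$, which arises from $\nabla(\nabla\!\cdot\!\hv_p)$ in the evolution of $\sv_p$; bounding it in $L^2(q_t)$ is exactly where the $\delta_3$ hypothesis becomes indispensable and explains why first- and second-order score matching alone cannot close the loop.
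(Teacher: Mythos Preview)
Your approach is essentially correct and lands on the same Gr\"onwall mechanism as the paper, but the paper organizes the computation differently. Instead of differentiating the \emph{expected} Fisher divergence $F(t)=\E_{q_t}\|\sv_p-\sv_q\|_2^2$ in the Eulerian (fixed-$\xv$) frame, the paper works \emph{pointwise along the $q$-characteristics}: it first proves a lemma giving the total derivative of $\nabla\log p_t^{\ODE}(\xv_t)-\nabla\log q_t(\xv_t)$ along $\dot\xv_t=\hv_q(\xv_t,t)$, then bounds the scalar $u(t)\coloneqq\|\sv_p(\xv_t,t)-\sv_q(\xv_t,t)\|_2$ by an integral inequality $u(t)\le\alpha(t)+\int_t^T\beta(s)u(s)\dm s$, applies Gr\"onwall to $u$, and only at the end takes $\E_{q_t}[u(t)^2]$. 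Your Eulerian computation is equivalent once the convective term from $\partial_t q_t$ (after integration by parts) is combined with the $-(\nabla\sv_p)\hv_p+(\nabla\sv_q)\hv_q$ pieces of $\partial_t\rv$: the sum collapses to exactly $-(\nabla\sv_p)(\hv_p-\hv_q)$, which is the paper's Lagrangian total-derivative form and is where the spectral bound $C$ enters. The pointwise route has the mild advantage that it avoids Cauchy--Schwarz inside the expectation and yields a trajectory-wise bound on $\|\sv_p-\sv_q\|$ before averaging; your route gives the Fisher bound more directly.

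One small correction: your claim that ``$A$ contains no $\delta_i$'' is not quite right. The cross-term $(\sv_p-\sv_q)^\top(\nabla\hv_p-\nabla\hv_q)^\top(\sv_p-\sv_q)$ is bounded by $\tfrac12 g(t)^2\delta_2\,\|\sv_p-\sv_q\|^2$, so $\delta_2$ necessarily appears in the coefficient multiplying $F(t)$ (equivalently, in the paper's $\beta(s)=\tfrac{\delta_2}{2}g(s)^2+\|\nabla\hv_q(s)\|$, which sits inside the Gr\"onwall exponential). This does not break the monotonicity conclusion---$U$ is still strictly increasing in $\delta_2$ since $\delta_2$ enters both the exponential and the inhomogeneous term with the correct sign---but your differential inequality should read $F'(t)\ge -A(t;\delta_2,C,q)F(t)-B(t;\delta_1,\delta_2,\delta_3,C,q)$.
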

Theorem~\ref{thrm:fisher_bound} shows that by bounding the errors between $\s_\theta$, $\nabla_{\x}\s_\theta$ and $\nabla_{\x}\tr(\nabla_{\x}\s_\theta)$ with the first, second and third-order score functions of $q_t$, we can upper bound $\fisher{q_t}{p_t^{\ODE}}$ and then upper bound $\Jc_{\FISH}(\theta)$. As $\Jc_{\SM}(\theta)$ can also be upper bounded by the first-order score matching error of $\s_\theta$, we can further upper bound $\kl{q_0}{p_0^{\ODE}}$ by the first, second and third-order score matching errors.

Note that the high-order score matching for $\s_\theta$ is compatible for the traditional SGMs, because the optimal solution for $\s_\theta(\x_t,t)$ in the non-parametric limit is still $\nabla_{\x}\log q_t(\x_t)$. So after minimizing $\kl{q_0}{p_0^{\ODE}}$ by high-order score matching, we can still use $\s_\theta$ for the sample methods in SGMs, while directly minimizing $\kl{q_0}{p_0^{\ODE}}$ by exact likelihood ODE solvers cannot ensure that (see Appendix.~\ref{appendix:mle_scoreode_by_odesolver} for detailed reasons).
\section{Error-Bounded High-Order Denoising Score Matching (DSM)}
\label{sec:high_order_SM}

The analysis in Sec.~\ref{sec:kl_bounding} shows that we can upper bound the KL divergence $\kl{q_0}{p_0^{\ODE}}$ of the {\ode} by controlling the first, second and third-order score matching errors of the score model $\s_\theta$ (and its corresponding derivatives) at each time $t\in[0,T]$. Inspired by this, we generalize the traditional denoising score matching (DSM)~\cite{vincent2011connection} to second and third orders to minimize the high-order score matching errors. In this section, we propose a novel high-order DSM method, such that the higher-order score matching error can be exactly bounded by the training error and the lower-order score matching error. All the proofs in this section are in Appendix.~\ref{appendix:proof_high_order_DSM}.

Without loss of generality, in this section, we focus on a fixed time $t\in(0,T]$, and propose the error-bounded high-order DSM method for matching the forward process distribution $q_t$ at the fixed time $t$. The whole training objective for the entire process by $t\in[0,T]$ is detailed in Sec.~\ref{sec:training}. Moreover, for SGMs, there commonly exist $\alpha_t\in\R$ and $\sigma_t\in\R_{>0}$, such that the forward process satisfies $q(\x_t|\x_0)=\N(\x_t|\alpha_t\x_0,\sigma_t^2\Iv)$ for $\x_0,\x_t\sim q_{t0}(\x_t,\x_0)$~\cite{song2020score}. Therefore, we always assume $q(\x_t|\x_0)=\N(\x_t|\alpha_t\x_0,\sigma_t^2\Iv)$  in the rest of this work.

\subsection{First-Order Denoising Score Matching}
We present some preliminaries on first-order denoising score matching~\cite{vincent2011connection,song2020score} in this section.

The first-order score matching objective $\Jc_{\SM}(\theta)$ needs the data score functions $\nabla_{\x}\log q_t(\x_t)$ for all $t\in[0,T]$, which are unknown in practice. To address this issue, SGMs leverage the denoising score matching method~\cite{vincent2011connection} to train the score models. For a fixed time $t$, one can learn a first-order score model $\s_1(\cdot,t;\theta):\R^d\rightarrow\R^d$ parameterized by $\theta$ which minimizes
\begin{equation*}
    \E_{q_t(\x_t)}\Big[\big\|\s_1(\x_t,t;\theta)-\nabla_{\x}\log q_t(\x_t)\big\|_2^2\Big],
\end{equation*}
by optimizing the (first-order) DSM objective:
\begin{equation}
\label{eqn:dsm-1-obj}
    \theta^*=\argmin_{\theta}\E_{\x_0,\epsilonv}\left[\frac{1}{\sigma_t^2}\big\|\sigma_t\s_1(\x_t,t;\theta)+\epsilonv\big\|_2^2\right],
\end{equation}
where $\epsilonv\sim\N(\vect{0},\Iv)$ and $\x_t=\alpha_t\x_0+\sigma_t\epsilonv$.

\subsection{High-Order Denoising Score Matching}
In this section, we generalize the first-order DSM to second and third orders for matching the second-order and third-order score functions defined in \cref{thrm:fisher_bound}.
We firstly present the second-order method below.
\begin{theorem}
\label{thrm:second}
(Error-Bounded Second-Order DSM) Suppose that $\hat\s_1(\x_t,t)$ is an estimation for the first-order data score function $\nabla_{\x}\log q_t(\x_t)$, then we can learn a second-order score model $\s_2(\cdot,t;\theta): \R^{d}\rightarrow\R^{d\times d}$ parameterized by $\theta$ which minimizes
\begin{equation*}
    \E_{q_t(\x_t)}\left[\left\|\s_2(\x_t,t;\theta)-\nabla^2_{\x}\log q_t(\x_t)\right\|_F^2\right],
\end{equation*}
by optimizing
{
\small
\begin{equation}
\label{eqn:dsm-2-obj}
\!\!\!\theta^*=\argmin_{\theta}\E_{\x_0,\epsilonv}\left[\frac{1}{\sigma_t^4}\big\|\sigma_t^2\s_2(\x_t,t;\theta)+\Iv-\ellv_1\ellv_1^\top\big\|^2_F\right],
\end{equation}
}%
where
\begin{equation}
\label{eqn:ell_1}
\begin{aligned}
    &\ellv_1(\epsilonv,\x_0,t)\coloneqq\sigma_t\hat\s_1(\x_t,t)+\epsilonv,\\
    &\x_t=\alpha_t\x_0+\sigma_t\epsilonv,\quad\epsilonv\sim \Nc(\vect{0},\Iv).
\end{aligned}
\end{equation}
Moreover, denote the first-order score matching error as $\delta_1(\x_t,t)\coloneqq\|\hat\s_1(\x_t,t)-\nabla_{\x}\log q_t(\x_t)\|_2$, then $\forall \x_t,  \theta$, the score matching error for $\s_2(\x_t,t;\theta)$ can be bounded by
\begin{equation*}
\begin{split}
    &\left\|\s_2(\x_t,t;\theta)-\nabla_{\x}^2\log q_t(\x_t)\right\|_F\\
    \leq\ &\left\|\s_2(\x_t,t,\theta)-\s_2(\x_t,t;\theta^*)\right\|_F+\delta_1^2(\x_t,t).
\end{split}
\end{equation*}
\end{theorem}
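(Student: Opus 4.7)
The plan is to exploit two ingredients: (i) the standard fact that for a squared-loss objective, the pointwise minimizer is the conditional expectation, and (ii) a Tweedie-type identity that converts the second-order score of $q_t$ into a (centered) second moment of the noise $\epsilonv$ conditional on $\x_t$. I would first rewrite the expectation in Eqn.~\eqref{eqn:dsm-2-obj} as $\E_{q_t(\x_t)}\E_{q(\x_0|\x_t)}[\cdot]$ (using that $\epsilonv$ is a deterministic function of $(\x_0,\x_t)$). Then, for each fixed $\x_t$, the inner objective is a quadratic in $\sv_2(\x_t,t;\theta)$ whose nonparametric minimizer is
\[
    \sv_2^*(\x_t,t) \;=\; \frac{1}{\sigma_t^2}\Big(\E_{q(\x_0|\x_t)}\!\big[\ellv_1\ellv_1^\top\big] - \Iv\Big).
\]

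Next I would derive the required Tweedie-style identity for the Hessian of $\log q_t$. Using $q(\x_t|\x_0) = \Nc(\x_t|\alpha_t\x_0,\sigma_t^2\Iv)$, we have $\nabla_{\x}\log q(\x_t|\x_0) = -\epsilonv/\sigma_t$ and $\nabla^2_{\x}\log q(\x_t|\x_0) = -\Iv/\sigma_t^2$. Combined with the marginalization identity
\[
    \nabla^2_{\x}\log q_t + (\nabla_{\x}\log q_t)(\nabla_{\x}\log q_t)^\top = \E_{q(\x_0|\x_t)}\!\big[\nabla^2_{\x}\log q(\x_t|\x_0) + \nabla_{\x}\log q(\x_t|\x_0)\nabla_{\x}\log q(\x_t|\x_0)^\top\big],
\]
and the first-order Tweedie formula $\sigma_t\nabla_{\x}\log q_t(\x_t) = -\E_{q(\x_0|\x_t)}[\epsilonv]$, a short calculation yields the clean identity $\sigma_t^2\nabla^2_{\x}\log q_t(\x_t) + \Iv = \Cov_{q(\x_0|\x_t)}[\epsilonv]$.

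Plugging in $\ellv_1 = \sigma_t\hat\sv_1 + \epsilonv$ and writing $\hat\sv_1 = \nabla_{\x}\log q_t + \vect{e}$ where $\|\vect{e}(\x_t,t)\|_2\le \delta_1(\x_t,t)$, the first-order Tweedie formula gives $\ellv_1 = \sigma_t\vect{e} + (\epsilonv - \E[\epsilonv|\x_t])$. Because $\vect{e}(\x_t,t)$ is $\x_t$-measurable and the recentered noise has zero conditional mean, taking the conditional outer product yields
\[
    \E_{q(\x_0|\x_t)}[\ellv_1\ellv_1^\top] \;=\; \sigma_t^2\,\vect{e}\vect{e}^\top + \Cov_{q(\x_0|\x_t)}[\epsilonv] \;=\; \sigma_t^2\,\vect{e}\vect{e}^\top + \sigma_t^2\nabla^2_{\x}\log q_t(\x_t) + \Iv.
\]
Dividing by $\sigma_t^2$ identifies the pointwise minimizer as $\sv_2^*(\x_t,t) = \nabla^2_{\x}\log q_t(\x_t) + \vect{e}(\x_t,t)\vect{e}(\x_t,t)^\top$, which (when $\hat\sv_1$ is exact, i.e.\ $\vect{e}\equiv\vect{0}$) recovers the target and settles the first half of the statement.

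For the error bound, I would use $\|\vect{e}\vect{e}^\top\|_F = \|\vect{e}\|_2^2 \le \delta_1^2(\x_t,t)$ on the rank-one residual, and conclude by the triangle inequality
\[
    \|\sv_2(\x_t,t;\theta)-\nabla^2_{\x}\log q_t(\x_t)\|_F \;\le\; \|\sv_2(\x_t,t;\theta)-\sv_2(\x_t,t;\theta^*)\|_F + \|\sv_2^* - \nabla^2_{\x}\log q_t\|_F,
\]
with the second term bounded by $\delta_1^2(\x_t,t)$. The only nontrivial step is recognizing the correct Tweedie-style identity at second order and checking that all cross terms involving $\epsilonv-\E[\epsilonv|\x_t]$ vanish in conditional expectation; once that is in place, everything else is algebra and the triangle inequality. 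The mild subtlety to be careful about is the (standard) nonparametric minimizer argument, which assumes $\sv_2$ is a sufficiently rich function class so that the pointwise optimum is attainable; this is the same caveat used for first-order DSM.
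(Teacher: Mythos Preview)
Your proposal is correct and follows essentially the same route as the paper's proof: identify the pointwise minimizer of the inner squared loss as a conditional expectation, use the second-order Tweedie/marginalization identity (the paper's Lemma~\ref{lemma:second} and Corollary~\ref{coro:second_conditional_score}) to show $\s_2(\x_t,t;\theta^*)-\nabla_{\x}^2\log q_t(\x_t)=(\hat\s_1-\nabla_{\x}\log q_t)(\hat\s_1-\nabla_{\x}\log q_t)^\top$, and conclude via the triangle inequality with $\|\vect{e}\vect{e}^\top\|_F=\|\vect{e}\|_2^2$. Your phrasing through the conditional covariance $\sigma_t^2\nabla_{\x}^2\log q_t+\Iv=\Cov_{q(\x_0|\x_t)}[\epsilonv]$ is a clean equivalent reformulation of the same computation.
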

Theorem~\ref{thrm:second} shows that the DSM objective in problem~(\ref{eqn:dsm-2-obj}) is a valid surrogate for second-order score matching,
because the difference between the model score $\s_2(\x_t,t;\theta)$ and the true second-order score $\nabla^2_{\x}\log q_t(\x_t)$ can be bounded by the training error $\|\s_2(\x_t,t;\theta)-\s_2(\x_t,t;\theta^*)\|_F$ and the first-order score matching error $\delta_1(\x_t,t)$. Note that previous second-order DSM method proposed in~\citet{meng2021estimating} does not have such error-bounded property, and we refer to Sec.~\ref{sec:related} and Appendix~\ref{appendix:differ_high_order} for the detailed comparison. In addition, recent work~\citep[Theorem 3.3]{bao2022estimating} about the optimal covariance of the diffusion models considers the estimation error of the mean of the diffusion models, which is equivalent to our proposed error-bounded second-order DSM with considering the first-order score matching error.

The DSM objective in problem~(\ref{eqn:dsm-2-obj}) requires learning a matrix-valued function $\nabla^2_{\x}\log q_t(\x_t)$, but sometimes we only need the trace $\tr(\nabla^2_{\x}\log q_t(\x_t))$ of the second-order score function. 
Below we present a corollary for only matching the trace of the second-order score function.
\begin{corollary}
\label{coro:second_trace}
(Error-Bounded Trace of Second-Order DSM) We can learn a second-order trace score model $\s_2^{\text{trace}}(\cdot,t;\theta):\R^{d}\rightarrow\R$ which minimizes
\begin{equation*}
    \E_{q_t(\x_t)}\left[\left|\s_2^{\text{trace}}(\x_t,t;\theta)-\tr(\nabla_{\x}^2\log q_t(\x_t))\right|^2\right],
\end{equation*}
by optimizing
{\small
\begin{equation}
\label{eqn:dsm-2-trace-obj}
    \theta^*\!=\argmin_{\theta}\E_{\x_0,\epsilonv}\!\left[\frac{1}{\sigma_t^4}\Big|\sigma_t^2\s_2^{\text{trace}}(\x_t,t;\theta)\!+\!d\!-\!\|\ellv_1\|_2^2\Big|^2\right].
\end{equation}
}%
The estimation error for $\s_2^{\text{trace}}(\x_t,t;\theta)$ can be bounded by:
\begin{equation*}
 \begin{split}
&\left|\s_2^{\text{trace}}(\x_t,t;\theta)-\tr(\nabla^2_{\x}\log q_t(\x_t))\right|\\
\leq\ &\left|\s_2^{\text{trace}}(\x_t,t;\theta)-\s_2^{\text{trace}}(\x_t,t;\theta^*)\right|+\delta_1^2(\x_t,t).
 \end{split} 
\end{equation*}
\end{corollary}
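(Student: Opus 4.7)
The plan is to specialize the argument of Theorem~\ref{thrm:second} from the matrix-valued setting to its trace. Since the DSM objective in Eqn.~\eqref{eqn:dsm-2-trace-obj} is a pointwise quadratic in $\s_2^{\text{trace}}(\x_t,t;\theta)$, first I would characterize the unique pointwise minimizer by conditioning on $\x_t$ and differentiating:
\begin{equation*}
\s_2^{\text{trace}}(\x_t,t;\theta^*)=\frac{1}{\sigma_t^2}\,\E_{q(\x_0|\x_t)}\!\left[\|\ellv_1\|_2^2-d\right].
\end{equation*}

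The bulk of the work is then evaluating this conditional expectation. Expanding $\|\ellv_1\|_2^2=\sigma_t^2\|\hat\s_1\|_2^2+2\sigma_t\hat\s_1^\top\epsilonv+\|\epsilonv\|_2^2$ and noting that $\hat\s_1(\x_t,t)$ is $\x_t$-measurable, it suffices to compute $\E[\epsilonv\mid\x_t]$ and $\E[\|\epsilonv\|_2^2\mid\x_t]$ under $q(\x_0\mid\x_t)$. Because $q(\x_t\mid\x_0)=\N(\x_t\mid\alpha_t\x_0,\sigma_t^2\Iv)$, the first-order Tweedie identity gives $\E[\epsilonv\mid\x_t]=-\sigma_t\nabla_{\x}\log q_t(\x_t)$, and the second-order Tweedie identity gives $\Cov(\epsilonv\mid\x_t)=\Iv+\sigma_t^2\nabla_{\x}^2\log q_t(\x_t)$, so that $\E[\|\epsilonv\|_2^2\mid\x_t]=\|\E[\epsilonv\mid\x_t]\|_2^2+\tr(\Cov(\epsilonv\mid\x_t))=d+\sigma_t^2\tr(\nabla_{\x}^2\log q_t(\x_t))+\sigma_t^2\|\nabla_{\x}\log q_t(\x_t)\|_2^2$. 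Substituting and grouping the quadratic terms in $\hat\s_1$ and $\nabla_{\x}\log q_t$ into a single square yields
\begin{equation*}
\s_2^{\text{trace}}(\x_t,t;\theta^*)=\tr(\nabla_{\x}^2\log q_t(\x_t))+\|\hat\s_1(\x_t,t)-\nabla_{\x}\log q_t(\x_t)\|_2^2=\tr(\nabla_{\x}^2\log q_t(\x_t))+\delta_1^2(\x_t,t).
\end{equation*}

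The stated error bound then follows immediately by inserting and subtracting $\s_2^{\text{trace}}(\x_t,t;\theta^*)$ and applying the triangle inequality. The only real obstacle is bookkeeping: keeping the sign conventions of the two Tweedie identities consistent throughout (a single sign flip in $\E[\epsilonv\mid\x_t]$ would destroy the cancellation that produces exactly $\delta_1^2$ rather than some other quadratic form in the first-order error), and verifying that the objective in Eqn.~\eqref{eqn:dsm-2-trace-obj} admits a unique pointwise minimizer in the unrestricted parameterization so that the optimality identity above is justified. No genuinely new machinery beyond that of Theorem~\ref{thrm:second} is required; indeed, the corollary can alternatively be read off by taking the trace of the matrix identity $\s_2(\x_t,t;\theta^*)=\nabla_{\x}^2\log q_t(\x_t)+(\hat\s_1-\nabla_{\x}\log q_t)(\hat\s_1-\nabla_{\x}\log q_t)^\top$ that emerges in the proof of Theorem~\ref{thrm:second}.
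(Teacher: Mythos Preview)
Your proposal is correct and follows essentially the same approach as the paper: both identify the pointwise minimizer as a conditional expectation, evaluate it via the first- and second-order identities (the paper phrases these as Lemmas~\ref{lemma:first} and~\ref{lemma:second}, you phrase them as Tweedie identities for $\E[\epsilonv\mid\x_t]$ and $\Cov(\epsilonv\mid\x_t)$, but the content is identical), arrive at $\s_2^{\text{trace}}(\x_t,t;\theta^*)-\tr(\nabla_{\x}^2\log q_t(\x_t))=\delta_1^2(\x_t,t)$, and conclude by the triangle inequality. Your closing remark that the result also follows by tracing the matrix identity from Theorem~\ref{thrm:second} is likewise consistent with the paper's organization of the corollary.
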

Finally, we present the third-order DSM method. The score matching error can also be bounded by the training error and the first, second-order score matching errors.
\begin{theorem}
\label{thrm:third}
(Error-Bounded Third-Order DSM) Suppose that $\hat\s_1(\x_t,t)$ is an estimation for $\nabla_{\x}\log q_t(\x_t)$ and $\hat\s_2(\x_t,t)$ is an estimation for $\nabla^2_{\x}\log q_t(\x_t)$, then we can learn a third-order score model $\s_3(\cdot,t;\theta): \R^{d}\rightarrow\R^{d}$ which minimizes
\begin{equation*}
    \E_{q_t(\x_t)}\left[\left\|\s_3(\x_t,t;\theta)-\nabla_{\x}\tr(\nabla_{\x}^2\log q_t(\x_t))\right\|_2^2\right],
\end{equation*}
by optimizing
\begin{equation}
\label{eqn:dsm-3-obj}
\begin{split}
    &\theta^*=\argmin_{\theta}\E_{\x_0,\epsilonv}\left[\frac{1}{\sigma_t^6}\big\|\sigma_t^3\s_3(\x_t,t;\theta)+\ellv_3\big\|_2^2\right]
\end{split}
\end{equation}
where
\begin{equation}
\label{eqn:ell_3}
\begin{split}
    &\ellv_1(\epsilonv,\x_0,t)\coloneqq\sigma_t\hat\s_1(\x_t,t)+\epsilonv,\\
    &\ellv_2(\epsilonv,\x_0,t)\coloneqq\sigma_t^2\hat\s_2(\x_t,t)+\Iv,\\
    &\ellv_3(\epsilonv,\x_0,t)\coloneqq\left(\|\ellv_1\|_2^2\Iv-\tr(\ellv_2)\Iv-2\ellv_2\right)\ellv_1,\\
    &\x_t=\alpha_t\x_0+\sigma_t\epsilonv,\quad \epsilonv\sim\N(\vect{0},\Iv).
\end{split}
\end{equation}
Denote the first-order score matching error as $\delta_1(\x_t,t)\coloneqq\|\hat\s_1(\x_t,t)-\nabla_{\x}\log q_t(\x_t)\|_2$ and the second-order score matching errors as $\delta_2(\x_t,t)\coloneqq\|\hat\s_2(\x_t,t)-\nabla_{\x}^2\log q_t(\x_t)\|_F$ and $\delta_{2,\text{tr}}(\x_t,t)\coloneqq|\tr(\hat\s_2(\x_t,t))-\tr(\nabla_{\x}^2\log q_t(\x_t))|$. Then $\forall \x_t, \theta$, the score matching error for $\s_3(\x_t,t;\theta)$ can be bounded by:
\begin{equation*}
\begin{split}
    &\left\|\s_3(\x_t,t;\theta)-\nabla_{\x}\tr(\nabla_{\x}^2\log q_t(\x_t))\right\|_2\\
    \leq\ &\left\|\s_3(\x_t,t;\theta)-\s_3(\x_t,t;\theta^*)\right\|_2+\big(\delta_1^2+\delta_{2,\text{tr}}+2\delta_2\big)\delta_1^2
\end{split}
\end{equation*}
\end{theorem}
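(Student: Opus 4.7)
The plan is to mirror the pattern of \cref{thrm:second}: (i) observe that the DSM-3 objective in \eqref{eqn:dsm-3-obj} is a conditional-expectation regression, so its minimizer satisfies $\s_3(\x_t,t;\theta^{\ast}) = -\E[\ellv_3 \mid \x_t]/\sigma_t^3$ pointwise in $\x_t$; (ii) compute $\E[\ellv_3 \mid \x_t]$ in closed form and verify that, when $\hat\s_1,\hat\s_2$ match the true scores, it equals $-\sigma_t^3 \nabla_{\x}\tr(\nabla_{\x}^2 \log q_t(\x_t))$; (iii) track the residual introduced by the errors $\Delta\s_1 \coloneqq \hat\s_1 - \nabla_{\x}\log q_t$ and $\Delta\s_2 \coloneqq \hat\s_2 - \nabla_{\x}^2\log q_t$, and close with the triangle inequality.

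The computational heart is step (ii). I would introduce the centered residual $\etav \coloneqq \epsilonv + \sigma_t\,\nabla_{\x}\log q_t(\x_t)$, which under $q(\x_0\mid\x_t)$ is the Tweedie residual, so $\E[\etav\mid\x_t]=0$ and $\E[\etav\etav^{\top}\mid\x_t] = \sigma_t^2 \nabla_{\x}^2\log q_t + \Iv =: \Sigmav^{\ast}$. The crucial new identity is the third-order Tweedie formula
\begin{equation*}
\E\bigl[\|\etav\|_2^2\, \etav \,\big|\, \x_t\bigr] \;=\; -\sigma_t^3\, \nabla_{\x}\tr\bigl(\nabla_{\x}^2 \log q_t(\x_t)\bigr),
\end{equation*}
which I would derive by differentiating the scalar second-moment identity $\sigma_t^2\tr(\nabla_{\x}^2\log q_t)=\E[\|\etav\|_2^2\mid\x_t]-d$ in $\x_t$, exchanging derivative and expectation via $\nabla_{\x_t} q(\x_0\mid\x_t) = -q(\x_0\mid\x_t)\,\etav/\sigma_t$ together with the pointwise computation $\nabla_{\x_t}\|\etav\|_2^2 = (2/\sigma_t)\Sigmav^{\ast}\etav$; the gradient contribution vanishes in expectation because $\E[\etav\mid\x_t]=0$, leaving exactly the third-moment expression.

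For step (iii), I would write $\ellv_1=\etav+\sigma_t\Delta\s_1$ and $\ellv_2=\Sigmav^{\ast}+\sigma_t^2\Delta\s_2$, then expand $\|\ellv_1\|_2^2\ellv_1$ and $(\tr(\ellv_2)\Iv+2\ellv_2)\ellv_1$ and take $\E[\cdot\mid\x_t]$ using the three Tweedie moment identities. The precise combination $\|\ellv_1\|_2^2\Iv-\tr(\ellv_2)\Iv-2\ellv_2$ in $\ellv_3$ is chosen so that the linear-in-$\Delta\s_1$ cross-terms $\sigma_t(\tr(\Sigmav^{\ast})\Iv+2\Sigmav^{\ast})\Delta\s_1$ arising from $\E[\|\ellv_1\|_2^2\ellv_1\mid\x_t]$ cancel against the corresponding terms from $(\tr(\ellv_2)\Iv+2\ellv_2)\E[\ellv_1\mid\x_t]=(\tr(\ellv_2)\Iv+2\ellv_2)\sigma_t\Delta\s_1$, leaving
\begin{equation*}
\E[\ellv_3\mid\x_t]=-\sigma_t^3\nabla_{\x}\tr(\nabla_{\x}^2\log q_t)+\sigma_t^3\bigl(\|\Delta\s_1\|_2^2\Iv-\tr(\Delta\s_2)\Iv-2\Delta\s_2\bigr)\Delta\s_1.
\end{equation*}
Dividing by $-\sigma_t^3$ identifies the residual $\s_3(\x_t,t;\theta^{\ast})-\nabla_{\x}\tr(\nabla_{\x}^2\log q_t)$, whose norm I would bound by operator-norm subadditivity together with $\|\Delta\s_2\|_2\leq\|\Delta\s_2\|_F=\delta_2$, $|\tr(\Delta\s_2)|=\delta_{2,\text{tr}}$, and $\|\Delta\s_1\|_2=\delta_1$, producing the claimed product-of-errors factor. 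The theorem then closes via the triangle inequality applied to $\s_3(\cdot;\theta)-\nabla_{\x}\tr(\nabla^2\log q_t)$, split at $\s_3(\cdot;\theta^{\ast})$.

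I expect the main obstacle to be the algebraic bookkeeping that forces the cancellation in step (iii): both $\E[\|\ellv_1\|_2^2\ellv_1\mid\x_t]$ and $(\tr(\ellv_2)\Iv+2\ellv_2)\E[\ellv_1\mid\x_t]$ contribute nontrivial $\sigma_t\tr(\Sigmav^{\ast})\Delta\s_1$ and $\sigma_t\Sigmav^{\ast}\Delta\s_1$ pieces, and the coefficients $1$ and $2$ in the definition of $\ellv_3$ must match exactly those dictated by the second-moment Tweedie identity $\E[\etav\etav^{\top}\mid\x_t]=\sigma_t^2\nabla_{\x}^2\log q_t+\Iv$. This coefficient matching is precisely what distinguishes the error-bounded construction from a naive plug-in third-order DSM, and it is why the residual reduces to a product of error terms rather than retaining a first-order piece in $\Delta\s_1$ that would blow up under a crude estimator $\hat\s_1$.
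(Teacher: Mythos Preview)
Your proposal is correct and follows essentially the same route as the paper's proof: both identify $\s_3(\cdot;\theta^*)$ as the conditional expectation $-\sigma_t^{-3}\E[\ellv_3\mid\x_t]$, invoke the third-order Tweedie identity (your $\E[\|\etav\|_2^2\etav\mid\x_t]=-\sigma_t^3\nabla_{\x}\tr(\nabla_{\x}^2\log q_t)$ is exactly the paper's Lemma~\ref{lemma:third} in centered form), and then expand to obtain the residual $\bigl(\|\Delta\s_1\|_2^2\Iv-\tr(\Delta\s_2)\Iv-2\Delta\s_2\bigr)\Delta\s_1$ before closing with the triangle inequality. Your organization around the centered residual $\etav$ and the decomposition $\ellv_1=\etav+\sigma_t\Delta\s_1$, $\ellv_2=\Sigmav^*+\sigma_t^2\Delta\s_2$ makes the cancellation of the $\sigma_t\tr(\Sigmav^*)\Delta\s_1$ and $2\sigma_t\Sigmav^*\Delta\s_1$ terms more transparent than the paper's direct expansion, but the underlying computation is identical.
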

\cref{thrm:third} shows that the third-order score matching needs first-order score matching, second-order score matching and trace of second-order score matching. The third-order score matching error can also be bounded by the training error and the lower-order score matching errors. We believe that our construction for the error-bounded high-order DSM can be extended to even higher orders by carefully designing the training objectives. In this paper, we only focus on the second and third-order methods, and leave this extension for future work.
\section{Training Score Models by High-Order DSM}
\label{sec:training}
Building upon the high-order DSM for a specific time $t$ (see  Sec.~\ref{sec:high_order_SM}), we present our algorithm to train {\ode}s in the perspective of maximum likelihood by considering all timesteps $t\in[0,T]$. 
Practically, we further leverage the ``noise-prediction'' trick~\cite{kingma2021variational,ho2020denoising} for variance reduction and the Skilling-Hutchinson trace estimator~\cite{skilling1989eigenvalues, hutchinson1989stochastic} to compute the involved high-order derivatives efficiently. The whole training algorithm is presented detailedly in Appendix.~\ref{appendix:training_algorithm}.

\subsection{Variance Reduction by Time-Reweighting}
\label{sec:variance_reduction}
Theoretically, to train score models for all $t\in[0,T]$, we need to integrate the DSM objectives in Eqn.~\eqref{eqn:dsm-1-obj}\eqref{eqn:dsm-2-obj}\eqref{eqn:dsm-2-trace-obj}\eqref{eqn:dsm-3-obj} from $t=0$ to $t=T$, which needs ODE solvers and is time-consuming. Instead, in practice, we follow the method in \cite{song2020score,ho2020denoising} which uses Monte-Carlo method to unbiasedly estimate the objectives by sample $t\in[0,T]$ from a proposal distribution $p(t)$, avoiding ODE solvers. The main problem for the Monte-Carlo method is that sometimes the sample variance of the objectives may be large due to the different value of $\frac{1}{\sigma_t}$. To reduce the variance, we take the time-reweighted objectives by multiplying $\sigma_t^2$,$\sigma_t^4$,$\sigma_t^6$ with the corresponding first, second, third-order score matching objectives at each time $t$, which is known as the ``noise-prediction'' trick~\cite{kingma2021variational,ho2020denoising} for the first-order DSM objective. Specifically, assume that $\x_0\sim q_0(\x_0)$, the time proposal distribution $p(t)=\Uc[0,T]$ is the uniform distribution in $[0,T]$,  the random noise $\epsilonv\sim\N(\vect{0},\Iv)$ follows the standard Gaussian distribution, and let $\x_t=\alpha_t\x_0+\sigma_t\epsilonv$. The training objective for the first-order DSM in Eqn.~\eqref{eqn:dsm-1-obj} through all $t\in[0,T]$ is
\begin{equation}
\label{eqn:dsm-1-final}
    \Jc_{\DSM}^{(1)}(\theta)\coloneqq \E_{t,\x_0,\epsilonv}\Big[\big\|\sigma_t\s_\theta(\x_t,t)+\epsilonv\big\|_2^2\Big],
\end{equation}
which empirically has low sample variance~\cite{song2020score,ho2020denoising}.

As for the second and third-order DSM objectives, we take $\hat\s_1(\x_t,t)\coloneqq \s_\theta(\x_t,t)$ for the first-order score function estimation, and $\hat\s_2(\x_t,t)\coloneqq \nabla_{\x}\s_\theta(\x_t,t)$ for the second-order score function estimation, and both disable the gradient computations for $\theta$ (which can be easily implemented by \texttt{stop\_gradient} or \texttt{detach}). The reason for disabling gradients is because the high-order DSM only needs the estimation values of the lower-order score functions, as shown in Theorem.~\ref{thrm:second} and \ref{thrm:third}. The training objectives through all $t\in[0,T]$ for the second-order DSM in Eqn.~\eqref{eqn:dsm-2-obj}, for the trace of second-order DSM in Eqn.~\eqref{eqn:dsm-2-trace-obj} and for the third-order DSM in Eqn.~\eqref{eqn:dsm-3-obj} are
\begin{equation*}
\begin{aligned}
    \Jc_{\DSM}^{(2)}(\theta)\!&\coloneqq\! \E_{t,\x_0,\epsilonv}\Big[\big\|\sigma_t^2\nabla_{\x}\s_\theta(\x_t,t)+\Iv-\ellv_1\ellv_1^\top\big\|_F^2\Big],\\
    \Jc_{\DSM}^{(2,\text{tr})}(\theta)\!&\coloneqq\! \E_{t,\x_0,\epsilonv}\Big[\big|\sigma_t^2\tr(\nabla_{\x}\s_\theta(\x_t,t))\!+\!d\!-\!\|\ellv_1\|_2^2\big|^2\Big],\\
    \Jc_{\DSM}^{(3)}(\theta)\!&\coloneqq\! \E_{t,\x_0,\epsilonv}\Big[\big\|\sigma_t^3\nabla_{\x}\!\tr(\nabla_{\x}\s_\theta(\x_t,t))\!+\ellv_3\big\|_2^2\Big],
\end{aligned}
\end{equation*}
respectively, where $\ellv_1,\ellv_3$ are the same as that in Eqn.~\eqref{eqn:ell_3}. Combining with the first-order objective in Eqn.~\eqref{eqn:dsm-1-final}, our final training objective is
\begin{equation}
\label{eqn:final_objective}
    \min_{\theta}\Jc_{\DSM}^{(1)}(\theta)+\lambda_1\left(\Jc_{\DSM}^{(2)}(\theta)+\Jc_{\DSM}^{(2,tr)}(\theta)\right)+\lambda_2\Jc_{\DSM}^{(3)}(\theta),
\end{equation}
where $\lambda_1,\lambda_2\geq0$ are hyperparameters, and we discuss in Appendix.~\ref{appendix:choose_lambda} for details.
\begin{remark}
Although as shown in Eqn.~\eqref{eqn:change_of_variable_score_ode}, we can exactly compute $\log p_0^{\ODE}(\x_0)$ for a given data point $\x_0$ via solving the ODE~\cite{chen2018neural, grathwohl2018ffjord}, this method is hard to scale up for the large neural networks used in SGMs. For example, it takes $2\sim 3$ minutes for evaluating $\log p^{\ODE}_0$ for a single batch of the {\ode} used in~\cite{song2020score}. Instead, our proposed method leverages Monte-Carlo methods to avoid the ODE solvers. %
\end{remark}

\subsection{Scalability and Numerical Stability}
As the second and third-order DSM objectives need to compute the high-order derivatives of a neural network and are expensive for high-dimensional data, we use the Skilling-Hutchinson trace estimator~\cite{skilling1989eigenvalues, hutchinson1989stochastic} to unbiasedly estimate the trace of the Jacobian~\cite{grathwohl2018ffjord} and the Frobenius norm of the Jacobian~\cite{finlay2020train}. The detailed objectives for high-dimensional data can be found in Appendix.~\ref{appendix:ubiased_estimation}.

In practice, we often face numerical instability problems for $t$ near to $0$. We follow~\citet{song2021maximum} to choose a small starting time $\epsilon>0$, and both of the training and the evaluation are performed for $t\in[\epsilon,T]$ instead of $[0,T]$. Note that the likelihood evaluation is still exact, because we use $p_{\epsilon}^{\ODE}(\x_0)$ to compute the log-likelihood of $\x_0$, which is still a well-defined density model~\cite{song2021maximum}.
\vspace{-.1cm}
\section{Related Work}
\label{sec:related}
\vspace{-.05cm}
{\ode}s are special formulations of Neural ODEs (NODEs)~\cite{chen2018neural}, and our proposed method can be viewed as maximum likelihood training of NODEs by high-order score matching. Traditional maximum likelihood training for NODEs aims to match the NODE distribution at $t=0$ with the data distribution, and it cannot control the distribution between $0$ and $T$. \citet{finlay2020train} show that training NODEs by simply maximizing likelihood could result in unnecessary complex dynamics, which is hard to solve. Instead, our high-order score matching objective is to match the distributions between the forward process distribution and the NODE distribution at each time $t$, and empirically the dynamics is kind of smooth. Moreover, our proposed algorithm uses the Monto-Carlo method to unbiasedly estimate the objectives, which does not need any black-box ODE solvers. Therefore, our algorithm is suitable for maximum likelihood training for large-scale NODEs.

Recently, \citet{meng2021estimating} propose a high-order DSM method for estimating the second-order data score functions. However, the training objective of our proposed error-bounded high-order DSM is different from that of~\citet{meng2021estimating}. Our proposed algorithm can guarantee a bounded error of the high-order score matching exactly by  the lower-order estimation errors and the training error, while the score matching error raised by minimizing the objective in~\citet{meng2021estimating} may be unbounded, even when the lower-order score matching error is small and the training error of the high-order DSM is zero (see Appendix~\ref{appendix:differ_high_order} for detailed analysis). Moreover, our method can also be used to train a separate high-order score model in other applications, such as uncertainty quantification and Ozaki sampling presented in~\citet{meng2021estimating}.
\section{Experiments}
\vspace{-.05cm}

In this section, we demonstrate that our proposed high-order DSM algorithm can improve the likelihood of {\ode}s, while retaining the high sample quality of the corresponding {\sde}s. Particularly, we use the Variance Exploding (VE)~\cite{song2020score} type diffusion models, which empirically have shown high sample quality by the {\sde} but poor likelihood by the {\ode}~\cite{song2021maximum} when trained by minimizing the first-order score matching objective $\Jc_{\SM}(\theta)$. We implement our experiments by JAX~\cite{jax2018github},  which is efficient for computing the derivatives of the score models. In all experiments, we choose the start time $\epsilon=10^{-5}$, which follows the default settings in~\citet{song2020score}. In this section, we refer to ``first-order score matching'' as minimizing the objective in Eqn.~\eqref{eqn:final_objective} with $\lambda_1=\lambda_2=0$, and ``second-order score matching'' as minimizing the one with $\lambda_2=0$. Please see Appendix.~\ref{appendix:choose_lambda} for detailed settings about $\lambda_1$ and $\lambda_2$. The released code can be found at \url{https://github.com/LuChengTHU/mle_score_ode}.
\begin{remark}
\citet{song2021maximum} use a proposal distribution $t\sim p(t)$ to adjust the weighting for different time $t$ to minimize $\Jc_{\SM}(\theta)$. In our experiments, we mainly focus on the VE type, whose proposal distribution $p(t)=\Uc[0,T]$ is the same as that of our final objective in Eqn.~\eqref{eqn:final_objective}.
\end{remark}

\begin{figure}[t]
	\centering		
 	\includegraphics[width=.7\linewidth]{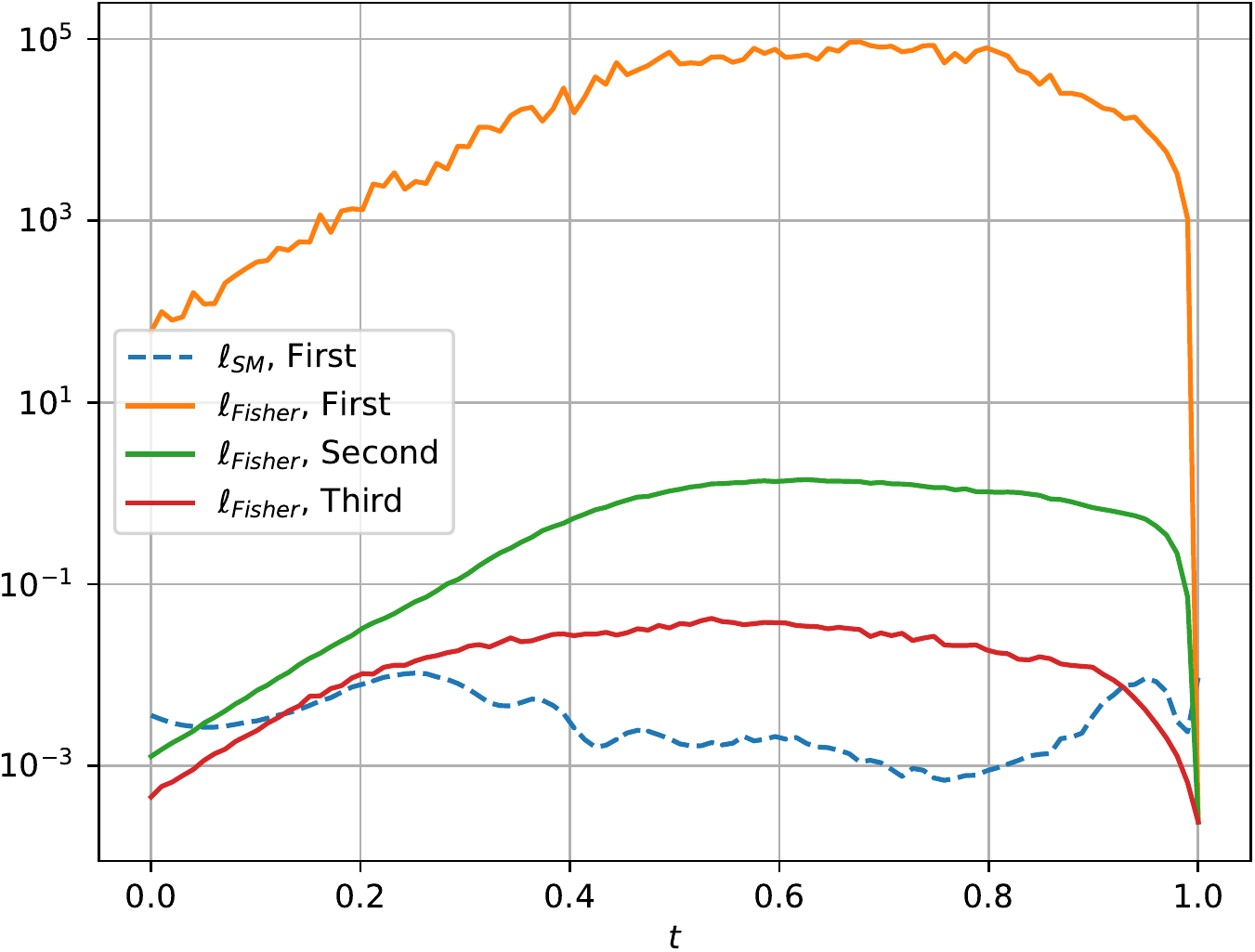}\\
 	\vspace{-.1in}
	\caption{$\ell_{\FISH}(t)$ and $\ell_{\SM}(t)$ of {\ode}s (VE type) on 1-D mixture of Gaussians, trained by minimizing the first, second, third-order score matching objectives.\label{fig:mog_plot}}
 	\vspace{-.1in}
\end{figure}

\begin{figure*}[t]
	\centering
	\begin{minipage}{.28\linewidth}
		\centering
			\includegraphics[width=.9\linewidth]{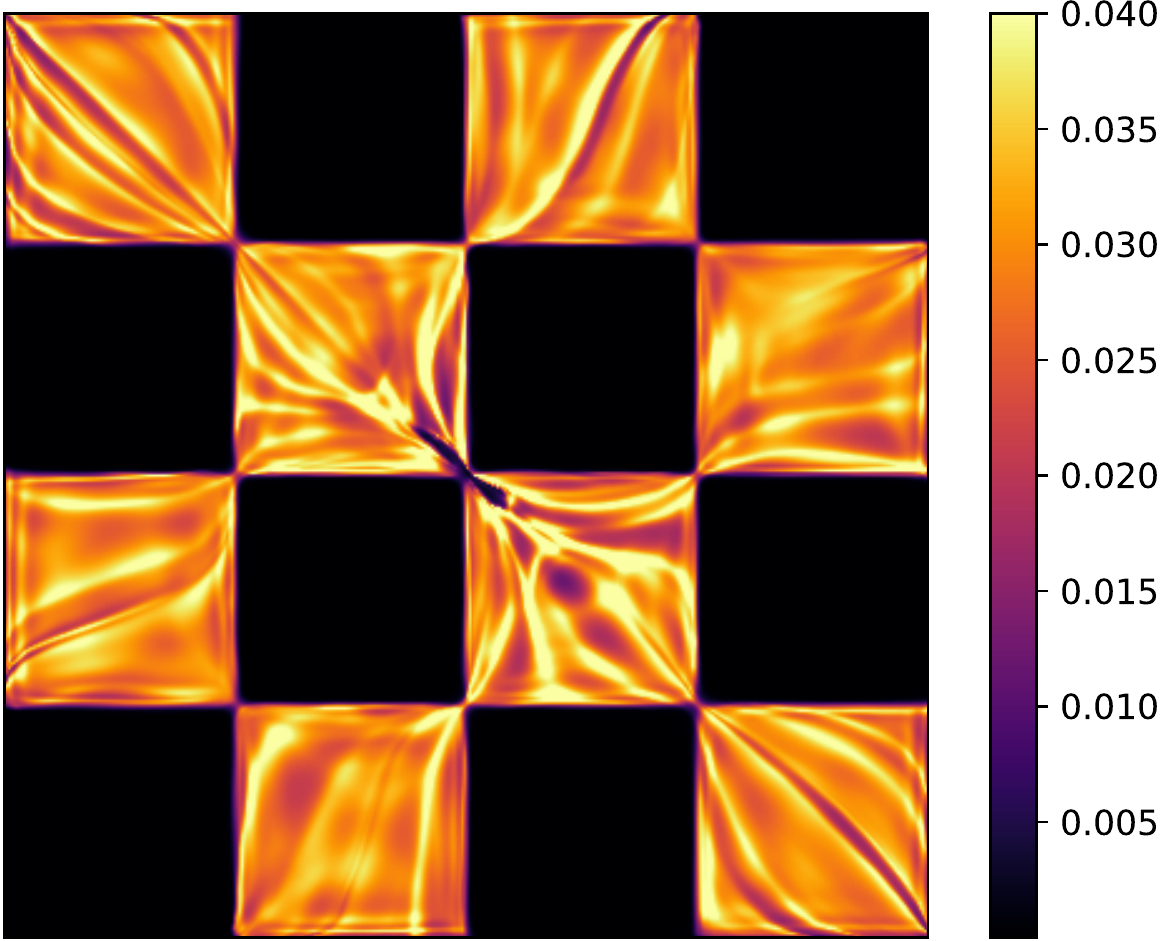}\\
\small{(a) First-order score matching}
	\end{minipage}
	\begin{minipage}{.28\linewidth}
	\centering
	\includegraphics[width=.9\linewidth]{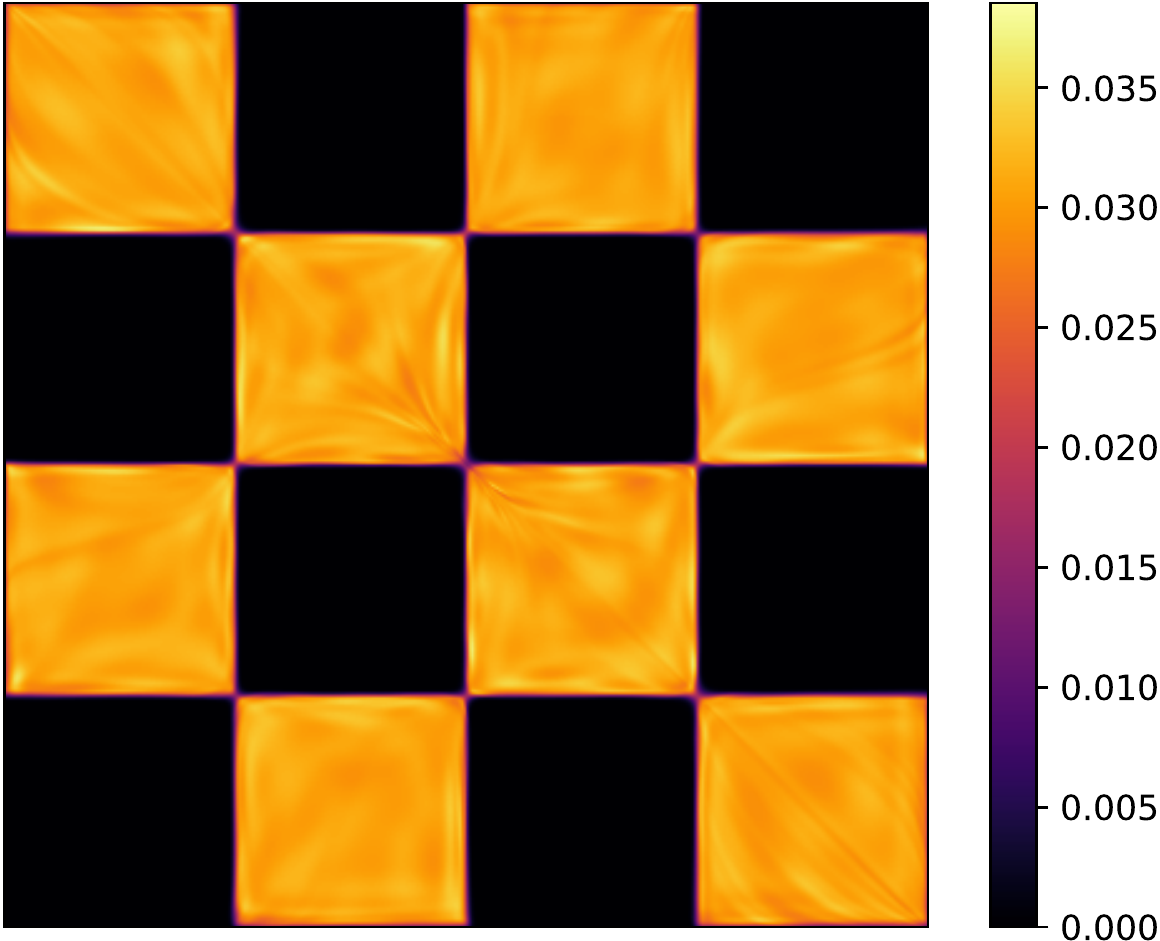}\\
\small{	(b) Second-order score matching}
\end{minipage}
	\begin{minipage}{.28\linewidth}
		\centering		
 	\includegraphics[width=.9\linewidth]{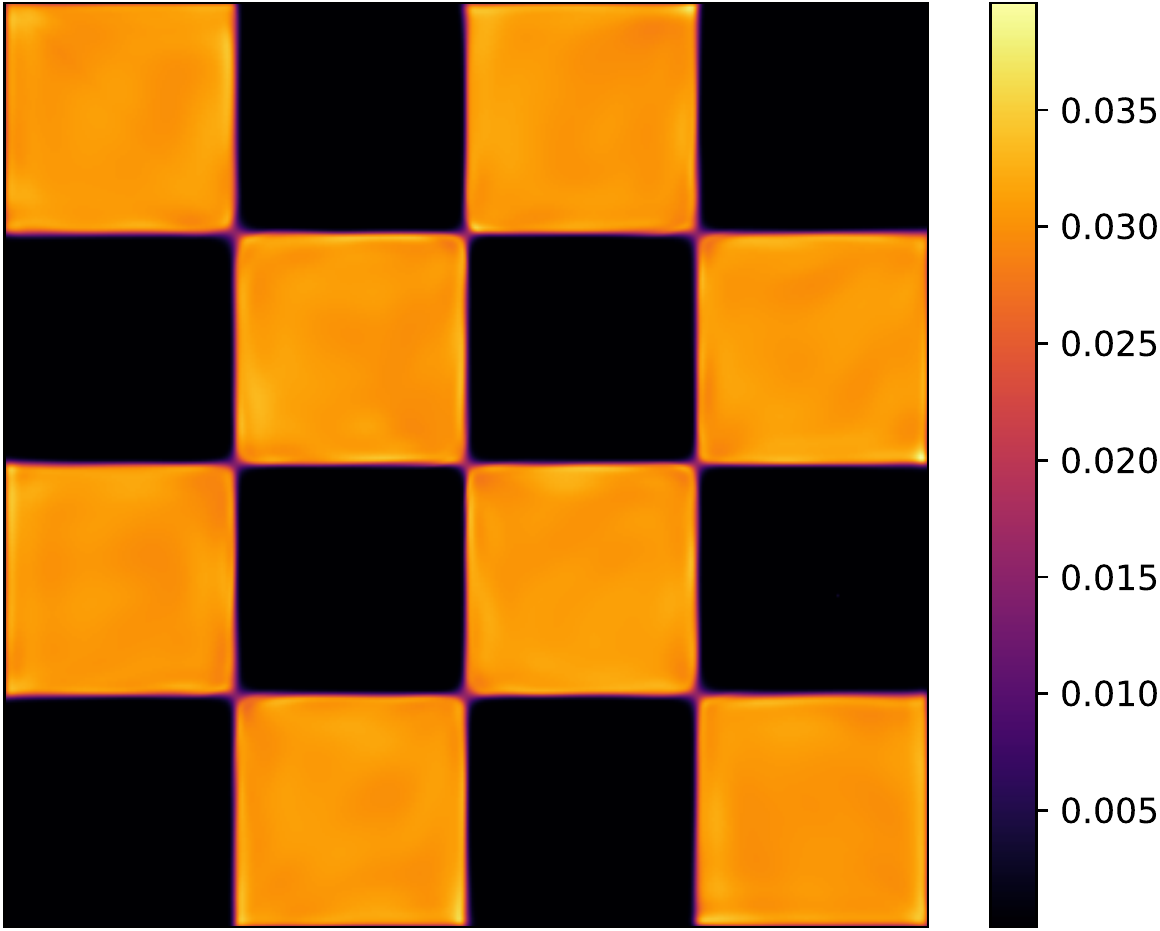}\\
\small{(c) Third-order score matching}
   \end{minipage}
   \vspace{-.05in}
	\caption{Model density of {\ode}s (VE type) on 2-D checkerboard data.\label{fig:checker_density}}
	\vspace{-.15in}
\end{figure*}

\subsection{Example: 1-D Mixture of Gaussians}
\vspace{-.1cm}
\label{sec:mog_example}

We take an example to demonstrate how the high-order score matching training impacts the weighted Fisher divergence $\Jc_{\FISH}(\theta)$ and the model density of {\ode}s. Let $q_0$ be a 1-D Gaussian mixture distribution as shown in Fig.~\ref{fig:mog_intro} (a). We train a score model of VE type by minimizing the first-order score matching objective $\Jc_{\SM}(\theta)$, and use the corresponding {\ode} to evaluate the model density, as shown in Fig.~\ref{fig:mog_intro} (b). The model density achieved by only first-order score matching is quite different from the data distribution at some data points. However, by third-order score matching, the model density in Fig.~\ref{fig:mog_intro} (c) is quite similar to the data distribution, showing the effectiveness of our proposed algorithm.

We further analyze the difference of $\Jc_{\FISH}(\theta)$ between the first, second, third-order score matching training. As $q_0$ is a Gaussian mixture distribution, we can prove that each $q_t$ is also a Gaussian mixture distribution, and $\nabla_{\x}\log q_t$ can be analytically computed. Denote
\begin{align*}
    &\ell_{\FISH}(t)\!\coloneqq\! \frac{1}{2}g(t)^2\fisher{q_t}{p_t^{\ODE}},\\
    &\ell_{\SM}(t)\!\coloneqq\! \frac{1}{2}g(t)^2\E_{q_t(\x_t)}\|\s_\theta(\x_t,t)\!-\!\nabla_{\x}\!\log q_t(\x_t)\|_2^2,
\end{align*}
which are the integrands of each time $t$ for $\Jc_{\SM}(\theta)$ and $\Jc_{\FISH}(\theta)$. We use the method in Appendix.~\ref{appendix:change_of_score} to compute $\fisher{q_t}{p_t^{\ODE}}$, and evaluate $\ell_{\FISH}(t)$ and $\ell_{\SM}(t)$ from $t=\epsilon$ to $t=T$ with $100$ points of the {\ode}s trained by first, second, third-order denoising score matching objectives. As shown in Fig.~\ref{fig:mog_plot}, although $\ell_{\SM}(t)$ is small when minimizing first-order score matching objective, $\ell_{\FISH}(t)$ is rather large for most $t$, which means $\Jc_{\FISH}(\theta)$ is also large. However, the second and third score matching gradually reduce $\ell_{\FISH}(t)$, which shows that high-order score matching training can reduce $\ell_{\FISH}(t)$ and then reduce $\Jc_{\FISH}(\theta)$. Such results empirically validate that controlling the high-order score matching error can control the Fisher divergence, as proposed in Theorem~\ref{thrm:fisher_bound}.

\vspace{-.1cm}
\subsection{Density Modeling on 2-D Checkerboard Data}
\vspace{-.1cm}

We then train {\ode}s (VE type) by the first, second, third-order score matchings on the \textit{checkerboard} data whose density is multi-modal, as shown in Fig.~\ref{fig:checker_density}. We use a simple MLP neural network with Swish activations~\cite{ramachandran2017searching}, and the detailed settings are in Appendix.~\ref{appendix:checkerboard}.

For the second and third-order score matchings, we do not use any pre-trained lower-order score models. Instead, we train the score model and its first, second-order derivatives from scratch, using the objective in Eqn.~\eqref{eqn:final_objective}. The model density by the first-order score matching is rather poor, because it cannot control the value of $\nabla_{\x}\log p^{\ODE}_t$ in $\Jc_{\DIFF}(\theta)$ in Eqn.~\eqref{eqn:J_diff}. The second-order score matching can control part of $\nabla_{\x}\log p^{\ODE}_t$ and improve the model density, while the third-order score matching can reduce the Fisher divergence $\fisher{q_t}{p_t^{\ODE}}$ and achieve excellent model density. Such results show the effectiveness of our error-bounded high-order denoising score matching methods, and accord with our theoretical analysis in Theorem~\ref{thrm:kl-ode} and \ref{thrm:fisher_bound}.

\vspace{-.1cm}
\subsection{Density Modeling on Image Datasets}
\vspace{-.1cm}

We also train {\ode}s (VE type) on both the CIFAR-10 dataset~\cite{Krizhevsky09learningmultiple} and the ImageNet 32x32 dataset~\cite{deng2009imagenet}, which are two of the most popular datasets for generative modeling and likelihood evaluation. We use the estimated training objectives by trace estimators, which are detailed in Appendix.~\ref{appendix:ubiased_estimation}. We use the same neural networks and hyperparameters as the NCSN++ cont. model in~\cite{song2020score} (denoted as VE) and the NCSN++ cont. deep model in~\cite{song2020score} (denoted as VE (deep)), respectively (see detailed settings in Appendix.~\ref{appendix:cifar10} and Appendix.~\ref{appendix:imagenet32}). We evaluate the likelihood by the {\ode} $p^{\ODE}_{\epsilon}$, and sample from the {\sde} by the PC sampler~\cite{song2020score}. As shown in Table~\ref{tab:cifar10}, our proposed high-order score matching can improve the likelihood performance of {\ode}s, while retraining the high sample quality of the corresponding {\sde}s (see Appendix.~\ref{appendix:additional_samples} for samples). Moreover, the computation costs of the high-order DSM objectives are acceptable because of the trace estimators and the efficient ``Jacobian-vector-product'' computation in JAX. We list the detailed computation costs in Appendix.~\ref{appendix:cifar10}. 

We also compare the VE, VP and subVP types of ScoreODEs trained by the first-order DSM and the third-order DSM, and the detailed results are listed in Appendix.~\ref{appendix:additional_samples}.

\begin{table}
    \vspace{-.1in}
    \centering
    \caption{Negative log-likelihood (NLL) in bits/dim (bpd) and sample quality (FID scores) on CIFAR-10 and ImageNet 32x32.}
    \vskip 0.1in
    \begin{small}
    \resizebox{0.48\textwidth}{!}{%
    \begin{tabular}{lccc}
    \toprule
    \multirow{3}{*}{Model} & \multicolumn{2}{c}{\multirow{2}{*}{CIFAR-10}} & ImageNet \Bstrut\\
    & & & 32x32\\
    \cline{2-4}
    & NLL $\downarrow$ & FID $\downarrow$ & NLL $\downarrow$ \Tstrut\\
    \midrule
        VE~\cite{song2020score} & 3.66 & 2.42 & 4.21\\
        VE (second)~(\bf{ours}) & 3.44 & \textbf{2.37} & 4.06\\
        VE (third)~(\bf{ours}) & \bf{3.38} & 2.95 & \textbf{4.04}\Bstrut\\
    \hline
        VE (deep)~\cite{song2020score} & 3.45 & \textbf{2.19} & 4.21\Tstrut \\
        VE (deep, second)~(\bf{ours}) & 3.35 & 2.43 & 4.05\\
        VE (deep, third)~(\bf{ours}) & \bf{3.27} & 2.61 & \textbf{4.03}\\
    \bottomrule
    \end{tabular}%
    }
    \end{small}
    \label{tab:cifar10}
    \vspace{-0.15in}
\end{table}

\vspace{-.1cm}
\section{Conclusion}
\vspace{-.1cm}

We propose maximum likelihood training for {\ode}s by novel high-order denoising score matching methods. We analyze the relationship between the score matching objectives and the KL divergence from the data distribution to the {\ode} distribution. Based on it, we provide an upper bound of the KL divergence, which can be controlled by minimizing the first, second, and third-order score matching errors of score models. To minimize the high-order score matching errors, we further propose a high-order DSM algorithm, such that the higher-order score matching error can be bounded by exactly the training error and the lower-order score matching errors. The optimal solution for the score model is still the same as the original training objective of SGMs. Empirically, our method can greatly improve the model density of {\ode}s of the Variance Exploding type on several density modeling benchmarks. Finally, we believe that our training method is also suitable for other SGMs, including the Variance Preserving (VP) type~\cite{song2020score}, the latent space type~\cite{vahdat2021score} and the critically-damped Langevin diffusion type~\cite{dockhorn2021score}. Such extensions are left for future work.

\section*{Acknowledgements}

This work was supported by National Key Research and Development Project of China (No. 2021ZD0110502); NSF of China Projects (Nos. 62061136001, 61620106010, 62076145, U19B2034, U1811461, U19A2081, 6197222, 62106120); Beijing NSF Project (No. JQ19016); Beijing Outstanding Young Scientist Program NO. BJJWZYJH012019100020098; a grant from Tsinghua Institute for Guo Qiang; the NVIDIA NVAIL Program with GPU/DGX Acceleration; the High Performance Computing Center, Tsinghua University; and Major Innovation \& Planning Interdisciplinary Platform for the ``Double-First Class" Initiative, Renmin University of China.

\bibliography{ref}
\bibliographystyle{icml2022}

\newpage
\appendix
\onecolumn
\section{Assumptions}
\label{appendix:assumptions}
We follow the regularity assumptions in~\cite{song2021maximum} to ensure the existence of reverse-time SDEs and probability flow ODEs and the correctness of the ``integration by parts'' tricks for the computation of KL divergence. And to ensure the existence of the third score function, we change the assumptions of differentiability. For completeness, we list all these assumptions in this section.

For simplicity, in the Appendix sections, we use $\nabla(\cdot)$ to denote $\nabla_{\x}(\cdot)$ and omit the subscript $\x$. And we denote $\nabla\cdot\hv(\x)\coloneqq\tr(\nabla\hv(\x))$ as the divergence of a function $\hv(\x):\R^d\rightarrow\R^d$.
\begin{assumption}
\label{assumption:all}
We make the following assumptions, most of which are presented in~\cite{song2021maximum}:
\begin{enumerate}
    \item $q_0(\x)\in \mathcal{C}^3$ and $\E_{q_0(\x)}[\|\x\|_2^2]<\infty$.
    \item $\forall t\in [0,T]:\fv(\cdot,t)\in \mathcal{C}^2$. And $\exists C>0$, $\forall \x\in\R^d,t\in [0,T]:\|\fv(\x,t)\|_2\leq C(1+\|\x\|_2)$.
    \item $\exists C>0,\forall \x,\y\in\R^d:\|\fv(\x,t)-\fv(\y,t)\|_2\leq C\|\x-\y\|_2$.
    \item $g\in \mathcal{C}$ and $\forall t\in [0,T],|g(t)|>0$.
    \item For any open bounded set $\mathcal{O}$, $\int_0^T\int_{\mathcal{O}}\|q_t(\x)\|_2^2+d\cdot g(t)^2\|\nabla q_t(\x)\|_2^2 \dxv\dt<\infty$.
    \item $\exists C>0,\forall \x\in\R^d,t\in [0,T]:\|\nabla q_t(\x)\|_2^2\leq C(1+\|\x\|_2)$.
    \item $\exists C>0,\forall \x,\y\in\R^d:\|\nabla\log q_t(\x)-\nabla\log q_t(\y)\|_2\leq C\|\x-\y\|_2$.
    \item $\exists C>0,\forall \x\in\R^d,t\in [0,T]:\|\s_{\theta}(\x,t)\|_2\leq C(1+\|\x\|_2)$.
    \item $\exists C>0,\forall \x,\y\in\R^d:\|\s_{\theta}(\x,t)-\s_{\theta}(\y,t)\|_2\leq C\|\x-\y\|_2$.
    \item Novikov’s condition: $\E\left[\exp\left(\frac{1}{2}\int_0^T\|\nabla\log q_t(\x)-\s_{\theta}(\x,t)\|_2^2\dt \right) \right]<\infty$.
    \item $\forall t\in [0,T],\exists k>0:q_t(\x)=O(e^{-\|\x\|_2^k})$, $p_t^{\SDE}(\x)=O(e^{-\|\x\|_2^k})$, $p_t^{\ODE}(\x)=O(e^{-\|\x\|_2^k})$ as $\|\x\|_2\rightarrow\infty$.
\end{enumerate}
\end{assumption}

\section{Distribution gap between score-based diffusion SDEs and ODEs}
\label{appendix:distribution_gap}
\begin{table}
    \centering
    \caption{Three distributions: $q_t(\x_t)$, $p_t^{\SDE}(\x_t)$ and $p_t^{\ODE}(\x_t)$ and their corresponding equivalent SDEs / ODEs.}
    \vskip 0.10in
    \begin{tabular}{c|c|c}
    \toprule
    Distribution & Dynamics type & Formulation\\
    \midrule
    \multirow{3}{*}{$q_t(\x_t)$} & Forward SDE & $\dxv_t=\fv(\x_t,t)\dt+g(t)\dwv_t$\\
    &Reverse SDE & $\dxv_t=[\fv(\x_t,t)-g(t)^2\nabla_{\x}\log q_t(\x_t)]\dt+g(t)\dm\bar{\wv}_t$\\
    &Probability flow ODE & $\frac{\dxv_t}{\dt}=\fv(\x_t,t)-\frac{1}{2}g(t)^2\nabla_{\x}\log q_t(\x_t)$\Bstrut\\
    \hline
    \multirow{3}{*}{$p_t^{\SDE}(\x_t)$} & Forward SDE & $\dxv_t=[\fv(\x_t,t)+g(t)^2(\nabla\log p_t^{\SDE}(\x_t)-\s_\theta(\x_t,t))]\dt+g(t)\dm\wv_t$\Tstrut\\
    &Reverse SDE & $\dxv_t=[\fv(\x_t,t)-g(t)^2\s_\theta(\x_t,t)]\dt+g(t)\dm\bar{\wv}_t$\\
    &Probability flow ODE & $\frac{\dxv_t}{\dt}=\fv(\x_t,t)-g(t)^2\sv_\theta(\x_t,t)+\frac{1}{2}g(t)^2\nabla_{\x}\log p_t^{\SDE}(\x_t)$\Bstrut\\
    \hline
    \multirow{3}{*}{$p_t^{\ODE}(\x_t)$} & Forward SDE & $\backslash$\Tstrut\\
    &Reverse SDE & $\backslash$\\
    &Probability flow ODE & $\frac{\dxv_t}{\dt}=\fv(\x_t,t)-\frac{1}{2}g(t)^2\sv_\theta(\x_t,t)$\Bstrut\\
    \bottomrule
    \end{tabular}
    \label{tab:SDE_and_ODE}
\end{table}
We list the corresponding equivalent SDEs and ODEs of $q_t(\x_t)$, $p_t^{\SDE}(\x_t)$ and $p_t^{\ODE}(\x_t)$ in Table~\ref{tab:SDE_and_ODE}. In most cases, $q_t(\x_t)$, $p_t^{\SDE}(\x_t)$ and $p_t^{\ODE}(\x_t)$ are different distributions, which we will prove in this section.

According to the reverse SDE and probability flow ODE listed in the table, it is obvious that when $\s_\theta(\x_t,t)\neq \nabla\log q_t(\x_t)$, $q_t(\x_t)$ and $p_t^{\SDE}$ are different, and $q_t(\x_t)$, $p_t^{\ODE}$ are different. Below we show that in most cases, $p_t^{\SDE}$ and $p_t^{\ODE}$ are also different.

Firstly, we should notice that the probability flow ODE of the {\sde} in Eqn.~\eqref{eqn:reverse_sde_p} is
\begin{equation}
\label{eqn:reverse_sde_ode}
    \frac{\dxv_t}{\dt}=\fv(\x_t,t)-g(t)^2\sv_\theta(\x_t,t)+\frac{1}{2}g(t)^2\nabla_{\x}\log p_t^{\SDE}(\x_t),
\end{equation}
where the distribution of $\x_t$ during the trajectory is also $p_t^{\SDE}(\x_t)$. Below we show that for the commonly-used SGMs, in most cases, the probability flow ODE in Eqn.~\eqref{eqn:reverse_sde_ode} of the {\sde} is different from the {\ode} in Eqn.~\eqref{eqn:score_ode}.
\begin{proposition}
\label{prop:sde_gaussian}
Assume $\fv(\x_t,t)=\alpha(t)\x_t$ is a linear function of $\x_t$ for all $\x_t\in\R^d$ and $t\in[0,T]$, where $\alpha(t)\in\R$. If for all $\x_t\in\R^d$ and $t\in[0,T]$, $\sv_\theta(\x_t,t)\equiv\nabla_{\x}\log p^{\SDE}_t(\x_t)$, then $p_t^{\SDE}$ is a Gaussian distribution for all $t\in[0,T]$.
\end{proposition}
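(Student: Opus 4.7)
The plan is to substitute the hypothesis $\sv_\theta \equiv \nabla_{\x} \log p_t^{\SDE}$ into the forward SDE associated with $p_t^{\SDE}$ that is listed in Table~\ref{tab:SDE_and_ODE}, observe that it collapses to a linear SDE, and then exploit the Gaussianity of the terminal marginal to propagate Gaussianity backward in time to the initial marginal, from which Gaussianity at every intermediate time follows by direct computation.

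First, I would plug $\sv_\theta(\x_t,t) = \nabla_{\x} \log p_t^{\SDE}(\x_t)$ into the forward SDE for $p_t^{\SDE}$; the bracketed score-difference term $g(t)^2(\nabla_{\x}\log p_t^{\SDE} - \sv_\theta)$ vanishes identically, and together with the assumption $\fv(\x_t,t) = \alpha(t)\x_t$ this leaves the linear SDE
\begin{equation*}
    \dxv_t = \alpha(t)\,\x_t\,\dt + g(t)\,\dwv_t,
\end{equation*}
whose marginal at each time $t \in [0,T]$ is by construction $p_t^{\SDE}$. Solving by variation of constants with $\Phi(t) \coloneqq \exp\!\bigl(\int_0^t \alpha(u)\,\dm u\bigr)$ gives the explicit representation
\begin{equation*}
    \x_t = \Phi(t)\,\x_0 + Z_t, \qquad Z_t \coloneqq \Phi(t)\int_0^t \Phi(s)^{-1} g(s)\,\dwv_s,
\end{equation*}
in which $Z_t$ is a centered Gaussian vector independent of $\x_0$, with covariance $\Phi(t)^2\bigl(\int_0^t \Phi(s)^{-2} g(s)^2\,\dm s\bigr)\Iv$ that is strictly positive definite because $|g(\cdot)|>0$ on $[0,T]$ by Assumption~\ref{assumption:all}.

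Second, I would specialize to $t = T$: we have $\x_T = \Phi(T)\,\x_0 + Z_T$ with $\x_0$ and $Z_T$ independent, $Z_T$ Gaussian, and $\x_T \sim \N(\vect{0}, \sigma_T^2 \Iv)$ Gaussian by the definition of $p_T^{\SDE}$. The key step is Cram\'er's decomposition theorem: if a sum of two independent random vectors is Gaussian, then each summand is itself Gaussian. Since $\Phi(T)$ is a nonzero scalar, this forces $\x_0 \sim p_0^{\SDE}$ to be Gaussian. Feeding a Gaussian $\x_0$ back into the representation $\x_t = \Phi(t)\x_0 + Z_t$ (an affine image of a Gaussian plus an independent Gaussian) then shows that $\x_t$, hence $p_t^{\SDE}$, is Gaussian for every $t \in [0,T]$.

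The main obstacle is the time-reversal step. We are given the terminal law and must deduce the initial law of a parabolic diffusion, which is ill-posed in general; the linearity of $\fv$ is what rescues us, since only then does $\x_T$ split as an affine function of $\x_0$ plus an independent Gaussian, putting us exactly in the setting of Cram\'er's theorem. If $\fv$ were nonlinear the implication would generally fail, so this Gaussian-preserving structure is essential to the argument rather than incidental.
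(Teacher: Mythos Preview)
Your proposal is correct and follows essentially the same approach as the paper's proof: both reduce the forward dynamics of $p_t^{\SDE}$ to the linear SDE $\dxv_t = \alpha(t)\x_t\,\dt + g(t)\,\dwv_t$ under the hypothesis, write $\x_T$ as an affine function of $\x_0$ plus an independent Gaussian, invoke Cram\'er's decomposition theorem at $t=T$ to force $\x_0$ Gaussian, and then propagate Gaussianity forward. Your version is slightly more explicit (you give the variation-of-constants solution and name $\Phi(t)$), whereas the paper arrives at the linear forward SDE via the Fokker--Planck equation rather than by citing Table~\ref{tab:SDE_and_ODE} directly, but the logical skeleton is identical.
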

\begin{proof}
If $\s_\theta(\cdot,t)\equiv p^{\SDE}_t$, then for any $\x\in\R^d$, by Fokker-Planck equation, we have
\begin{equation}
\label{eqn:fokker_planck_sde_ode}
    \pdv{p_t^{\SDE}(\x)}{t}=\nabla_{\x}\cdot \left(\left(\fv(\x,t)-\frac{1}{2}g(t)^2\nabla_{\x}\log p_t^{\SDE}(\x)\right)p_t^{\SDE}(\x)\right).
\end{equation}
On the other hand, if we start the forward process in Eqn.~\eqref{eqn:forward_sde} with $\x_0\sim p^{\SDE}_0$, and the distribution of $\x_t$ at time $t$ during its trajectory follows the same equation as Eqn.~\eqref{eqn:fokker_planck_sde_ode}. Therefore, the distribution of $\x_t$ is also $p_t^{\SDE}$.

As $\fv(\x_t,t)=\alpha(t)\x_t$ is linear to $\x_t$ and $g(t)$ is independent of $\x_t$, there exists a function $\mu(t):\R\rightarrow\R$ and a function $\tilde\sigma(t):\R\rightarrow\R_{>0}$, such that $p^{\SDE}(\x_t|\x_0)=\Nc(\x_t|\mu(t)\x_0,\tilde\sigma(t)^2\Iv)$. Therefore, for any $\x_T\sim p_T^{\SDE}(\x_T)$, there exists a random variable $\epsilonv_T$ which is independent of $\x_0$, such that $\epsilonv\sim\N(\vect{0},\Iv)$ and $\x_T=\mu(t)\x_0+\sigma(t)\epsilonv_T$. As $\x_T\sim\N(\vect{0},\sigma_T^2\Iv)$, by Cramér's decomposition theorem, because the random $\x_T$ is normally distributed and admits a decomposition as a sum of two independent random variables, we can conclude that $\mu(t)\x_0$ also follows a Gaussian distribution. As $\mu(t)$ is a scalar function independent of $\x_t$, we can further conclude that $\x_0$ follows a Gaussian distribution. As $p^{\SDE}(\x_t|\x_0)=\Nc(\x_t|\mu(t)\x_0,\tilde\sigma(t)^2\Iv)$ is a Gaussian distribution whose mean is linear to $\x_0$ and covariance is independent of $\x_0$, we have $p_t^{\SDE}(\x_t)$ is a Gaussian distribution for all $t$.
\end{proof}

The assumption for $\fv(\x_t,t)$ is true for the common SGMs, because for VPSDE, VESDE and subVPSDE, $\fv(\x_t,t)$ are all linear to $\x_t$, which enables fast sampling for denoising score matching~\cite{song2020score}. In most cases, $p_0^{\SDE}$ is not a Gaussian distribution because we want to minimize $\kl{q_0}{p_0^{\SDE}}$. Therefore, there exists $t\in[0,T]$ such that $\s_\theta(\cdot,t)\neq \nabla_{\x}\log p_t^{\SDE}$, and then the probability flow ODE in Eqn.~\eqref{eqn:reverse_sde_ode} is different from the {\ode} in Eqn.~\eqref{eqn:score_ode}. In conclusion, in most cases, $p^{\SDE}_0\neq p^{\ODE}_0$, and minimizing $\Jc_{\SM}(\theta)$ is minimizing $\kl{q_0}{p_0^{\SDE}}$, but is not necessarily minimizing $\kl{q_0}{p_0^{\ODE}}$.

\begin{remark}
The "variational gap" of {\sde}s in~\cite{huang2021variational} is the gap between the ``joint distribution" KL divergence of $p^{\SDE}_{0:T}$ and ``marginal distribution" KL divergence of $p_0^{\SDE}$, which does not include $p^{\ODE}_t$. We refer to Appendix~\ref{appendix:variational_gap} for further discussions.
\end{remark}
\begin{remark}
When $\Jc_{\SM}(\theta)=0$, we have $\s_\theta(\cdot,t)\equiv\nabla_{\x}\log q_t$. In this case, the {\sde} in Eqn.~\eqref{eqn:reverse_sde_p} becomes the reverse diffusion SDE in Eqn.~\eqref{eqn:reverse_sde_q}, and the {\ode} in Eqn.~\eqref{eqn:score_ode} becomes the probability flow ODE in Eqn.~\eqref{eqn:forward_ode}. However, if $\fv(\x_t,t)$ is linear to $\x_t$ and $q_0(\x_0)$ is not Gaussian, we have $q_T$ is not Gaussian, so $q_T\neq p_T^{\SDE}$ and $q_T\neq p_T^{\ODE}$. Therefore, in this case, we still have $\s_\theta(\cdot,T)=\nabla_{\x}\log q_T\neq \nabla_{\x}\log p_T^{\SDE}$, so the probability flow in Eqn.~\eqref{eqn:reverse_sde_ode} of the {\sde} is still different from the {\ode} in Eqn.~\eqref{eqn:score_ode}, leading to the fact that $p_0^{\SDE}\neq p_0^{\ODE}$. (But the difference is extremely small, because they are both very similar to $q_0$).
\end{remark}

\section{KL divergence and variational gap of {\sde}s}
\label{appendix:variational_gap}
~\cite{song2021maximum} give an upper bound of marginal KL divergence $\kl{q_0}{p_0^{\SDE}}$ of {\sde} by calculating the joint KL divergence $\kl{q_{0:T}}{p_{0:T}^{\SDE}}$ (path measure) using Girsanov Theorem and Itô integrals. The upper bound is:
\begin{equation}
\label{eqn:joint_kl_appendix}
    \kl{q_0}{p_0^{\SDE}}\leq \kl{q_{0:T}}{p_{0:T}^{\SDE}}=\kl{q_T}{p_T^{\SDE}}+\frac{1}{2}\int_0^T g(t)^2\E_{q_t(\x_t)}\left[\|\s_\theta(\x_t,t)-\nabla_{\x}\log q_t(\x_t) \|_2^2\right] \dt
\end{equation}
In this section, we derive the KL divergence $\kl{q_0}{p_0^{\SDE}}$ and the variational gap of the {\sde}. We first propose the KL divergence $\kl{q_0}{p_0^{\SDE}}$ in the following proposition.
\begin{proposition}
\label{prop:kl_sde}
The KL divergence $\kl{q_0}{p_0^{\SDE}}$ of the {\sde} is
\small{
\begin{equation}
\label{variational_gap:marginal_kl}
    \kl{q_0}{p_0^{\SDE}}=\kl{q_T}{p_T^{\SDE}}+\frac{1}{2}\int_0^T g(t)^2\E_{q_t(\x_t)}\left[\|\s_\theta(\x_t,t)-\nabla_{\x}\log q_t(\x_t) \|_2^2-\|\s_\theta(\x_t,t)-\nabla_{\x}\log p_t^{\SDE}(\x_t) \|_2^2\right] \dt
\end{equation}
}
\end{proposition}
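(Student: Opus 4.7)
The plan is to compute $\frac{\dm}{\dm t}\kl{q_t}{p_t^{\SDE}}$ at each $t\in[0,T]$ via the Fokker–Planck equations of the two distributions, and then integrate from $0$ to $T$. This is the same strategy used for the ODE case in Theorem~\ref{thrm:kl-ode}, adapted to the fact that $p_t^{\SDE}$ now carries a genuine diffusion term.

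First I would write down the Fokker–Planck equations. From the forward SDE~\eqref{eqn:forward_sde}, $q_t$ solves $\partial_t q_t=-\nabla\cdot(\fv q_t)+\tfrac12 g(t)^2\Delta q_t$, which I rewrite in probability-current form as $\partial_t q_t=-\nabla\cdot\jv_q$ with $\jv_q=q_t\bigl(\fv-\tfrac12 g(t)^2\nabla\log q_t\bigr)$. For $p_t^{\SDE}$ I use the ``forward'' SDE listed in Table~\ref{tab:SDE_and_ODE} (which is just the Fokker–Planck equation of the parameterized reverse SDE~\eqref{eqn:reverse_sde_p} read forward in time); this yields $\partial_t p_t^{\SDE}=-\nabla\cdot\jv_p$ with $\jv_p=p_t^{\SDE}\bigl(\fv+\tfrac12 g(t)^2\nabla\log p_t^{\SDE}-g(t)^2\s_\theta\bigr)$.

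Next I would differentiate the KL divergence. Using $\int\partial_t q_t\,\dxv=0$ and writing $\rho_t\coloneqq\log q_t-\log p_t^{\SDE}$,
\begin{equation*}
\frac{\dm}{\dm t}\kl{q_t}{p_t^{\SDE}}=\int \partial_t q_t\,\rho_t\,\dxv\;-\;\int \frac{q_t}{p_t^{\SDE}}\,\partial_t p_t^{\SDE}\,\dxv.
\end{equation*}
Substituting the current-form FPEs and integrating by parts (the boundary terms vanish by the exponential-decay part of Assumption~\ref{assumption:all}), and using $\nabla(q_t/p_t^{\SDE})=(q_t/p_t^{\SDE})\nabla\rho_t$, I obtain
\begin{equation*}
\frac{\dm}{\dm t}\kl{q_t}{p_t^{\SDE}}=\int q_t\bigl(\jv_q/q_t-\jv_p/p_t^{\SDE}\bigr)\cdot\nabla\rho_t\,\dxv.
\end{equation*}
The drift terms $\fv$ cancel between the two currents, so with $\av\coloneqq\nabla\log q_t$, $\bv\coloneqq\nabla\log p_t^{\SDE}$, $\cv\coloneqq\s_\theta$ and $\nabla\rho_t=\av-\bv$, the integrand reduces to $g(t)^2 q_t\bigl[-\tfrac12(\av+\bv)+\cv\bigr]\cdot(\av-\bv)$.

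Finally I would expand the inner product algebraically. The identity $-\tfrac12\|\cv-\av\|_2^2+\tfrac12\|\cv-\bv\|_2^2=-\tfrac12\|\av\|_2^2+\tfrac12\|\bv\|_2^2+\cv\cdot\av-\cv\cdot\bv$ lets me recognize the integrand as $\tfrac12 g(t)^2 q_t\bigl(\|\cv-\bv\|_2^2-\|\cv-\av\|_2^2\bigr)$. Integrating from $0$ to $T$ and rearranging the sign via $\kl{q_0}{p_0^{\SDE}}-\kl{q_T}{p_T^{\SDE}}=-\int_0^T \tfrac{\dm}{\dm t}\kl{q_t}{p_t^{\SDE}}\,\dt$ gives exactly Eqn.~\eqref{variational_gap:marginal_kl}.

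The main obstacle is less conceptual than technical: I must justify the integration by parts and the interchange of differentiation and integration, which requires the regularity and decay hypotheses in Assumption~\ref{assumption:all} (parts 6, 7, 9, 11). The algebraic ``completion of squares'' step is the one place where being careful about signs actually matters, since a single sign error would flip the variational gap from negative to positive and lose the consistency with the Girsanov upper bound of Eqn.~\eqref{eqn:joint_kl_appendix}; as a sanity check, the non-negativity of the resulting variational gap $\tfrac12\int_0^T g(t)^2\E_{q_t}\|\s_\theta-\nabla\log p_t^{\SDE}\|_2^2\,\dt$ recovers precisely that bound.
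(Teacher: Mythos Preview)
Your proposal is correct and follows essentially the same route as the paper. The paper also writes both Fokker--Planck equations in continuity-equation form (calling the currents $\hv_q q_t$ and $\hv_{p^{\SDE}}p_t^{\SDE}$, where $\hv_{p^{\SDE}}$ is precisely your $\jv_p/p_t^{\SDE}$), integrates by parts using Assumption~\ref{assumption:all}(11), obtains the same integrand $-\tfrac12 g(t)^2(\nabla\log q_t+\nabla\log p_t^{\SDE}-2\s_\theta)^\top(\nabla\log q_t-\nabla\log p_t^{\SDE})$, and performs the same completion of squares.
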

\begin{proof}
By Eqn.~\eqref{eqn:reverse_sde_ode}, we denote the probability flow ODE of {\sde} as
\begin{equation}
    \frac{\dxv_t}{\dt}=\hv_{p^{\SDE}}(\x_t,t)\coloneqq \fv(\x_t,t)-g(t)^2\sv_\theta(\x_t,t)+\frac{1}{2}g(t)^2\nabla_{\x}\log p_t^{\SDE}(\x_t),
\end{equation}
First we rewrite the KL divergence from $q_0$ to $p_0^{\SDE}$ in an integral form
\begin{align}
    \nonumber
    \kl{q_0}{p_0^{\SDE}}&=\kl{q_T}{p_T^{\SDE}}-\kl{q_T}{p_T^{\SDE}}+\kl{q_0}{p_0^{\SDE}}\\
    &=\kl{q_T}{p_T^{\SDE}}-\int_0^T\frac{\partial \kl{q_t}{p_t^{\SDE}}}{\partial t}\dt\label{thrm_kl_sde:integral:form}
\end{align}
Given a fixed $\x$, by the special case of Fokker-Planck equation with zero diffusion term, we can derive the time-evolution of ODE's associated probability density function by:
\begin{align}
    \frac{\partial q_t(\x)}{\partial t}=-\nabla_{\x}\cdot(\hv_q(\x,t)q_t(\x)),\quad \frac{\partial p_t^{\SDE}(\x)}{\partial t}=-\nabla_{\x}\cdot(\hv_{p^{\SDE}}(\x,t)p_t^{\SDE}(\x))
\end{align}
Then we can expand the time-derivative of $\kl{q_t}{p_t^{\SDE}}$ as
\begin{align}
    \frac{\partial \kl{q_t}{p_t^{\SDE}}}{\partial t}&=\frac{\partial}{\partial t}\int q_t(\x)\log\frac{q_t(\x)}{p_t^{\SDE}(\x)}\dxv \nonumber\\
    &=\int\frac{\partial q_t(\x)}{\partial t}\log\frac{q_t(\x)}{p_t^{\SDE}(\x)}\dxv+\underbrace{\int\frac{\partial q_t(\x)}{\partial t}\dxv}_{=0}-\int\frac{q_t(\x)}{p_t^{\SDE}(\x)}\frac{\partial p_t^{\SDE}(\x)}{\partial t}\dxv \nonumber\\
    &=-\int\nabla_{\x}\cdot(\hv_q(\x,t)q_t(\x))\log\frac{q_t(\x)}{p_t^{\SDE}(\x)}\dxv+\int\frac{q_t(\x)}{p_t^{\SDE}(\x)}\nabla_{\x}\cdot(\hv_{p^{\SDE}}(\x,t)p_t^{\SDE}(\x))\dxv \nonumber\\
    &=\int(\hv_q(\x,t)q_t(\x))^\top\nabla_{\x}\log\frac{q_t(\x)}{p_t^{\SDE}(\x)}\dxv-\int(\hv_{p^{\SDE}}(\x,t)p_t^{\SDE}(\x))^\top\nabla_{\x}\frac{q_t(\x)}{p_t^{\SDE}(\x)}\dxv\label{thrm_kl_sde:parts}\\
    &=\int q_t(\x)\left[\hv_q^\top(\x,t)-\hv_{p^{\SDE}}^\top(\x,t)\right]\left[\nabla_{\x}\log q_t(\x)-\nabla_{\x}\log p_t^{\SDE}(\x)\right]\dxv\nonumber \\
    &=-\frac{1}{2}g(t)^2\E_{q_t(\x)}\Big[\big(\nabla_{\x}\log q_t(\x)+\nabla_{\x}\log p_t^{\SDE}(\x)-2\s_\theta(\x,t)\big)^\top\!\big(\nabla_{\x}\log q_t(\x)-\nabla_{\x}\log p_t^{\SDE}(\x)\big)\Big] \nonumber\\
    &=-\frac{1}{2}g(t)^2\E_{q_t(\x)}\left[\|\s_\theta(\x,t)-\nabla_{\x}\log q_t(\x) \|_2^2-\|\s_\theta(\x,t)-\nabla_{\x}\log p_t^{\SDE}(\x) \|_2^2\right] \nonumber
\end{align}
where Eqn.~\eqref{thrm_kl_sde:parts} is due to integration by parts under the Assumption.~\ref{assumption:all}(11), which shows that $\lim_{\x\rightarrow\infty}\hv_q(\x,t)q_t(\x)=0$ and $\lim_{\x\rightarrow\infty}\hv_{p^{\SDE}}(\x,t)p_t^{\SDE}(\x)=0$ for all $t\in[0,T]$. Combining with Eqn.~\eqref{thrm_kl_sde:integral:form}, we can finish the proof.
\end{proof}

Thus by Eqn.~\eqref{eqn:joint_kl_appendix} and Proposition~\ref{prop:kl_sde}, the expectation of the variational gap for $p_0^{\SDE}$ under data distribution $q_0$ is
\begin{align}
    \nonumber
    \mathrm{Gap}(q_0,p^{\SDE}_0)&\coloneqq \kl{q_{0:T}}{p_{0:T}^{\SDE}}-\kl{q_0}{p_0^{\SDE}}\\
    &=\frac{1}{2}\int_0^T g(t)^2\E_{q_t(\x_t)}\left[\|\s_\theta(\x_t,t)-\nabla_{\x}\log p_t^{\SDE}(\x_t) \|_2^2\right] \dt
\end{align}
Note that this is equivalent to the expectation of the variational gap in~\cite{huang2021variational}. This expression tells us that
\begin{align}
    \mathrm{Gap}(q_0,p_0^{\SDE})=0\Leftrightarrow \forall \x_t\in\R^d, t\in[0,T], \s_\theta(\x_t,t)\equiv\nabla_{\x}\log p_t^{\SDE}(\x_t).
\end{align}
Therefore, in practice, we always have $\mathrm{Gap}(q_0,p_0^{\SDE})>0$ since according to Proposition~\ref{prop:sde_gaussian}, $\mathrm{Gap}(q_0,p_0^{\SDE})=0$ means that $p_t^{\SDE}$ is a Gaussian distribution for all $t\in[0,T]$, which doesn't match real data.

Besides, we should notice that the variational gap $\mathrm{Gap}(q_0,p_0^{\SDE})$ is not necessarily related to the KL divergence of {\ode}, because in practice $p_0^{\SDE}\neq p_0^{\ODE}$.

\section{Instantaneous change of score function of ODEs}
\label{appendix:change_of_score}
\begin{theorem}
\label{thrm:change_of_score}
(Instantaneous Change of Score Function). Let $\z(t)\in\R^{d}$ be a finite continuous random variable with probability $p(\z(t),t)$ describing by an ordinary differential equation $\frac{\dzv(t)}{\dt}=\fv(\z(t),t)$. Assume that $\fv\in C^3(\R^{d+1})$, then for each time $t$, the score function $\nabla_{\z}\log p(\z(t),t)$ follows a differential equation:
\begin{equation}
\begin{split}
    \frac{\mathrm{d}\nabla_{\z}\log p(\z(t),t)}{\dt}=&-\nabla_{\z}\left(\tr(\nabla_{\z}\fv(\z(t),t))\right)\\
    &-\left(\nabla_{\z}\fv(\z(t),t)\right)^\top\!\nabla_{\z}\log p(\z(t),t)
\end{split}
\end{equation}
\end{theorem}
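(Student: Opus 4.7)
\begin{proof-sketch}
The plan is to derive an evolution equation for $\nabla_{\z}\log p(\z,t)$ viewed as a function of $(\z,t)$, and then convert the partial time derivative into the total (material) derivative along the ODE trajectory $\z(t)$.

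First, I would invoke the continuity (Liouville) equation for the density transported by the ODE $\dot{\z}=\fv(\z,t)$, namely
\begin{equation*}
\frac{\partial p(\z,t)}{\partial t} = -\nabla_{\z}\cdot\!\bigl(p(\z,t)\fv(\z,t)\bigr) = -p(\z,t)\,\nabla_{\z}\!\cdot\fv(\z,t) - \fv(\z,t)^\top\nabla_{\z} p(\z,t).
\end{equation*}
Dividing by $p(\z,t)$ (which is positive by assumption) yields the pointwise identity
\begin{equation*}
\frac{\partial \log p(\z,t)}{\partial t} = -\tr\bigl(\nabla_{\z}\fv(\z,t)\bigr) - \fv(\z,t)^\top\nabla_{\z}\log p(\z,t).
\end{equation*}

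Next, I would take the gradient in $\z$ of both sides. The first term on the right becomes $-\nabla_{\z}\tr(\nabla_{\z}\fv(\z,t))$ directly. For the second term I would apply the product rule for the gradient of an inner product, using that $\nabla_{\z}^2\log p$ is symmetric:
\begin{equation*}
\nabla_{\z}\!\bigl(\fv^\top\nabla_{\z}\log p\bigr) = (\nabla_{\z}\fv)^\top\nabla_{\z}\log p + \bigl(\nabla_{\z}^2\log p\bigr)\fv.
\end{equation*}
This gives
\begin{equation*}
\frac{\partial \nabla_{\z}\log p(\z,t)}{\partial t} = -\nabla_{\z}\tr(\nabla_{\z}\fv(\z,t)) - (\nabla_{\z}\fv(\z,t))^\top\nabla_{\z}\log p(\z,t) - (\nabla_{\z}^2\log p(\z,t))\fv(\z,t).
\end{equation*}
The $C^3$ assumption on $\fv$ and the implied smoothness of $p$ ensure the mixed derivatives commute so that this interchange of $\partial_t$ and $\nabla_{\z}$ is legitimate.

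Finally, I would convert to the total derivative along $\z(t)$. By the chain rule,
\begin{equation*}
\frac{\dm\,\nabla_{\z}\log p(\z(t),t)}{\dt} = \frac{\partial \nabla_{\z}\log p}{\partial t}\bigg|_{(\z(t),t)} + \bigl(\nabla_{\z}^2\log p(\z(t),t)\bigr)\frac{\dzv(t)}{\dt},
\end{equation*}
and since $\dot{\z}(t)=\fv(\z(t),t)$, the Hessian-times-velocity terms cancel exactly, producing the claimed identity. The only mildly delicate step is the product-rule expansion and tracking that $(\nabla_{\z}^2\log p)\fv$ cancels between the Eulerian and convective pieces; everything else is routine.
\end{proof-sketch}
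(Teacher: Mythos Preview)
Your proposal is correct and follows essentially the same approach as the paper: both derive the continuity equation, divide by $p$ to obtain $\partial_t\log p$, take the spatial gradient, and then pass to the total derivative along the flow so that the $(\nabla_{\z}^2\log p)\fv$ term cancels. The only cosmetic difference is that you expand the product rule for $\nabla_{\z}(\fv^\top\nabla_{\z}\log p)$ explicitly before forming the material derivative, whereas the paper leaves that term unexpanded and performs the cancellation in the final line.
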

\begin{proof}
Given a fixed $\x$, by the special case of Fokker-Planck equation with zero diffusion term, we have
\begin{align}
    \frac{\partial p(\z,t)}{\partial t}&=-\nabla_{\z}\cdot(\fv(\z,t)p(\z,t))
\end{align}
So
\begin{align}
    \frac{\partial\log p(\z,t)}{\partial t}&=\frac{1}{p(\z,t)}\frac{\partial p(\z,t)}{\partial t}\\
    &=-\nabla_{\z}\cdot \fv(\z,t)-\fv(\z,t)^\top\nabla_{\z}\log p(\z,t)
\end{align}
and
\begin{align}
    \frac{\partial\nabla_{\z}\log p(\z,t)}{\partial t}&=\nabla_{\z}\frac{\partial\log p(\z,t)}{\partial t}\\
    &=-\nabla_{\z}\left(\nabla_{\z}\cdot \fv(\z,t)\right)-\nabla_{\z}\left(\fv(\z,t)^\top\nabla_{\z}\log p(\z,t)\right)
\end{align}
Then assume $\z(t)$ follows the trajectory of ODE $\frac{\dm\z(t)}{\dt}=\fv(\z(t),t)$, the total derivative of score function $\nabla_{\z}\log p(\z(t),t)$ w.r.t. $t$ is
\begin{align}
    \frac{\dm \nabla_{\z}\log p(\z(t),t)}{\dm t}&=\frac{\partial \nabla_{\z}\log p(\z(t),t)}{\partial \z}\frac{\dm\z(t)}{\dm t}+\frac{\partial \nabla_{\z}\log p(\z(t),t)}{\partial t}\\
    &=\nabla^2_{\z}\log p(\z(t),t)\fv(\z(t),t)-\nabla_{\z}\left(\nabla_{\z}\cdot \fv(\z(t),t)\right)-\nabla_{\z}\Big(\fv(\z(t),t)^\top\nabla_{\z}\log p(\z(t),t)\Big)\\
    &=-\nabla_{\z}\left(\nabla_{\z}\cdot \fv(\z(t),t)\right)-\left(\nabla_{\z}\fv(\z(t),t)\right)^\top\nabla_{\z}\log p(\z(t),t)
\end{align}
\end{proof}

By Theorem~\ref{thrm:change_of_score}, we can compute $\nabla_{\x}\log p^{\ODE}_0(\x_0)$ by firstly solving the forward ODE $\frac{\dxv_t}{\dt}=\hv_p(\x_t,t)$ from $0$ to $T$ to get $\x_T$, then solving a reverse ODE of $(\x_t, \nabla_{\x}\log p^{\ODE}_t(\x_t))$ with the initial value $(\x_T,\nabla_{\x}\log p^{\ODE}_T(\x_T))$ from $T$ to $0$.

\section{KL divergence of {\ode}s}
In this section, we propose the proofs for Sec.~\ref{sec:ode_likelihood}.
\subsection{Proof of Theorem~\ref{thrm:kl-ode}}
\label{appendix:thrm_kl_ode}
\begin{proof}
First we rewrite the KL divergence from $q_0$ to $p_0^{\ODE}$ in an integral form
\begin{align}
    \kl{q_0}{p_0^{\ODE}}&=\kl{q_T}{p_T^{\ODE}}-\kl{q_T}{p_T^{\ODE}}+\kl{q_0}{p_0^{\ODE}}\\
    &=\kl{q_T}{p_T^{\ODE}}-\int_0^T\frac{\partial \kl{q_t}{p_t^{\ODE}}}{\partial t}\dt.\label{thrm_kl_ode:integral:form}
\end{align}
Given a fixed $\x$, by the special case of Fokker-Planck equation with zero diffusion term, we can derive the time-evolution of ODE's associated probability density function by:
\begin{align}
    \frac{\partial q_t(\x)}{\partial t}=-\nabla_{\x}\cdot(\hv_q(\x,t)q_t(\x)),\quad \frac{\partial p_t^{\ODE}(\x)}{\partial t}=-\nabla_{\x}\cdot(\hv_p(\x,t)p_t^{\ODE}(\x)).
\end{align}
Then we can expand the time-derivative of $\kl{q_t}{p_t^{\ODE}}$ as
\begin{align}
    \frac{\partial \kl{q_t}{p_t^{\ODE}}}{\partial t}&=\frac{\partial}{\partial t}\int q_t(\x)\log\frac{q_t(\x)}{p_t^{\ODE}(\x)}\dxv\\
    &=\int\frac{\partial q_t(\x)}{\partial t}\log\frac{q_t(\x)}{p_t^{\ODE}(\x)}\dxv+\underbrace{\int\frac{\partial q_t(\x)}{\partial t}\dxv}_{=0}-\int\frac{q_t(\x)}{p_t^{\ODE}(\x)}\frac{\partial p_t^{\ODE}(\x)}{\partial t}\dxv\\
    &=-\int\nabla_{\x}\cdot(\hv_q(\x,t)q_t(\x))\log\frac{q_t(\x)}{p_t^{\ODE}(\x)}\dxv+\int\frac{q_t(\x)}{p_t^{\ODE}(\x)}\nabla_{\x}\cdot(\hv_p(\x,t)p_t^{\ODE}(\x))\dxv\\
    &=\int(\hv_q(\x,t)q_t(\x))^\top\nabla_{\x}\log\frac{q_t(\x)}{p_t^{\ODE}(\x)}\dxv-\int(\hv_p(\x,t)p_t^{\ODE}(\x))^\top\nabla_{\x}\frac{q_t(\x)}{p_t^{\ODE}(\x)}\dxv\label{thrm_kl_ode:parts}\\
    &=\int q_t(\x)\left[\hv_q^\top(\x,t)-\hv_p^\top(\x,t)\right]\left[\nabla_{\x}\log q_t(\x)-\nabla_{\x}\log p_t^{\ODE}(\x)\right]\dxv\\
    &=-\frac{1}{2}g(t)^2\E_{q_t(\x)}\left[\left(\s_\theta(\x,t)-\nabla_{\x}\log q_t(\x)\right)^\top\left(\nabla_{\x}\log p_t^{\ODE}(\x)-\nabla_{\x}\log q_t(\x)\right)\right],
\end{align}
where Eqn.~\eqref{thrm_kl_ode:parts} is due to integration by parts under the Assumption.~\ref{assumption:all}(11), which shows that $\lim_{\x\rightarrow\infty}\hv_q(\x,t)q_t(\x)=0$ and $\lim_{\x\rightarrow\infty}\hv_{p}(\x,t)p_t^{\ODE}(\x)=0$ for all $t\in[0,T]$. Combining with Eqn.~\eqref{thrm_kl_ode:integral:form}, we can conclude that
\begin{equation}
\begin{aligned}
    &\kl{q_0}{p_0^{\ODE}}\\
    &=\kl{q_T}{p_T^{\ODE}}+\frac{1}{2}\int_0^T g(t)^2\E_{q_t(\x_t)}\left[\left(\s_\theta(\x_t,t)-\nabla\log q_t(\x_t)\right)^\top\left(\nabla\log p_t^{\ODE}(\x_t)-\nabla\log q_t(\x_t)\right)\right]\dt\\
    &=\kl{q_T}{p_T^{\ODE}}+\Jc_{\ODE}(\theta)\\
    &=\kl{q_T}{p^{\ODE}_T}+\Jc_{\SM}(\theta)+\Jc_{\DIFF}(\theta).
\end{aligned}
\end{equation}
\end{proof}

\subsection{Proof of Theorem~\ref{thrm:fisher_bound}}
\label{appendix:thrm_fisher_bound}
Firstly, we propose a Lemma for computing $\nabla\log p_t^{\ODE}(\x_t)-\nabla\log q_t(\x_t)$ with  the trajectory of $\frac{\dxv_t}{\dt}=\hv_q(\x_t,t)$.
\begin{lemma}
\label{lemma:ode_score_another_trajectory}
Assume $\frac{\dxv_t}{\dt}=\hv_q(\x_t,t)$, $p_t^{\ODE}$ and $q_t$ are defined as the same in Sec.~\ref{sec:background}. We have
\begin{equation}
\begin{aligned}
    \frac{\dm(\nabla\log p^{\ODE}_t-\nabla\log q_t)}{\dt}&=-\left(\nabla\tr(\nabla\hv_{p}(\x_t,t))-\nabla\tr(\nabla\hv_{q}(\x_t,t))\right)\\
    &\quad-\left(\nabla\hv_p(\x_t,t)^\top\nabla\log p^{\ODE}_t(\x_t)-\nabla\hv_q(\x_t,t)^\top\nabla\log q_t(\x_t)\right)\\
    &\quad-\nabla^2\log p^{\ODE}_t(\x_t)\big(\hv_p(\x_t,t)-\hv_q(\x_t,t)\big)
\end{aligned}
\end{equation}
\end{lemma}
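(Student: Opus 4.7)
The plan is to apply Theorem~\ref{thrm:change_of_score} for $q_t$ directly, but to derive the corresponding evolution for $\nabla\log p^{\ODE}_t$ from scratch since here it is being transported along the \emph{wrong} velocity field $\hv_q$ (the probability flow of $p^{\ODE}$ is $\hv_p$, not $\hv_q$). Subtracting the two evolutions will produce the three terms in the claim.

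First I would handle $q_t$: since $\x_t$ follows $\frac{\dxv_t}{\dt}=\hv_q(\x_t,t)$ and $q_t$ is precisely the marginal density of this ODE, Theorem~\ref{thrm:change_of_score} applies verbatim and yields
\begin{equation*}
\frac{\dm\,\nabla\log q_t(\x_t)}{\dt}=-\nabla\tr(\nabla\hv_q(\x_t,t))-(\nabla\hv_q(\x_t,t))^\top\nabla\log q_t(\x_t).
\end{equation*}

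Second, and this is where the work lies, I would derive the evolution of $\nabla\log p^{\ODE}_t(\x_t)$ along the trajectory of $\hv_q$. The Fokker--Planck equation for $p^{\ODE}_t$ (with zero diffusion) gives $\partial_t p^{\ODE}_t=-\nabla\cdot(\hv_p\,p^{\ODE}_t)$, whence $\partial_t\log p^{\ODE}_t=-\nabla\cdot\hv_p-\hv_p^\top\nabla\log p^{\ODE}_t$. Taking $\nabla$ and using the product rule $\nabla(\hv_p^\top\nabla\log p^{\ODE}_t)=(\nabla\hv_p)^\top\nabla\log p^{\ODE}_t+\nabla^2\log p^{\ODE}_t\cdot\hv_p$ produces the partial time derivative of $\nabla\log p^{\ODE}_t$. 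Then the total derivative along $\frac{\dxv_t}{\dt}=\hv_q$ is the partial plus $\nabla^2\log p^{\ODE}_t\cdot\hv_q$, and the $\hv_p$ and $\hv_q$ Hessian contributions combine into the single term $-\nabla^2\log p^{\ODE}_t(\hv_p-\hv_q)$:
\begin{equation*}
\frac{\dm\,\nabla\log p^{\ODE}_t(\x_t)}{\dt}=-\nabla\tr(\nabla\hv_p)-(\nabla\hv_p)^\top\nabla\log p^{\ODE}_t-\nabla^2\log p^{\ODE}_t\,(\hv_p-\hv_q).
\end{equation*}

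Finally, subtracting the two displayed equations immediately gives the claimed identity. The only potential pitfall is keeping the transpose/Jacobian conventions consistent when differentiating $\hv_p^\top\nabla\log p^{\ODE}_t$ (and remembering that $\nabla\cdot\hv=\tr(\nabla\hv)$); once that bookkeeping is done, the result is a short computation. I do not need to invoke any regularity beyond what Assumption~\ref{assumption:all} already grants for interchanging $\nabla$ and $\partial_t$ and for the validity of Fokker--Planck for $p^{\ODE}_t$.
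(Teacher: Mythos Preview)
Your proposal is correct and follows essentially the same approach as the paper: derive $\frac{\dm}{\dt}\nabla\log p^{\ODE}_t(\x_t)$ along the $\hv_q$-flow from the Fokker--Planck equation for $p^{\ODE}_t$ (picking up the extra Hessian term $-\nabla^2\log p^{\ODE}_t(\hv_p-\hv_q)$ from the mismatch between transport field and density), obtain $\frac{\dm}{\dt}\nabla\log q_t(\x_t)$ by the same computation (the paper redoes it in one line, you cite Theorem~\ref{thrm:change_of_score}), and subtract. The bookkeeping you flag about $\nabla(\hv_p^\top\nabla\log p^{\ODE}_t)$ is exactly the step the paper handles, so there is no gap.
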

\begin{proof}
(Proof of Lemma~\ref{lemma:ode_score_another_trajectory})

Given a fixed $\x$, by the special case of Fokker-Planck equation with zero diffusion term, we have
\begin{align}
    \frac{\partial p_t^{\ODE}(\x)}{\partial t}&=-\nabla\cdot(\hv_p(\x_t,t)p_t^{\ODE}(\x))
\end{align}
So
\begin{align}
    \frac{\partial\log p_t^{\ODE}(\x)}{\partial t}&=\frac{1}{p_t^{\ODE}(\x)}\frac{\partial p_t^{\ODE}(\x)}{\partial t}\\
    &=-\nabla\cdot \hv_p(\x_t,t)-\hv_p(\x_t,t)^\top\nabla\log p_t^{\ODE}(\x)
\end{align}
and
\begin{align}
    \frac{\partial\nabla\log p_t^{\ODE}(\x)}{\partial t}&=\nabla\frac{\partial\log p_t^{\ODE}(\x)}{\partial t}\\
    &=-\nabla\left(\nabla\cdot \hv_p(\x_t,t)\right)-\nabla\left(\hv_p(\x_t,t)^\top\nabla\log p_t^{\ODE}(\x)\right)
\end{align}
As $\x_t$ follows the trajectory of $\frac{\dm\x_t}{\dt}=\hv_q(\x_t,t)$, the total derivative of score function $\nabla\log p_t^{\ODE}(\x_t)$ w.r.t. $t$ is
\begin{align}
    \frac{\dm \nabla\log p_t^{\ODE}(\x_t)}{\dm t}&=\frac{\partial \nabla\log p_t^{\ODE}(\x_t)}{\partial \x_t}\frac{\dm\x_t}{\dm t}+\frac{\partial \nabla\log p_t^{\ODE}(\x_t)}{\partial t}\\
    &=\nabla^2\log p_t^{\ODE}(\x_t)\hv_q(\x_t,t)-\nabla\left(\nabla\cdot \hv_p(\x_t,t)\right)-\nabla\left(\hv_p(\x_t,t)^\top\nabla\log p_t^{\ODE}(\x)\right)\\
    &=-\nabla\left(\nabla\cdot \hv_p(\x_t,t)\right)-\nabla\hv_p(\x_t,t)^\top\nabla\log p^{\ODE}_t(\x_t)-\nabla^2\log p^{\ODE}_t(\x_t)\left(\hv_p(\x_t,t)-\hv_q(\x_t,t)\right)
\end{align}
Similarly, for $\nabla\log q_t(\x_t)$ we have
\begin{align}
    \frac{\dm \nabla\log q_t(\x_t)}{\dm t}&=-\nabla\left(\nabla\cdot \hv_q(\x_t,t)\right)-\nabla\hv_q(\x_t,t)^\top\nabla\log q_t(\x_t)
\end{align}
Therefore, by combining $\frac{\dm \nabla\log p_t^{\ODE}(\x_t)}{\dm t}$ and $\frac{\dm \nabla\log q_t(\x_t)}{\dm t}$, we can derive the conclusion.
\end{proof}

Then we prove Theorem~\ref{thrm:fisher_bound} below.
\begin{proof}
(Proof of Theorem~\ref{thrm:fisher_bound}.)

Firstly, we have
\begin{align}
    \hv_p(\x_t,t)-\hv_q(\x_t,t)=-\frac{1}{2}g(t)^2\big(\s_\theta(\x_t,t)-\nabla\log q_t(\x_t)\big)
\end{align}
So we have
\begin{align}
    &\|\hv_p(\x_t,t)-\hv_q(\x_t,t)\|_2\leq \frac{1}{2}g(t)^2\delta_1,\\
    &\|\nabla\hv_p(\x_t,t)-\nabla\hv_q(\x_t,t)\|_2\leq \|\nabla\hv_p(\x_t,t)-\nabla\hv_q(\x_t,t)\|_F\leq\frac{1}{2}g(t)^2\delta_2,\\
    &\|\nabla\tr(\nabla\hv_p(\x_t,t))-\nabla\tr(\nabla\hv_q(\x_t,t))\|_2\leq \frac{1}{2}g(t)^2\delta_3,
\end{align}
where $\|\cdot\|_2$ is the $L_2$-norm for vectors and the induced $2$-norm for matrices.

Assume that $\frac{\dxv_t}{\dt}=\hv_q(\x_t,t)$, by Lemma~\ref{lemma:ode_score_another_trajectory}, we have
\begin{align}
    \nabla\log p_t^{\ODE}(\x_t)-&\nabla\log q_t(\x_t)=\nabla\log p^{\ODE}_T(\x_T)-\nabla\log q_T(\x_T)\\
    &\ +\int_t^T \Big(\nabla\tr(\nabla\hv_{p}(\x_s,s))-\nabla\tr(\nabla\hv_{q}(\x_s,s))\Big)\dm s\\
    &\ +\int_t^T \Big(\nabla\hv_p(\x_s,s)^\top\nabla\log p^{\ODE}_s(\x_s)-\nabla\hv_q(\x_s,s)^\top\nabla\log q_s(\x_s)\Big)\dm s\\
    &\ +\int_t^T \nabla^2\log p^{\ODE}_s(\x_s)\big(\hv_p(\x_s,s)-\hv_q(\x_s,s)\big)\dm s
\end{align}
For simplicity, we denote $\hv_p(s)\coloneqq \hv_p(\x_s,s)$, $\hv_q(s)\coloneqq \hv_q(\x_s,s)$, $p_s\coloneqq p_s^{\ODE}(\x_s)$ and $q_s\coloneqq q_s(\x_s)$.
We use $\|\cdot\|$ to denote the 2-norm for vectors and matrices. As
\begin{align}
    \nabla\hv_p(s)^\top\nabla\log p_s-&\nabla\hv_q(s)^\top\nabla\log q_s=\big(\nabla\hv_p(s)-\nabla\hv_q(s)\big)^\top\big(\nabla\log p_s-\nabla\log q_s\big)\\
    &+\nabla\hv_q(s)^\top\big(\nabla\log p_s-\nabla\log q_s\big)+\big(\nabla\hv_p(s)-\nabla\hv_q(\s)\big)^\top\nabla\log q_s
\end{align}
We have
\begin{align}
    \|\nabla\log p_t-&\nabla\log q_t\|\leq \|\nabla\log p_T-\nabla\log q_T\|+\int_t^T \big\|\nabla\tr(\nabla\hv_{p}(s))-\nabla\tr(\nabla\hv_{q}(s))\big\|\dm s\\
    &\ +\int_t^T \left(\big\|\nabla\hv_p(s)-\nabla\hv_q(s)\big\|+\big\|\nabla\hv_q(s)\big\|\right)\cdot\big\|\nabla\log p_s-\nabla\log q_s\big\|\dm s\\
    &\ +\int_t^T \big\|\nabla\hv_p(s)-\nabla\hv_q(s)\big\|\cdot\big\|\nabla\log q_s\big\|\dm s+\int_t^T \|\nabla^2\log p_s\|\cdot\left\|\hv_p(s)-\hv_q(s)\right\|\dm s\\
    \leq\ &\|\nabla\log p_T-\nabla\log q_T\|+\frac{1}{2}\int_t^T g(s)^2\Big(\delta_3+\delta_2\|\nabla\log q_s\|+\delta_1 C\Big)\dm s\\
    &\ +\int_t^T \left(\frac{\delta_2}{2}g(s)^2+\|\nabla\hv_q(s)\|\right)\|\nabla\log p_s-\nabla\log q_s\|\dm s
\end{align}
Denote
\begin{align}
    u(t)&\coloneqq \|\nabla\log p_t-\nabla\log q_t\|\\
    \alpha(t)&\coloneqq \|\nabla\log p_T-\nabla\log q_T\|+\frac{1}{2}\int_t^T g(s)^2\Big(\delta_3+\delta_2\|\nabla\log q_s\|+\delta_1 C\Big)\dm s\\
    \beta(t)&\coloneqq\frac{\delta_2}{2}g(t)^2+\|\nabla\hv_q(t)\|
\end{align}
Then $\alpha(t)\geq 0, \beta(t)\geq 0$ are independent of $\theta$, and we have
\begin{align}
    u(t)\leq \alpha(t)+\int_t^T\beta(s)u(s)\dm s.
\end{align}
By Grönwall's inequality, we have
\begin{align}
    u(t)\leq \alpha(t)+\int_t^T\alpha(s)\beta(s)\exp\left(\int_t^s\beta(r)\dm r\right)\dm s,
\end{align}
and therefore,
\begin{align}
    \fisher{q_t}{p_t}=\E_{q_t}[u(t)^2]\leq \E_{q_t}\left[\left(\alpha(t)+\int_t^T\alpha(s)\beta(s)\exp\left(\int_t^s\beta(r)\dm r\right)\dm s\right)^2\right],
\end{align}
where the r.h.s. is only dependent on $\delta_1,\delta_2,\delta_3$ and $C$, with some constants that are only dependent on the forward process $q$. Furthermore, as r.h.s. is an increasing function of $\alpha(t)\geq 0$ and $\beta(t)\geq 0$, and $\alpha(t),\beta(t)$ are increasing functions of $\delta_1,\delta_2,\delta_3,C$, so we can minimize $\delta_1,\delta_2,\delta_3$ to minimize an upper bound of $\fisher{q_t}{p_t}$.%
\end{proof}

\subsection{Maximum likelihood training of {\ode} by ODE solvers}
\label{appendix:mle_scoreode_by_odesolver}
By Eqn.~\eqref{eqn:change_of_variable_score_ode}, we have
\begin{equation}
    \log p_0^{\ODE}(\x_0)=\log p_T^{\ODE}(\x_T)+\int_0^T\tr(\nabla_{\x}\hv_p(\x_t,t))\dt,
\end{equation}
where $\frac{\dxv_t}{\dt}=\hv_p(\x_t,t)$.
So minimizing $\kl{q_0}{p_0^{\ODE}}$ is equivalent to maximizing
\begin{align}
    \E_{q_0(\x_0)}\left[\log p_0^{\ODE}(\x_0)\right]=\E_{q_0(\x_0)}\left[\log p_T^{\ODE}(\x_T)+\int_0^T\tr(\nabla_{\x}\hv_p(\x_t,t))\dt\right].
\end{align}
We can directly call ODE solvers to compute $\log p_0^{\ODE}(\x_0)$ for a given data point $\x_0$, and thus do maximum likelihood training~\cite{chen2018neural, grathwohl2018ffjord}. However, this needs to be done at every optimization step, which is hard to scale up for the large neural networks used in SGMs. For example, it takes $2\sim 3$ minutes for evaluating $\log p^{\ODE}_0$ for a single batch of the {\ode} used in~\cite{song2020score}. Besides, directly maximum likelihood training for $p^{\ODE}_0$ cannot make sure that the optimal solution for $\s_\theta(\x_t,t)$ is the data score function $\nabla_{\x}\log q_t(\x_t)$, because the directly MLE for $p_0^{\ODE}$ cannot ensure that the model distribution $p_t^{\ODE}$ is also similar to the data distribution $q_t$ at each time $t\in(0,T)$, thus cannot ensure the model ODE function $\hv_p(\x_t,t)$ is similar to the data ODE function $\hv_q(\x_t,t)$. In fact, {\ode} is a special formulation of Neural ODEs~\cite{chen2018neural}. Empirically, \citet{finlay2020train} find that directly maximum likelihood training of Neural ODEs may cause rather complex dynamics ($\hv_p(\x_t,t)$ here), which indicates that directly maximum likelihood training by ODE solvers cannot ensure $\hv_p(\x_t,t)$ be similar to $\hv_q(\x_t,t)$, and thus cannot ensure the score model $\s_\theta(\x_t,t)$ be similar to the data score function $\nabla_{\x}\log q_t(\x_t)$.

\section{Error-bounded High-Order Denoising Score Matching}
\label{appendix:proof_high_order_DSM}
In this section, we present all the lemmas and proofs for our proposed error-bounded high-order DSM algorithm.

Below we propose an expectation formulation of the first-order score function.
\begin{lemma}
\label{lemma:first}
Assume $(\x_t,\x_0)\sim q(\x_t,\x_0)$, we have
\begin{align}
    \nabla_{\x_t}\log q_t(\x_t)=\E_{q_{t0}(\x_0|\x_t)}\left[\nabla_{\x_t}\log q_{0t}(\x_t|\x_0)\right]
\end{align}
\end{lemma}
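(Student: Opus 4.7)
The plan is to derive the identity by starting from the marginalization $q_t(\x_t) = \int q_{0t}(\x_t|\x_0) q_0(\x_0)\,\dm\x_0$ and differentiating through the integral. Under the regularity conditions in Appendix~\ref{appendix:assumptions} (in particular the smoothness of $q_0$ and the Gaussian form of $q_{0t}(\x_t|\x_0)$), differentiation and integration can be interchanged, giving
\[
\nabla_{\x_t} q_t(\x_t) = \int \nabla_{\x_t} q_{0t}(\x_t|\x_0)\, q_0(\x_0)\,\dm\x_0.
\]

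Next I would use the log-derivative trick $\nabla_{\x_t} q_{0t}(\x_t|\x_0) = q_{0t}(\x_t|\x_0)\,\nabla_{\x_t}\log q_{0t}(\x_t|\x_0)$ to rewrite the integrand, then divide both sides by $q_t(\x_t)$:
\[
\nabla_{\x_t}\log q_t(\x_t) = \int \frac{q_{0t}(\x_t|\x_0)\, q_0(\x_0)}{q_t(\x_t)}\,\nabla_{\x_t}\log q_{0t}(\x_t|\x_0)\,\dm\x_0.
\]

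Finally I would apply Bayes' rule $q_{t0}(\x_0|\x_t) = q_{0t}(\x_t|\x_0)\, q_0(\x_0)/q_t(\x_t)$ to recognize the weight as the posterior, so the integral becomes the expectation $\E_{q_{t0}(\x_0|\x_t)}[\nabla_{\x_t}\log q_{0t}(\x_t|\x_0)]$, which is the claim.

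There is no serious obstacle here; the only subtle point is justifying the exchange of differentiation and integration, which follows from the explicit Gaussian form $q(\x_t|\x_0) = \N(\x_t|\alpha_t\x_0,\sigma_t^2\Iv)$ (so the integrand and its gradient admit integrable dominating functions via the finite second moment of $q_0$ assumed in Assumption~\ref{assumption:all}). Thus dominated convergence applies and the calculation above goes through verbatim.
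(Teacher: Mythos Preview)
Your proof is correct and follows essentially the same route as the paper: differentiate the marginal $q_t(\x_t)=\int q_{0t}(\x_t|\x_0)q_0(\x_0)\,\dm\x_0$, apply the log-derivative trick, divide by $q_t(\x_t)$, and identify the posterior via Bayes' rule. If anything, you are slightly more careful than the paper in justifying the interchange of differentiation and integration.
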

\begin{proof}
(Proof of Lemma~\ref{lemma:first})

We use $\nabla(\cdot)$ to denote the derivative of $\x_t$, namely $\nabla_{\x_t}(\cdot)$.
\begin{align}
    \nabla\log q_t(\x_t)&=\frac{\nabla q_t(\x_t)}{q_t(\x_t)}\\
    &=\frac{\nabla\int q_{0t}(\x_t|\x_0)q_0(\x_0)\dxv_0}{q_t(\x_t)}\\
    &=\int\frac{q_0(\x_0)q_{0t}(\x_t|\x_0)}{q_t(\x_t)}\frac{\nabla q_{0t}(\x_t|\x_0)}{q_{0t}(\x_t|\x_0)}\dxv_0\\
    &=\int q_{t0}(\x_0|\x_t)\nabla\log q_{0t}(\x_t|\x_0)\dxv_0\\
    &=\E_{q_{t0}(\x_0|\x_t)}\left[\nabla\log q_{0t}(\x_t|\x_0)\right]
\end{align}
\end{proof}
And below we propose an expectation formulation for the second-order score function.
\begin{lemma}
\label{lemma:second}
Assume $(\x_t,\x_0)\sim q(\x_t,\x_0)$, we have
\begin{equation}
\begin{aligned}
    &\nabla^2_{\x_t}\log q_t(\x_t)\\
    &=\E_{q_{t0}(\x_0|\x_t)}\Big[\nabla^2_{\x_t}\log q_{0t}(\x_t|\x_0)+\big(\nabla_{\x_t}\log q_{0t}(\x_t|\x_0)-\nabla_{\x_t}\log q_t(\x_t)\big)\big(\nabla_{\x_t}\log q_{0t}(\x_t|\x_0)-\nabla_{\x_t}\log q_t(\x_t)\big)^\top\Big]
\end{aligned}
\end{equation}
and
\begin{align}
    \tr(\nabla^2_{\x_t}\log q_t(\x_t))&=\E_{q_{t0}(\x_0|\x_t)}\Big[\tr(\nabla_{\x_t}^2\log q_{0t}(\x_t|\x_0))+\big\|\nabla_{\x_t}\log q_{0t}(\x_t|\x_0)-\nabla_{\x_t}\log q_t(\x_t)\big\|_2^2\Big]
\end{align}
\end{lemma}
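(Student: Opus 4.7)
The plan is to differentiate the identity from \cref{lemma:first} once more with respect to $\x_t$, using Bayes' rule to re-express the posterior derivative in a form that reveals the covariance structure.

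First, I would start from the conclusion of \cref{lemma:first} written in integral form,
\begin{equation*}
\nabla_{\x_t}\log q_t(\x_t)=\int q_{t0}(\x_0|\x_t)\,\nabla_{\x_t}\log q_{0t}(\x_t|\x_0)\,\dxv_0,
\end{equation*}
and apply $\nabla_{\x_t}$ componentwise to both sides, differentiating under the integral sign. The product rule gives two contributions: a term where the derivative hits $q_{t0}(\x_0|\x_t)$, producing $\nabla_{\x_t}q_{t0}(\x_0|\x_t)\,\big(\nabla_{\x_t}\log q_{0t}(\x_t|\x_0)\big)^\top$, and a term where it hits the score, producing $q_{t0}(\x_0|\x_t)\,\nabla_{\x_t}^2\log q_{0t}(\x_t|\x_0)$.

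Second, I would rewrite $\nabla_{\x_t}q_{t0}(\x_0|\x_t)=q_{t0}(\x_0|\x_t)\,\nabla_{\x_t}\log q_{t0}(\x_0|\x_t)$, and then use Bayes' rule $q_{t0}(\x_0|\x_t)=q_{0t}(\x_t|\x_0)q_0(\x_0)/q_t(\x_t)$, from which
\begin{equation*}
\nabla_{\x_t}\log q_{t0}(\x_0|\x_t)=\nabla_{\x_t}\log q_{0t}(\x_t|\x_0)-\nabla_{\x_t}\log q_t(\x_t).
\end{equation*}
Substituting this back produces the asymmetric-looking identity
\begin{equation*}
\nabla_{\x_t}^2\log q_t(\x_t)=\E_{q_{t0}(\x_0|\x_t)}\!\Big[\nabla_{\x_t}^2\log q_{0t}(\x_t|\x_0)+\big(\nabla_{\x_t}\log q_{0t}(\x_t|\x_0)-\nabla_{\x_t}\log q_t(\x_t)\big)\big(\nabla_{\x_t}\log q_{0t}(\x_t|\x_0)\big)^\top\Big].
\end{equation*}

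Third, I would symmetrize the outer product. Writing $S_{0t}:=\nabla_{\x_t}\log q_{0t}(\x_t|\x_0)$ and $S_t:=\nabla_{\x_t}\log q_t(\x_t)$, expanding $(S_{0t}-S_t)(S_{0t}-S_t)^\top$ and taking expectation under $q_{t0}(\x_0|\x_t)$ yields $\E[S_{0t}S_{0t}^\top]-S_t S_t^\top$, because \cref{lemma:first} gives $\E[S_{0t}]=S_t$, so the two cross terms each equal $S_t S_t^\top$. The asymmetric expression from the previous step evaluates to exactly the same thing, so one can replace $S_{0t}(S_{0t}-S_t)^\top$ inside the expectation by the symmetric $(S_{0t}-S_t)(S_{0t}-S_t)^\top$, completing the first identity.

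Finally, the trace formula follows by taking $\tr$ of both sides, using linearity, $\tr(vv^\top)=\|v\|_2^2$, and $\tr(\nabla_{\x_t}^2\log q_{0t}(\x_t|\x_0))$ passing directly through the expectation. The only place that requires mild care is the interchange of $\nabla_{\x_t}$ with $\int(\cdot)\dxv_0$ and the product-rule bookkeeping with vector-/matrix-valued integrands; the symmetrization step in particular has to use \cref{lemma:first} precisely to cancel the asymmetry, which I expect to be the main subtlety but not a deep obstacle.
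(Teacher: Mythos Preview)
Your proposal is correct and follows essentially the same route as the paper: differentiate the identity of \cref{lemma:first}, compute $\nabla_{\x_t} q_{t0}(\x_0|\x_t)$ via Bayes' rule to obtain the asymmetric intermediate $\E_{q_{t0}}\big[\nabla^2\log q_{0t}+(S_{0t}-S_t)S_{0t}^\top\big]$, and then symmetrize using $\E_{q_{t0}}[S_{0t}]=S_t$; the paper carries out the symmetrization by subtracting the zero term $\E_{q_{t0}}\big[(S_{0t}-S_t)S_t^\top\big]=\vect 0$, which is equivalent to your expansion argument. One minor slip: in your third step you wrote the asymmetric factor as $S_{0t}(S_{0t}-S_t)^\top$ rather than $(S_{0t}-S_t)S_{0t}^\top$ as it actually appears in your intermediate identity, but since both have the same expectation this does not affect the argument.
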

\begin{proof}
(Proof of Lemma~\ref{lemma:second})

We use $\nabla(\cdot)$ to denote the derivative of $\x_t$, namely $\nabla_{\x_t}(\cdot)$. Firstly, the gradient of $q_{t0}$ w.r.t. $\x_t$ can be calculated as
\begin{align}
    \nabla q_{t0}(\x_0|\x_t)&=\nabla\frac{q_0(\x_0)q_{0t}(\x_t|\x_0)}{q_t(\x_t)}\\
    &=q_0(\x_0)\frac{q_t(\x_t)\nabla q_{0t}(\x_t|\x_0)-q_{0t}(\x_t|\x_0)\nabla q_t(\x_t)}{q_t(\x_t)^2}\\
    &=\frac{q_0(\x_0)q_{0t}(\x_t|\x_0)}{q_t(\x_t)}\left(\nabla\log q_{0t}(\x_t|\x_0)-\nabla\log q_t(\x_t)\right)\\
    &=q_{t0}(\x_0|\x_t)\left(\nabla\log q_{0t}(\x_t|\x_0)-\nabla\log q_t(\x_t)\right)\label{lemma:first:mid}
\end{align}
Then using Lemma~\ref{lemma:first} and the product rule, we have
\begin{align}
    \nabla^2\log q_t(\x_t)&=\nabla \E_{q_{t0}(\x_0|\x_t)}\left[\nabla\log q_{0t}(\x_t|\x_0)\right]\\
    &=\int q_{t0}(\x_0|\x_t)\nabla^2\log q_{0t}(\x_t|\x_0)+\nabla q_{t0}(\x_0|\x_t)\nabla\log q_{0t}(\x_t|\x_0)^\top\dxv_0\\
    &=\E_{q_{t0}(\x_0|\x_t)}\left[\nabla^2\log q_{0t}(\x_t|\x_0)+\left(\nabla\log q_{0t}(\x_t|\x_0)-\nabla\log q_t(\x_t)\right)\nabla\log q_{0t}(\x_t|\x_0)^\top\right]\label{lemma:second:mid}
\end{align}
From Lemma~\ref{lemma:first}, we also have $\E_{q_{t0}}\left[\nabla\log q_{0t}-\nabla\log q_t\right]=\nabla\log q_t-\nabla\log q_t=\vect{0}$, so
\begin{align}
    \E_{q_{t0}}\left[(\nabla\log q_{0t}-\nabla\log q_t)\nabla\log q_t^\top\right]&=\E_{q_{t0}}\left[\nabla\log q_{0t}-\nabla\log q_t\right]\nabla\log q_t^\top\\
    &=\vect{0}
\end{align}
Thus Eqn.~\eqref{lemma:second:mid} can be further transformed by subtracting $\vect{0}$ as
\begin{align}
&\nabla^2\log q_t(\x_t)\\
&=\E_{q_{t0}(\x_0|\x_t)}\left[\nabla^2\log q_{0t}(\x_t|\x_0)+\left(\nabla\log q_{0t}(\x_t|\x_0)-\nabla\log q_t(\x_t)\right)\nabla\log q_{0t}(\x_t|\x_0)^\top\right]\\
&\quad- \E_{q_{t0}(\x_0|\x_t)}\left[(\nabla\log q_{0t}(\x_t|\x_0)-\nabla\log q_t(\x_t))\nabla\log q_t(\x_t)^\top\right]\\
&=\E_{q_{t0}(\x_0|\x_t)}\left[\nabla^2\log q_{0t}(\x_t|\x_0)
    +\left(\nabla\log q_{0t}(\x_t|\x_0)-\nabla\log q_t(\x_t)\right)\left(\nabla\log q_{0t}(\x_t|\x_0)-\nabla\log q_t(\x_t)\right)^\top\right]
\end{align}
which completes the proof of second-order score matrix and its trace.
\end{proof}
Below we propose a corollary of an expectation formulation of the sum of the first-order and the second-order score functions, which can be used to design the third-order denoising score matching method.
\begin{corollary}
\label{coro:second_conditional_score}
Assume $(\x_t,\x_0)\sim q(\x_t,\x_0)$, we have
\begin{equation}
\begin{aligned}
    &\E_{q_{t0}(\x_0|\x_t)}\Big[\nabla_{\x_t}\log q_{0t}(\x_t|\x_0)\nabla_{\x_t}\log q_{0t}(\x_t|\x_0)^\top+\nabla_{\x_t}^2\log q_{0t}(\x_t|\x_0)\Big]\\
    &=\nabla_{\x_t}\log q_t(\x_t)\nabla_{\x_t}\log q_t(\x_t)^\top+\nabla_{\x_t}^2\log q_t(\x_t)
\end{aligned}
\end{equation}
and
\begin{align}
    \E_{q_{t0}(\x_0|\x_t)}\Big[\big\|\nabla_{\x_t}\log q_{0t}(\x_t|\x_0)\big\|_2^2+\tr(\nabla_{\x_t}^2\log q_{0t}(\x_t|\x_0))\Big]&=\big\|\nabla_{\x_t}\log q_t(\x_t)\big\|_2^2+\tr(\nabla_{\x_t}^2\log q_t(\x_t))
\end{align}
\end{corollary}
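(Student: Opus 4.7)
The plan is to deduce this corollary as an essentially algebraic rearrangement of Lemma~\ref{lemma:second}, using Lemma~\ref{lemma:first} to simplify cross terms. No new analytic input or exchange-of-limits argument is required beyond what was already established for the two preceding lemmas.

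Concretely, I would start from the identity in Lemma~\ref{lemma:second} and expand the outer product inside the expectation:
\begin{equation*}
\bigl(\nabla\log q_{0t}-\nabla\log q_t\bigr)\bigl(\nabla\log q_{0t}-\nabla\log q_t\bigr)^\top = \nabla\log q_{0t}\nabla\log q_{0t}^\top - \nabla\log q_{0t}\nabla\log q_t^\top - \nabla\log q_t\nabla\log q_{0t}^\top + \nabla\log q_t\nabla\log q_t^\top,
\end{equation*}
where I abbreviate $\nabla\log q_{0t}\coloneqq \nabla_{\x_t}\log q_{0t}(\x_t|\x_0)$ and $\nabla\log q_t\coloneqq\nabla_{\x_t}\log q_t(\x_t)$. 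Taking expectation under $q_{t0}(\x_0|\x_t)$, the factor $\nabla\log q_t$ comes outside of the expectation (it does not depend on $\x_0$), and Lemma~\ref{lemma:first} gives $\E_{q_{t0}}[\nabla\log q_{0t}]=\nabla\log q_t$. Hence the two cross terms each collapse to $-\nabla\log q_t\nabla\log q_t^\top$, and combining with the last term yields
\begin{equation*}
\E_{q_{t0}}\!\bigl[(\nabla\log q_{0t}-\nabla\log q_t)(\nabla\log q_{0t}-\nabla\log q_t)^\top\bigr] = \E_{q_{t0}}\!\bigl[\nabla\log q_{0t}\nabla\log q_{0t}^\top\bigr] - \nabla\log q_t\nabla\log q_t^\top.
\end{equation*}

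Substituting this back into the statement of Lemma~\ref{lemma:second} and moving $\nabla\log q_t\nabla\log q_t^\top$ to the left-hand side immediately produces the matrix identity in the corollary. The scalar identity then follows by applying the trace operator to both sides, using linearity of trace and the pointwise identity $\tr(v v^\top)=\|v\|_2^2$ applied with $v=\nabla_{\x_t}\log q_{0t}(\x_t|\x_0)$ inside the expectation and with $v=\nabla_{\x_t}\log q_t(\x_t)$ outside.

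There is no genuine obstacle here: the only thing to verify is that all the expectations exist and that no differentiation-under-the-integral step is being invoked beyond what Lemmas~\ref{lemma:first} and~\ref{lemma:second} have already used (which is guaranteed under the regularity conditions of Assumption~\ref{assumption:all}). The whole argument is a one-line rewriting and could even be stated as an immediate corollary by simply noting that Lemma~\ref{lemma:second} is exactly the ``variance'' form of the identity we wish to prove in its ``second-moment'' form.
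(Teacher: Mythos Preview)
Your proposal is correct and takes essentially the same approach as the paper: both arguments reduce to an algebraic rearrangement of Lemma~\ref{lemma:second} using Lemma~\ref{lemma:first} to collapse the cross terms. The only cosmetic difference is that the paper starts from the intermediate asymmetric form Eqn.~\eqref{lemma:second:mid} and adds $\nabla\log q_t\nabla\log q_t^\top$ to both sides, while you start from the symmetric final form of Lemma~\ref{lemma:second} and expand the outer product; these are the same computation.
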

\begin{proof}
(Proof of Corollary~\ref{coro:second_conditional_score})

This is the direct corollary of Eqn.~\eqref{lemma:second:mid} by adding $\nabla_{\x_t}\log q_t(\x_t)\nabla_{\x_t}\log q_t(\x_t)^\top$ to both sides and using Lemma~\ref{lemma:first}.
\end{proof}
And below we propose an expectation formulation for the third-order score function.
\begin{lemma}
\label{lemma:third}
Assume $(\x_t,\x_0)\sim q(\x_t,\x_0)$ and $\nabla_{\x_t}^3\log q_{0t}(\x_t|\x_0)=\vect{0}$, we have
\begin{align}
    \nabla_{\x_t}\tr(\nabla_{\x_t}^2\log q_t(\x_t))=\E_{q_{t0}(\x_0|\x_t)}\Big[\big\|\nabla_{\x_t}\log q_{0t}(\x_t|\x_0)-\nabla_{\x_t}\log q_t(\x_t)\big\|_2^2\big(\nabla_{\x_t}\log q_{0t}(\x_t|\x_0)-\nabla_{\x_t}\log q_t(\x_t)\big)\Big]
\end{align}
\end{lemma}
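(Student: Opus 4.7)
The plan is to differentiate the scalar identity from Lemma~\ref{lemma:second} (trace form) with respect to $\x_t$ and use the additional structure $\nabla_{\x_t}^3\log q_{0t}=\vect{0}$ (which is the Gaussian-conditional assumption $q_{0t}(\x_t\!\mid\!\x_0)=\N(\alpha_t\x_0,\sigma_t^2\Iv)$ used throughout the paper) together with $\E_{q_{t0}}[u]=\vect{0}$ from Lemma~\ref{lemma:first}, where throughout I abbreviate $u\coloneqq \nabla_{\x_t}\log q_{0t}(\x_t|\x_0)-\nabla_{\x_t}\log q_t(\x_t)$.

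First I would start from the trace identity
\[
\tr(\nabla_{\x_t}^2\log q_t(\x_t))=\int q_{t0}(\x_0|\x_t)\bigl[\tr(\nabla_{\x_t}^2\log q_{0t}(\x_t|\x_0))+\|u\|_2^2\bigr]\dxv_0,
\]
apply $\nabla_{\x_t}$, and pass the gradient under the integral via the product rule. One piece picks up $\nabla_{\x_t}q_{t0}(\x_0|\x_t)$, which by the computation in Eqn.~\eqref{lemma:first:mid} equals $q_{t0}(\x_0|\x_t)\,u$; the other piece differentiates the integrand. The first summand in the integrand produces $\nabla_{\x_t}\tr(\nabla_{\x_t}^2\log q_{0t})$, which vanishes by the hypothesis $\nabla_{\x_t}^3\log q_{0t}=\vect{0}$. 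The second summand is handled by $\nabla_{\x_t}\|u\|_2^2=2(\nabla_{\x_t}u)^\top u=2(\nabla_{\x_t}^2\log q_{0t}-\nabla_{\x_t}^2\log q_t)u$, using symmetry of Hessians.

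Collecting terms gives
\[
\nabla_{\x_t}\tr(\nabla_{\x_t}^2\log q_t)=\E_{q_{t0}}\bigl[\|u\|_2^2\,u\bigr]+\E_{q_{t0}}\bigl[\tr(\nabla_{\x_t}^2\log q_{0t})\,u\bigr]+2\,\E_{q_{t0}}\bigl[(\nabla_{\x_t}^2\log q_{0t}-\nabla_{\x_t}^2\log q_t)u\bigr].
\]
The first expectation is exactly the desired right-hand side, so it remains to show the last two vanish. By $\nabla_{\x_t}^3\log q_{0t}=\vect{0}$ together with the SGM form $q_{0t}(\x_t|\x_0)=\N(\alpha_t\x_0,\sigma_t^2\Iv)$ assumed in Sec.~\ref{sec:high_order_SM}, we have $\nabla_{\x_t}^2\log q_{0t}=-\Iv/\sigma_t^2$, which is independent of both $\x_t$ and $\x_0$. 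Thus both $\tr(\nabla_{\x_t}^2\log q_{0t})$ and $\nabla_{\x_t}^2\log q_{0t}$ can be pulled outside the expectation, reducing the residual to a scalar/matrix multiple of $\E_{q_{t0}}[u]$, which is $\vect{0}$ by Lemma~\ref{lemma:first}. The $\nabla_{\x_t}^2\log q_t\,\E_{q_{t0}}[u]$ term is likewise zero for the same reason.

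The main obstacle is the bookkeeping of the three expectation terms and making sure the correct cancellations occur: one must track that the product-rule piece contains a $\tr(\nabla^2\log q_{0t})$ factor (coming from the original Lemma~\ref{lemma:second} formula), while the direct differentiation contributes a $\nabla^2\log q_{0t}$ matrix factor, and only when the Gaussian-conditional structure forces both quantities to be $\x_0$-independent does the argument close. The step needs no further analytic estimates beyond routine application of Lemma~\ref{lemma:first} and Eqn.~\eqref{lemma:first:mid}.
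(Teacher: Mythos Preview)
Your proposal is correct and follows essentially the same approach as the paper: both differentiate the trace identity of Lemma~\ref{lemma:second}, use Eqn.~\eqref{lemma:first:mid} for $\nabla q_{t0}$, and kill the residual terms by pulling the ($\x_0$-independent) Hessian factors outside the expectation and invoking $\E_{q_{t0}}[u]=\vect{0}$ from Lemma~\ref{lemma:first}. The only cosmetic difference is that the paper disposes of the $\tr(\nabla^2\log q_{0t})$ term up front via $\tr(\nabla^2\log q_{0t})\,\nabla\!\int q_{t0}\,\dxv_0=\tr(\nabla^2\log q_{0t})\,\nabla 1=\vect 0$, whereas you carry it through the product rule and cancel it afterwards; the content is identical.
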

\begin{proof}
(Proof of Lemma~\ref{lemma:third})

We use $\nabla(\cdot)$ to denote the derivative of $\x_t$, namely $\nabla_{\x_t}(\cdot)$. Firstly, according to the second-order score trace given by Lemma~\ref{lemma:second}, we have
\begin{equation}
\begin{aligned}
&\nabla\tr(\nabla^2\log q_t(\x_t))\\
&=\underbrace{\nabla\int q_{t0}(\x_0|\x_t)\tr\left(\nabla^2\log q_{0t}(\x_t|\x_0)\right)\dxv_0}_{(1)}+\underbrace{\nabla\int q_{t0}(\x_0|\x_t)\|\nabla\log q_{0t}(\x_t|\x_0)-\nabla\log q_t(\x_t)\|_2^2\dxv_0}_{(2)}
\end{aligned}
\end{equation}
Before simplifying them, we should notice two simple tricks. Firstly, due to the assumption $\nabla^3\log q_{0t}(\x_t|\x_0)=\vect{0}$, we know that $\nabla^2\log q_{0t}$ and $\tr\left(\nabla^2\log q_{0t}\right)$ are constants, so they can be drag out of the integral w.r.t $\x_0$ and the derivative w.r.t. $\x_t$. Secondly, using Lemma~\ref{lemma:first}, we have $\E_{q_{t0}}\left[\nabla\log q_{0t}-\nabla\log q_t\right]=\vect{0}$. Also, $\nabla q_{t0}(\x_0|\x_t)$ can be represented by Eqn.~\eqref{lemma:first:mid}. Thus
\begin{align}
    (1)&=\tr\left(\nabla^2\log q_{0t}(\x_t|\x_0)\right)\nabla\int q_{t0}(\x_0|\x_t)\dxv_0\\
    &=\tr\left(\nabla^2\log q_{0t}(\x_t|\x_0)\right)\nabla1\\
    &=\vect{0}
\end{align}
\begin{align}
    (2)&=\underbrace{\int \nabla q_{t0}(\x_0|\x_t)\|\nabla\log q_{0t}(\x_t|\x_0)\!-\!\nabla\log q_t(\x_t)\|_2^2\dxv_0}_{(3)}+\underbrace{\int q_{t0}(\x_0|\x_t)\nabla\|\nabla\log q_{0t}(\x_t|\x_0)-\nabla\log q_t(\x_t)\|_2^2\dxv_0}_{(4)}
\end{align}
where
\begin{align}
    (4)&=2\int q_{t0}(\x_0|\x_t)\left(\nabla^2\log q_{0t}(\x_t|\x_0)-\nabla^2\log q_t(\x_t)\right)\left(\nabla\log q_{0t}(\x_t|\x_0)-\nabla\log q_t(\x_t)\right)\dxv_0\\
    &=2\left(\nabla^2\log q_{0t}(\x_t|\x_0)-\nabla^2\log q_t(\x_t)\right)\int q_{t0}(\x_0|\x_t)\left(\nabla\log q_{0t}(\x_t|\x_0)-\nabla\log q_t(\x_t)\right)\dxv_0\\
    &=\vect{0}
\end{align}
Combining equations above, we have
\begin{align}
\nabla\tr(\nabla^2\log q_t(\x_t))&=(3)\\
&=\int q_{t0}(\x_0|\x_t)\|\nabla\log q_{0t}(\x_t|\x_0)-\nabla\log q_t(\x_t)\|_2^2\left(\nabla\log q_{0t}(\x_t|\x_0)-\nabla\log q_t(\x_t)\right) \dxv_0\\
&=\E_{q_{t0}(\x_0|\x_t)}\left[\|\nabla\log q_{0t}(\x_t|\x_0)-\nabla\log q_t(\x_t)\|_2^2\left(\nabla\log q_{0t}(\x_t|\x_0)-\nabla\log q_t(\x_t)\right)\right]
\end{align}
\end{proof}

\begin{lemma}
\label{lemma:third-with-vector}
Assume $(\x_t,\x_0)\sim q(\x_t,\x_0)$ and $\nabla_{\x_t}^3\log q_{0t}(\x_t|\x_0)=\vect{0}$, we have
\begin{equation}
\begin{aligned}
    &\nabla_{\x_t}\left(\vv^\top\nabla_{\x_t}^2\log q_t(\x_t)\vv\right)\\
    &=\E_{q_{t0}(\x_0|\x_t)}\Big[\left(\big(\nabla_{\x_t}\log q_{0t}(\x_t|\x_0)-\nabla_{\x_t}\log q_t(\x_t)\big)^\top\vv\right)^2\big(\nabla_{\x_t}\log q_{0t}(\x_t|\x_0)-\nabla_{\x_t}\log q_t(\x_t)\big)\Big]
\end{aligned}
\end{equation}
\end{lemma}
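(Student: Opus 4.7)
The plan is to mirror the structure of the proof of Lemma~\ref{lemma:third}, replacing the squared norm $\|\nabla\log q_{0t}-\nabla\log q_t\|_2^2$ with the squared directional inner product $((\nabla\log q_{0t}-\nabla\log q_t)^\top\vv)^2$, and tracking the resulting vector-valued gradients carefully. Throughout, I will use $\nabla=\nabla_{\x_t}$ for brevity and the identity $\nabla q_{t0}(\x_0|\x_t) = q_{t0}(\x_0|\x_t)(\nabla\log q_{0t}(\x_t|\x_0)-\nabla\log q_t(\x_t))$ from Eqn.~(\ref{lemma:first:mid}).

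\textbf{Starting point.} Contract Lemma~\ref{lemma:second} with $\vv$ on both sides:
\begin{equation*}
\vv^\top\nabla^2\log q_t(\x_t)\vv
= \E_{q_{t0}(\x_0|\x_t)}\!\Big[\vv^\top\nabla^2\log q_{0t}(\x_t|\x_0)\vv + \big((\nabla\log q_{0t}(\x_t|\x_0)-\nabla\log q_t(\x_t))^\top\vv\big)^2\Big].
\end{equation*}
Taking $\nabla$ of both sides splits the right-hand side into two pieces. The first piece contributes nothing: since $\nabla^3\log q_{0t}=\vect{0}$, the scalar $\vv^\top\nabla^2\log q_{0t}(\x_t|\x_0)\vv$ is independent of $\x_t$, so it pulls out of the $\x_0$-integral and the gradient of $\int q_{t0}\,\dxv_0 = 1$ vanishes, exactly as in the treatment of term~(1) in Lemma~\ref{lemma:third}.

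\textbf{The main piece.} Writing $a(\x_t,\x_0)\coloneqq (\nabla\log q_{0t}(\x_t|\x_0)-\nabla\log q_t(\x_t))^\top\vv$, the product rule yields
\begin{equation*}
\nabla\!\int q_{t0}(\x_0|\x_t)\,a^2\,\dxv_0 = \underbrace{\int (\nabla q_{t0})\,a^2\,\dxv_0}_{(A)} + \underbrace{\int q_{t0}\,\nabla(a^2)\,\dxv_0}_{(B)}.
\end{equation*}
For $(A)$, I substitute $\nabla q_{t0} = q_{t0}(\nabla\log q_{0t}-\nabla\log q_t)$ to obtain precisely the claimed expectation
\[
\E_{q_{t0}(\x_0|\x_t)}\!\Big[\big((\nabla\log q_{0t}-\nabla\log q_t)^\top\vv\big)^2(\nabla\log q_{0t}-\nabla\log q_t)\Big].
\]
For $(B)$, using $\nabla(\uv^\top\vv)=(\nabla\uv)^\top\vv$ and symmetry of Hessians, $\nabla a = (\nabla^2\log q_{0t}-\nabla^2\log q_t)\vv$, so $\nabla(a^2) = 2a(\nabla^2\log q_{0t}-\nabla^2\log q_t)\vv$. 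Since $\nabla^3\log q_{0t}=\vect{0}$ (and, in the SGM Gaussian setup $q_{0t}=\N(\alpha_t\x_0,\sigma_t^2\Iv)$ used implicitly in Lemma~\ref{lemma:third}, $\nabla^2\log q_{0t}=-\sigma_t^{-2}\Iv$ is also independent of $\x_0$), the matrix $(\nabla^2\log q_{0t}-\nabla^2\log q_t)\vv$ pulls out of the $\x_0$-integral, leaving $\int q_{t0}\,a\,\dxv_0=\vv^\top\E_{q_{t0}}[\nabla\log q_{0t}-\nabla\log q_t]=0$ by Lemma~\ref{lemma:first}. Hence $(B)=\vect{0}$, and the result follows.

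\textbf{Main obstacle.} The only nontrivial bookkeeping is step $(B)$: one must correctly compute the gradient of the scalar $a^2$ as a vector involving the Hessian applied to $\vv$, and then justify pulling $(\nabla^2\log q_{0t}-\nabla^2\log q_t)\vv$ outside the $\x_0$-integral. This mirrors the analogous step in Lemma~\ref{lemma:third} and is the same implicit assumption made there; everything else is direct substitution following the existing template.
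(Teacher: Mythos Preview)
Your proof is correct and follows essentially the same route as the paper's own argument: contract Lemma~\ref{lemma:second} with $\vv$, split the gradient into the constant-Hessian piece (which vanishes) and the squared-inner-product piece, then apply the product rule to the latter and kill the term you call $(B)$ by pulling out the $\x_0$-independent matrix factor and invoking $\E_{q_{t0}}[\nabla\log q_{0t}-\nabla\log q_t]=\vect{0}$ from Lemma~\ref{lemma:first}. You even make explicit the implicit assumption (that $\nabla^2\log q_{0t}$ is independent of $\x_0$ in the Gaussian transition kernel) that the paper uses without comment.
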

\begin{proof}(Proof of Lemma~\ref{lemma:third-with-vector})

We use $\nabla(\cdot)$ to denote the derivative of $\x_t$, namely $\nabla_{\x_t}(\cdot)$. Firstly, according to the second-order score given by Lemma~\ref{lemma:second}, we have
\begin{align}
\nabla\left(\vv^\top\nabla^2\log q_t(\x_t)\vv\right)&=\underbrace{\nabla\int q_{t0}(\x_0|\x_t)\left(\vv^\top\nabla^2\log q_{0t}(\x_t|\x_0)\vv\right)\dxv_0}_{(1)}\\
&\quad +\underbrace{\nabla\int q_{t0}(\x_0|\x_t)\left(\left(\nabla\log q_{0t}(\x_t|\x_0)-\nabla\log q_t(\x_t)\right)^\top\vv\right)^2\dxv_0}_{(2)}
\end{align}
Similar to the proof of Lemma~\ref{lemma:third}, we have
\begin{equation}
    (1)=\left(\vv^\top\nabla^2\log q_{0t}(\x_t|\x_0)\vv\right)\nabla\vect{1}=\vect{0},
\end{equation}
and
\begin{align}
    (2)&=\underbrace{\int \nabla q_{t0}(\x_0|\x_t)\left(\left(\nabla\log q_{0t}(\x_t|\x_0)-\nabla\log q_t(\x_t)\right)^\top\vv\right)^2\dxv_0}_{(3)}\\
    &\quad+\underbrace{\int q_{t0}(\x_0|\x_t)\nabla\left(\left(\nabla\log q_{0t}(\x_t|\x_0)-\nabla\log q_t(\x_t)\right)^\top\vv\right)^2\dxv_0}_{(4)}
\end{align}
Also similar to the proof of Lemma~\ref{lemma:third}, we have
\begin{align}
    (4)&=2\int q_{t0}(\x_0|\x_t)\left(\left((\nabla^2\log q_{0t}(\x_t|\x_0)-\nabla^2\log q_t(\x_t)\right)^\top\vv\right)\left(\left(\nabla\log q_{0t}(\x_t|\x_0)-\nabla\log q_t(\x_t)\right)^\top\vv\right)\dxv_0\\
    &=2\left(\left((\nabla^2\log q_{0t}(\x_t|\x_0)-\nabla^2\log q_t(\x_t)\right)^\top\vv\right)\int q_{t0}(\x_0|\x_t)\left(\left(\nabla\log q_{0t}(\x_t|\x_0)-\nabla\log q_t(\x_t)\right)^\top\vv\right)\dxv_0\\
    &=\vect{0},
\end{align}
so
\begin{align}
    &\nabla\left(\vv^\top\nabla^2\log q_t(\x_t)\vv\right)\\
    &=(3)\\
    &=\int q_{t0}(\x_0|\x_t)\left(\left(\nabla\log q_{0t}(\x_t|\x_0)-\nabla\log q_t(\x_t)\right)^\top\vv\right)^2\left(\nabla\log q_{0t}(\x_t|\x_0)-\nabla\log q_t(\x_t)\right)\dxv_0\\
    &=\E_{q_{t0}(\x_0|\x_t)}\Big[\left(\big(\nabla\log q_{0t}(\x_t|\x_0)-\nabla\log q_t(\x_t)\big)^\top\vv\right)^2\big(\nabla\log q_{0t}(\x_t|\x_0)-\nabla\log q_t(\x_t)\big)\Big]
\end{align}

\end{proof}

\subsection{Proof of Theorem~\ref{thrm:second}}
\label{appendix:thrm_second}
\begin{proof}
For simplicity, we denote $q_{0t}\coloneqq q_{0t}(\x_t|\x_0)$, $q_{t0}\coloneqq q_{t0}(\x_0|\x_t)$, $q_t\coloneqq q_t(\x_t)$, $q_0\coloneqq q_0(\x_0)$, $\hat\s_1\coloneqq\hat\s_1(\x_t,t)$, $\s_2(\theta)\coloneqq\s_2(\x_t,t;\theta)$.

As $\nabla\log q_{0t}=-\frac{\epsilonv}{\sigma_t}$ and $\nabla^2\log q_{0t}=-\frac{1}{\sigma_t^2}\Iv$, by rewriting the objective in Eqn.~\eqref{eqn:dsm-2-obj}, the optimization is equivalent to
\begin{align}
    \theta^*=\argmin_\theta\E_{q_t}\E_{q_{t0}}\left[\left\|\s_2(\theta)-\nabla^2\log q_{0t}-(\nabla\log q_{0t}-\hat\s_1)(\nabla\log q_{0t}-\hat\s_1)^\top\right\|_F^2\right]
\end{align}
For fixed $t$ and $\x_t$, minimizing the inner expectation is a minimum mean square error problem for $\s_2(\theta)$, so the optimal $\theta^*$ satisfies
\begin{align}
    \s_2(\theta^*)&=\E_{q_{t0}}\left[\nabla^2\log q_{0t}+(\nabla\log q_{0t}-\hat\s_1)(\nabla\log q_{0t}-\hat\s_1)^\top\right]
\end{align}
By Lemma~\ref{lemma:first} and Lemma~\ref{lemma:second}, we have
\begin{align}
    &\s_2(\theta^*)-\nabla^2\log q_t\\
    &=\E_{q_{t0}}\left[\hat\s_1\hat\s_1^\top-\hat\s_1\nabla\log q_{0t}^\top-\nabla\log q_{0t}\hat\s_1^\top-\nabla\log q_t\nabla\log q_t^\top+\nabla\log q_{0t}\nabla\log q_t^\top+\nabla\log q_t\nabla\log q_{0t}^\top\right]\\
    &=(\hat\s_1-\nabla\log q_t)(\hat\s_1-\nabla\log q_t)^\top
\end{align}
Therefore, we have
\begin{align}
    \|\s_2(\theta)-\nabla^2\log q_t\|_F&\leq \|\s_2(\theta)-\s_2(\theta^*)\|_F+\|\s_2(\theta^*)-\nabla^2\log q_t\|_F\\
    &=\|\s_2(\theta)-\s_2(\theta^*)\|_F+\|\hat\s_1-\nabla\log q_t\|_2^2
\end{align}
Moreover, by leveraging the property of minimum mean square error, we should notice that the training objective can be rewritten to
\begin{align}
    \theta^*=\argmin_{\theta}\E_{q_t(\x_t)}\|\s_2(\x_t,t;\theta)-\s_2(\x_t,t;\theta^*)\|_F^2,
\end{align}
which shows that $\|\s_2(\x_t,t;\theta)-\s_2(\x_t,t;\theta^*)\|_F$ can be viewed as the training error.
\end{proof}

\subsection{Proof of Corollary~\ref{coro:second_trace}}
\label{appendix:thrm_second_trace}
\begin{proof}
For simplicity, we denote $q_{0t}\coloneqq q_{0t}(\x_t|\x_0)$, $q_{t0}\coloneqq q_{t0}(\x_0|\x_t)$, $q_t\coloneqq q_t(\x_t)$, $q_0\coloneqq q_0(\x_0)$, $\hat\s_1\coloneqq\hat\s_1(\x_t,t)$, $\s_2^{\text{trace}}(\theta)\coloneqq\s_2^{\text{trace}}(\x_t,t;\theta)$.

As $\nabla\log q_{0t}=-\frac{\epsilonv}{\sigma_t}$ and $\nabla^2\log q_{0t}=-\frac{1}{\sigma_t^2}\Iv$, by rewriting the objective in Eqn.~\eqref{eqn:dsm-2-trace-obj}, the optimization is equivalent to
\begin{align}
    \theta^*=\argmin_\theta\E_{q_t}\E_{q_{t0}}\left[\left|\s_2^{\text{trace}}(\theta)-\tr(\nabla^2\log q_{0t})-\|\nabla\log q_{0t}-\hat\s_1\|_2^2\right|^2\right]
\end{align}
For fixed $t$ and $\x_t$, minimizing the inner expectation is a minimum mean square error problem for $\s_2^{\text{trace}}(\theta)$, so the optimal $\theta^*$ satisfies
\begin{align}
    \s_2^{\text{trace}}(\theta^*)&=\E_{q_{t0}}\left[\tr(\nabla^2\log q_{0t})+\|\nabla\log q_{0t}-\hat\s_1\|_2^2\right]
\end{align}
By Lemma~\ref{lemma:first} and Lemma~\ref{lemma:second}, similarly we have
\begin{align}
    \s_2^{\text{trace}}(\theta^*)-\tr(\nabla^2\log q_t)&=\|\hat\s_1-\nabla\log q_t\|_2^2
\end{align}
Therefore, we have
\begin{align}
    |\s_2^{\text{trace}}(\theta)-\tr(\nabla^2\log q_t)|&\leq |\s_2^{\text{trace}}(\theta)-\s_2^{\text{trace}}(\theta^*)|+|\s_2^{\text{trace}}(\theta^*)-\tr(\nabla^2\log q_t)|\\
    &=|\s_2^{\text{trace}}(\theta)-\s_2^{\text{trace}}(\theta^*)|+\|\hat\s_1-\nabla\log q_t\|_2^2
\end{align}
Moreover, by leveraging the property of minimum mean square error, we should notice that the training objective can be rewritten to
\begin{align}
    \theta^*=\argmin_{\theta}\E_{q_t(\x_t)}|\s_2^{\text{trace}}(\x_t,t;\theta)-\s_2^{\text{trace}}(\x_t,t;\theta^*)|^2,
\end{align}
which shows that $|\s_2^{\text{trace}}(\x_t,t;\theta)-\s_2^{\text{trace}}(\x_t,t;\theta^*)|$ can be viewed as the training error.
\end{proof}
\subsection{Proof of Theorem~\ref{thrm:third}}
\label{appendix:thrm_third}
\begin{proof}
For simplicity, we denote $q_{0t}\coloneqq q_{0t}(\x_t|\x_0)$, $q_{t0}\coloneqq q_{t0}(\x_0|\x_t)$, $q_t\coloneqq q_t(\x_t)$, $q_0\coloneqq q_0(\x_0)$, $\hat\s_1\coloneqq\hat\s_1(\x_t,t)$, $\hat\s_2\coloneqq\hat\s_2(\x_t,t)$, $\s_3(\theta)\coloneqq\s_3(\x_t,t;\theta)$.

As $\nabla\log q_{0t}=-\frac{\epsilonv}{\sigma_t}$ and $\nabla^2\log q_{0t}=-\frac{1}{\sigma_t^2}\Iv$, by rewriting the objective in Eqn.~\eqref{eqn:dsm-3-obj}, the optimization is equivalent to
\begin{equation}
\begin{split}
    \theta^*=\argmin_{\theta}&\E_{q_t(\x_t)}\E_{q_{t0}(\x_0|\x_t)}\left[\Bigg\|\s_3(\theta)-\left\|\nabla\log q_{0t}-\hat\s_1\right\|_2^2\left(\nabla\log q_{0t}-\hat\s_1\right)\right.\\
    &\left.+\Big(\left(\tr(\hat\s_2)-\tr(\nabla^2\log q_{0t})\right)\Iv+2\left(\hat\s_2-\nabla^2\log q_{0t}\right)\Big)\Big(\nabla\log q_{0t}-\hat\s_1\Big)\Bigg\|^2_2\right]
\end{split}
\end{equation}
For fixed $t$ and $\x_t$, minimizing the inner expectation is a minimum mean square error problem for $\s_3(\theta)$, so the optimal $\theta^*$ satisfies
\begin{align}
    \s_3(\theta^*)&=\E_{q_{t0}}\Big[\big\|\nabla\log q_{0t}-\hat\s_1\big\|_2^2\big(\nabla\log q_{0t}-\hat\s_1\big)\Big]\\
    &-\E_{q_{t0}}\bigg[\Big(\big(\tr(\hat\s_2)-\tr(\nabla^2\log q_{0t})\big)\Iv+2\big(\hat\s_2-\nabla^2\log q_{0t}\big)\Big)\Big(\nabla\log q_{0t}-\hat\s_1\Big)\bigg]
\end{align}
As $\nabla^2\log q_{0t}=-\frac{1}{\sigma_t^2}\Iv$ is constant w.r.t. $\x_0$, by Lemma~\ref{lemma:first}, we have
\begin{align}
    \s_3(\theta^*)&=\E_{q_{t0}}\Big[\big\|\nabla\log q_{0t}-\hat\s_1\big\|_2^2\big(\nabla\log q_{0t}-\hat\s_1\big)\Big]\\
    &-\Big(\big(\tr(\hat\s_2)-\tr(\nabla^2\log q_{0t})\big)\Iv+2\big(\hat\s_2-\nabla^2\log q_{0t}\big)\Big)\Big(\nabla\log q_t-\hat\s_1\Big)
\end{align}
By Lemma~\ref{lemma:third}, we have
\begin{align}
    &\s_3(\theta^*)-\nabla\tr(\nabla^2\log q_t)\\
    =\ &\E_{q_{t0}}\left[\Big(2\nabla\log q_{0t}\nabla\log q_{0t}^\top+2\nabla^2\log q_{0t}-2\hat\s_2\Big)\big(\nabla\log q_t-\hat\s_1\big)\right]\\
    &+\E_{q_{t0}}\left[\Big(\left\|\nabla\log q_{0t}\right\|_2^2+\tr(\nabla^2\log q_{0t})-\tr(\hat\s_2)\Big)\big(\nabla\log q_t-\hat\s_1\big)\right]\\
    &+(\|\hat\s_1\|_2^2-\|\nabla\log q_t\|_2^2)\E_{q_{t0}}[\nabla\log q_{0t}]\\
    &+2\E_{q_{t0}}\Big[(\nabla\log q_{0t}^\top\hat\s_1)\hat\s_1-(\nabla\log q_{0t}^\top\nabla\log q_t)\nabla\log q_t\Big]\\
    &+\|\nabla\log q_t\|_2^2\nabla\log q_t-\|\hat\s_1\|_2^2\hat\s_1
\end{align}
By Lemma~\ref{lemma:first} and Corollary~\ref{coro:second_conditional_score}, we have
\begin{align}
    \nonumber
    &\s_3(\theta^*)-\nabla\tr(\nabla^2\log q_t)\\
    \nonumber
    =\ &2\Big(\nabla\log q_{t}\nabla\log q_{t}^\top+\nabla^2\log q_{t}-\hat\s_2\Big)\big(\nabla\log q_t-\hat\s_1\big)\\
    \nonumber
    &+\Big(\left\|\nabla\log q_{t}\right\|_2^2+\tr(\nabla^2\log q_{t})-\tr(\hat\s_2)\Big)\big(\nabla\log q_t-\hat\s_1\big)\\
    \nonumber
    &+(\|\hat\s_1\|_2^2-\|\nabla\log q_t\|_2^2)\nabla\log q_t\\
    \nonumber
    &+2\left((\nabla\log q_{t}^\top\hat\s_1)\hat\s_1-\|\nabla\log q_t\|_2^2\nabla\log q_t\right)\\
    \nonumber
    &+\|\nabla\log q_t\|_2^2\nabla\log q_t-\|\hat\s_1\|_2^2\hat\s_1\\
    \nonumber
    =\ &\Big(2\big(\nabla^2\log q_t-\hat\s_2\big)+\big(\tr(\nabla^2\log q_t)-\tr(\hat\s_2)\big)\Big)(\nabla\log q_t-\hat\s_1)\\
    &+\|\nabla\log q_t-\hat\s_1\|_2^2(\nabla\log q_t-\hat\s_1)\label{eqn:appendix-third-original-optimal}
\end{align}
Therefore, we have
\begin{align}
    &\|\s_3(\theta)-\nabla\tr(\nabla^2\log q_t)\|_2\leq \|\s_3(\theta)-\s_3(\theta^*)\|_2+\|\s_3(\theta^*)-\nabla\tr(\nabla^2\log q_t)\|_2\\
    \leq\ &\|\s_3(\theta)-\s_3(\theta^*)\|_2+\|\nabla\log q_t-\hat\s_1\|_2^3\\
    &+\Big(2\big\|\nabla^2\log q_t-\hat\s_2\big\|_F+\big|\tr(\nabla^2\log q_t)-\tr(\hat\s_2)\big|\Big)\|\nabla\log q_t-\hat\s_1\|_2
\end{align}
Moreover, by leveraging the property of minimum mean square error, we should notice that the training objective can be rewritten to
\begin{align}
    \theta^*=\argmin_{\theta}\E_{q_t(\x_t)}\|\s_3(\x_t,t;\theta)-\s_3(\x_t,t;\theta^*)\|_2^2,
\end{align}
which shows that $\|\s_3(\x_t,t;\theta)-\s_3(\x_t,t;\theta^*)\|_2$ can be viewed as the training error.
\end{proof}

\subsection{Difference between error-bounded high-order denoising score matching and previous high-order score matching in \citet{meng2021estimating}}
\label{appendix:differ_high_order}
In this section, we analyze the DSM objective in \citet{meng2021estimating}, and show that the objective in \citet{meng2021estimating} has the unbounded-error property, which means even if the training error is zero and the first-order score matching error is any small, the second-order score matching error may be arbitrarily large.

~\cite{meng2021estimating} proposed an objective for estimating second-order score
\begin{align}
    \theta^*=\argmin_\theta\E_{q_t}\E_{q_{t0}}\left[\left\|\s_2(\theta)-\nabla^2\log q_{0t}-\nabla\log q_{0t}\nabla\log q_{0t}^\top +\hat\s_1\hat\s_1^\top\right\|_F^2\right]
\end{align}
For fixed $t$ and $\x_t$, minimizing the inner expectation is a minimum mean square error problem for $\s_2(\theta)$, so the optimal $\theta^*$ satisfies
\begin{align}
    \s_2(\theta^*)&=\E_{q_{t0}}\left[\nabla^2\log q_{0t}+\nabla\log q_{0t}\nabla\log q_{0t}^\top -\hat\s_1\hat\s_1^\top\right]
\end{align}
By Lemma~\ref{lemma:first} and Lemma~\ref{lemma:second}, we have
\begin{align}
    \s_2(\theta^*)-\nabla^2\log q_t&=\E_{q_{t0}}\left[-\hat\s_1\hat\s_1^\top-\nabla\log q_t\nabla\log q_t^\top+\nabla\log q_{0t}\nabla\log q_t^\top+\nabla\log q_t\nabla\log q_{0t}^\top\right]\\
    &=\nabla\log q_t\nabla\log q_t^\top-\hat\s_1\hat\s_1^\top
\end{align}
However, below we show that even if $\theta$ achieves its optimal solution $\theta^*$ and the first-order score matching error $\|\hat\s_1-\nabla\log q_t\|$ is small, the second-order score matching error $\s_2(\theta^*)-\nabla^2\log q_t$ may still be rather large.

We construct an example to demonstrate this point. Suppose the first-order score matching error $\hat\s_1-\nabla\log q_t=\delta_1\cdot\vect{1}$, where $\delta_1>0$ is small, then the first-order score matching error $\|\hat\s_1-\nabla\log q_t\|_2=\delta_1\sqrt{d}$, where $d$ is the dimension of $\x_t$. We have
\begin{align}
    \s_2(\theta^*)-\nabla^2\log q_t&=\nabla\log q_t\nabla\log q_t^\top - (\nabla\log q_t+\delta_1\cdot\vect{1})(\nabla\log q_t+\delta_1\cdot\vect{1})^\top\\
    &=-\delta_1\nabla\log q_t\vect{1}^\top-\delta_1\vect{1}\nabla\log q_t^\top - \delta_1^2\vect{1}\vect{1}^\top
\end{align}
We consider the unbounded score case~\cite{song2020score}, if the first-order score function $\nabla\log q_t$ is unbounded, i.e. for any $\delta_1>0$ and any $C>0$, there exists $\x_t\in\R^d$ such that $\|\nabla\log q_t\|_2>\frac{C+\delta_1^2d}{2\delta_1}$, then we have
\begin{align}
    \|\s_2(\theta^*)-\nabla\log q_t\|_F&\geq \delta_1\|\nabla\log q_t\vect{1}^\top+\vect{1}\nabla\log q_t^\top\|_F-\delta_1^2\|\vect{1}\vect{1}^\top\|_F\\
    &\geq \delta_1\|\diag(\nabla\log q_t\vect{1}^\top+\vect{1}\nabla\log q_t^\top)\|_2-\delta_1^2\|\vect{1}\vect{1}^\top\|_F\\
    &=\delta_1\|2\nabla\log q_t\|_2-\delta_1^2d\\
    &> C,
\end{align}
where the second inequality is because $\|A\|_F\geq \|\diag(A)\|_2$, where $\diag(A)$ means the diagonal vector of the matrix $A$. Therefore, for any small $\delta_1$, the second-order score estimation error may be arbitrarily large.

In practice, \citet{meng2021estimating} do not stop gradients for $\hat\s_1$, which makes the optimization of the second-order score model $\s_2(\theta)$ affecting the first-order score model $\s_1(\theta)$, and thus cannot theoretically guarantee the convergence. Empirically, we implement the second-order DSM objective in \citet{meng2021estimating}, and we find that if we stop gradients for $\hat\s_1$, the model quickly diverges and cannot work. We argue that this is because of the unbounded error of their method, as shown above.

Instead, our proposed high-order DSM method has the error-bounded property, which shows that $\|\s_2(\theta^*)-\nabla\log q_t\|_F=\|\hat\s_1-\nabla\log q_t\|_2^2$. So our method does not have the problem mentioned above.

\section{Estimated objectives of high-order DSM for high-dimensional data}
\label{appendix:ubiased_estimation}
In this section, we propose the estimated high-order DSM objectives for high-dimensional data.

The second-order DSM objective in Eqn.~\eqref{eqn:dsm-2-obj} requires computing the Frobenius norm of the full Jacobian of the score model, i.e. $\nabla_{\x}\s_\theta(\x,t)$, which typically has $\Oc(d^2)$ time complexity and is unacceptable for high dimensional real data. Moreover, the high-order DSM objectives in Eqn.~\eqref{eqn:dsm-2-trace-obj} and Eqn.~\eqref{eqn:dsm-3-obj} include computing the divergence of score network $\nabla_{\x}\cdot\s_\theta(\x,t)$ i.e. the trace of Jacobian $\tr(\nabla_{\x}\s_\theta(\x,t))$, and it also has $\Oc(d^2)$ time complexity. Similar to~\citet{grathwohl2018ffjord} and \citet{finlay2020train}, the cost can be reduced to $\Oc(d)$ using Hutchinson’s trace estimator and automatic diffentiation provided by general deep learning frameworks, which needs one-time backpropagation only.

For a $d$-by-$d$ matrix $A$, its trace can be unbiasedly estimated by~\cite{hutchinson1989stochastic}:
\begin{align}
    \tr(A)=\E_{p(\vv)}\left[\vv^\top A\vv\right]
\end{align}
where $p(\vv)$ is a $d$-dimensional distribution such that $\E\left[\vv\right]=\vect{0}$ and $\Cov\left[\vv\right]=\Iv$. Typical choices of $p(\vv)$ are a standard Gaussian or Rademacher distribution. Moreover, the Frobenius norm can also be unbiasedly estimated, since
\begin{align}
    \|A\|_F^2=\tr(A^\top A)=\E_{p(\vv)}\left[\vv^\top A^\top A\vv\right]=\E_{p(\vv)}\left[\|A\vv\|_2^2\right]
\end{align}
Let $A=\nabla_{\x}\s_\theta(\x,t)$, The Jacobian-vector-product $\nabla_{\x}\s_\theta(\x,t)\vv$ can be efficiently computed by using once forward-mode automatic differentiation in JAX, making the evaluating of trace and Frobenius norm approximately the same cost as evaluating $\s_\theta(\x,t)$. And we show the time costs for the high-order DSM training in Appendix.~\ref{appendix:cifar10}.

By leveraging the unbiased estimator, our final objectives for second-order and third-order DSM are:
\begin{align}
    \Jc_{\DSM,\text{estimation}}^{(2)}(\theta)&=\E_{t,\x_0,\epsilonv}\E_{p(\vv)}\left[\left\|\sigma_t^2\s_{jvp}+\vv-(\sigma_t\hat\s_1\cdot\vv+\epsilonv\cdot\vv)(\sigma_t\hat\s_1+\epsilonv)\right\|_2^2\right]\label{unbiased:dsm2},\\
    \Jc_{\DSM,\text{estimation}}^{(2,\text{tr})}(\theta)&=\E_{t,\x_0,\epsilonv}\E_{p(\vv)}\bigg[\Big|\sigma_t^2\vv^\top\s_{jvp}+\|\vv\|_2^2-|\sigma_t\hat\s_1\cdot\vv+\epsilonv\cdot\vv|^2\Big|^2\bigg]\label{unbiased:dsm2tr},\\
    \nonumber
    \Jc_{\DSM,\text{estimation}}^{(3)}(\theta)&=\E_{t,\x_0,\epsilonv}\E_{p(\vv)}\bigg[\Big\|\sigma_t^3\vv^\top \nabla_{\x}\s_{jvp}+|\sigma_t \hat\s_1\cdot\vv+\epsilonv\cdot\vv|^2(\sigma_t \hat\s_1+\epsilonv)-(\sigma_t^2\vv^\top\hat\s_{jvp}+\|\vv\|_2^2)(\sigma_t \hat\s_1+\epsilonv)\\
    &\quad\quad\quad\quad\quad\quad\quad-2(\sigma_t \hat\s_1\cdot\vv+\epsilonv\cdot\vv)(\sigma_t^2 \hat\s_{jvp}+\vv)\Big\|_2^2\bigg]\label{unbiased:dsm3},
\end{align}
where $\s_{jvp}=\nabla_{\x}\s_\theta(\x,t)\vv$ is the Jacobian-vector-product as mentioned above, $\hat\s_1=\texttt{stop\_gradient}(\s_\theta)$, $\hat\s_{jvp}=\texttt{stop\_gradient}(\s_{jvp})$, and $\av\cdot \bv$ denotes the vector-inner-product for vectors $\av$ and $\bv$. We compute $\vv^\top\nabla_{\x}\s_{jvp}$ by using the auto-gradient functions in JAX (which can compute the "vector-Jacobian-product" by one-time backpropagation").

\subsection{Relationship between the estimated objectives and the original objectives for high-order DSM}
In this section, we show that the proposed estimated objectives are actually equivalent to or can upper bound the original objectives in Sec.~\ref{sec:variance_reduction} for high-order DSM. Specifically, we have
\begin{align}
    \Jc_{\DSM}^{(2)}(\theta)&=\Jc_{\DSM,\text{estimation}}^{(2)}(\theta),\\
    \Jc_{\DSM}^{(2,\text{tr})}(\theta)&\leq\Jc_{\DSM,\text{estimation}}^{(2,\text{tr})}(\theta),\\
    \Jc_{\DSM}^{(3)}(\theta)&\leq\Jc_{\DSM,\text{estimation}}^{(3)}(\theta).
\end{align}

Firstly, the estimated second-order DSM objective in Eqn.~\eqref{unbiased:dsm2} is equivalent to the original objective in Sec.~\ref{sec:variance_reduction}, because
\begin{align}
    \Jc_{\DSM}^{(2)}(\theta)&= \E_{t,\x_0,\epsilonv}\left[\left\|\sigma_t^2\nabla_{\x}\s_\theta(\x_t,t)+\Iv-(\sigma_t \hat\s_1+\epsilonv)(\sigma_t \hat\s_1+\epsilonv)^\top\right\|_F^2\right]\\
    &=\E_{t,\x_0,\epsilonv}\E_{p(\vv)}\left[\left\|\left(\sigma_t^2\nabla_{\x}\s_\theta(\x_t,t)+\Iv-(\sigma_t \hat\s_1+\epsilonv)(\sigma_t \hat\s_1+\epsilonv)^\top
    \right)\vv\right\|_2^2\right]\\
    &=\E_{t,\x_0,\epsilonv}\E_{p(\vv)}\left[\left\|\sigma_t^2\nabla_{\x}\s_\theta(\x_t,t)\vv+\vv-(\sigma_t\hat\s_1\cdot\vv+\epsilonv\cdot\vv)(\sigma_t\hat\s_1+\epsilonv)\right\|_2^2\right]\\
    &=\Jc_{\DSM,\text{estimation}}^{(2)}(\theta).
\end{align}
And the estimated trace of second-order DSM objective in Eqn.~\eqref{unbiased:dsm2tr} can upper bound the original objective in Sec.~\ref{sec:variance_reduction}, because
\begin{align}
    \Jc_{\DSM}^{(2,\text{tr})}(\theta)&=\E_{t,\x_0,\epsilonv}\left[\left|\sigma_t^2\tr(\nabla_{\x}\s_\theta(\x_t,t))+d-\|\sigma_t \hat\s_1+\epsilonv\|_2^2\right|^2\right]\\
    &=\E_{t,\x_0,\epsilonv}\left[\left|\tr\left(\sigma_t^2\nabla_{\x}\s_\theta(\x_t,t)+\Iv-(\sigma_t \hat\s_1+\epsilonv)(\sigma_t \hat\s_1+\epsilonv)^\top\right)\right|^2\right]\\
    &=\E_{t,\x_0,\epsilonv}\left[\left|\E_{p(\vv)}\left[\vv^\top\left(\sigma_t^2\nabla_{\x}\s_\theta(\x_t,t)+\Iv-(\sigma_t \hat\s_1+\epsilonv)(\sigma_t \hat\s_1+\epsilonv)^\top\right)\vv\right] \right|^2\right]\\
    &\leq \E_{t,\x_0,\epsilonv}\E_{p(\vv)}\left[\left|\vv^\top\left(\sigma_t^2\nabla_{\x}\s_\theta(\x_t,t)+\Iv-(\sigma_t \hat\s_1+\epsilonv)(\sigma_t \hat\s_1+\epsilonv)^\top\right)\vv\right|^2\right]\\
    &=\E_{t,\x_0,\epsilonv}\E_{p(\vv)}\left[\left|\sigma_t^2\vv^\top\nabla_{\x}\s_\theta(\x_t,t)\vv+\|\vv\|_2^2-|\sigma_t\hat\s_1\cdot\vv+\epsilonv\cdot\vv|^2\right|^2\right]\\
    &=\Jc_{\DSM,\text{estimation}}^{(2,\text{tr})}(\theta).
\end{align}
And the estimated third-order DSM objective in Eqn.~\eqref{unbiased:dsm3} can also upper bound the original objective in Sec.~\ref{sec:variance_reduction}. To prove that, we firstly propose the following theorem, which presents an equivalent form of the third-order DSM objective in Eqn.~\eqref{eqn:dsm-3-obj} for score models and the corresponding derivatives.
\begin{theorem}
\label{thrm:third-equiv}
(Error-Bounded Third-Order DSM by score models with trace estimators) Suppose that $\hat\s_1(\x_t,t)$ is an estimation for $\nabla_{\x}\log q_t(\x_t)$, and its derivative $\hat\s_2(\x_t,t)\coloneqq\nabla_{\x}\hat\s_1(\x_t,t)$ is an estimation for $\nabla_{\x}^2\log q_t(\x_t)$. Denote the Jacobian-vector-product of $\hat\s_1$ as $\hat\s_{jvp}(\x_t,t,\vv)\coloneqq \nabla_{\x}\hat\s_1(\x_t,t)\vv$. Assume we have a neural network $\s_\theta(\cdot,t):\R^d\rightarrow\R^d$ parameterized by $\theta$, then we can learn a third-order score model $\nabla_{\x}\tr(\nabla_{\x}\s_\theta(\x_t,t)): \R^{d}\rightarrow\R^{d}$ which minimizes
\begin{equation*}
    \E_{q_t(\x_t)}\left[\left\|\nabla_{\x}\tr(\nabla_{\x}\s_\theta(\x_t,t))-\nabla_{\x}\tr(\nabla_{\x}^2\log q_t(\x_t))\right\|_2^2\right],
\end{equation*}
by optimizing
\begin{equation}
\label{eqn:dsm-3-obj-by-estimation}
\begin{split}
    &\theta^*=\argmin_{\theta}\E_{\x_0,\epsilonv}\left[\frac{1}{\sigma_t^6}\big\|\sigma_t^3\nabla_{\x}\tr(\nabla_{\x}\s_\theta(\x_t,t))+\E_{p(\vv)}\left[\ellv_3(\x_t,t,\epsilonv,\vv)\right]\big\|_2^2\right]
\end{split}
\end{equation}
where
\begin{equation}
\label{eqn:ell_3_equiv}
\begin{split}
    &\ellv_3(\x_t,t,\epsilonv,\vv)\coloneqq|\sigma_t \hat\s_1\cdot\vv+\epsilonv\cdot\vv|^2(\sigma_t \hat\s_1+\epsilonv)-(\sigma_t^2\vv^\top\hat\s_{jvp}+\|\vv\|_2^2)(\sigma_t \hat\s_1+\epsilonv)-2(\sigma_t \hat\s_1\cdot\vv+\epsilonv\cdot\vv)(\sigma_t^2 \hat\s_{jvp}+\vv)\\
    &\x_t=\alpha_t\x_0+\sigma_t\epsilonv,\quad \epsilonv\sim\N(\vect{0},\Iv),\\
    &\vv\sim p(\vv),\ \E_{p(\vv)}[\vv]=\vect{0},\ \Cov_{p(\vv)}[\vv]=\Iv.
\end{split}
\end{equation}
Denote the first-order score matching error as $\delta_1(\x_t,t)\coloneqq\|\hat\s_1(\x_t,t)-\nabla_{\x}\log q_t(\x_t)\|_2$ and the second-order score matching errors as $\delta_2(\x_t,t)\coloneqq\|\nabla_{\x}\hat\s_1(\x_t,t)-\nabla_{\x}^2\log q_t(\x_t)\|_F$ and $\delta_{2,\text{tr}}(\x_t,t)\coloneqq|\tr(\nabla_{\x}\hat\s_1(\x_t,t))-\tr(\nabla_{\x}^2\log q_t(\x_t))|$. Then $\forall \x_t, \theta$, the score matching error for $\nabla_{\x}\tr(\nabla_{\x}\s_\theta(\x_t,t))$ can be bounded by:
\begin{equation*}
\begin{split}
    &\left\|\nabla_{\x}\tr(\nabla_{\x}\s_\theta(\x_t,t))-\nabla_{\x}\tr(\nabla_{\x}^2\log q_t(\x_t))\right\|_2\\
    \leq\ &\left\|\nabla_{\x}\tr(\nabla_{\x}\s_\theta(\x_t,t))-\nabla_{\x}\tr(\nabla_{\x}\s_{\theta^*}(\x_t,t))\right\|_2+\big(\delta_1^2+\delta_{2,\text{tr}}+2\delta_2\big)\delta_1^2
\end{split}
\end{equation*}
\end{theorem}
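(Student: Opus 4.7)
The plan is to reduce Theorem \ref{thrm:third-equiv} directly to Theorem \ref{thrm:third} by identifying the right parameterization and then showing the two training objectives coincide. Specifically, I will set $\s_3(\x_t,t;\theta) := \nabla_{\x}\tr(\nabla_{\x}\s_\theta(\x_t,t))$ and $\hat\s_2 := \nabla_{\x}\hat\s_1$, so that $\hat\s_{jvp}(\x_t,t,\vv) = \hat\s_2 \vv$. Under these substitutions, the error bound in Theorem \ref{thrm:third-equiv} is literally the error bound of Theorem \ref{thrm:third}, provided I can verify that the inner expectation $\E_{p(\vv)}[\ellv_3(\x_t,t,\epsilonv,\vv)]$ in Eqn.~\eqref{eqn:dsm-3-obj-by-estimation} equals the deterministic $\ellv_3$ defined in Eqn.~\eqref{eqn:ell_3}. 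Everything else then follows.

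The key step is therefore a moment calculation using only $\E_{p(\vv)}[\vv]=\vect{0}$ and $\E_{p(\vv)}[\vv\vv^\top]=\Iv$. I would handle the three pieces of $\ellv_3(\x_t,t,\epsilonv,\vv)$ in turn. First, rewriting $|\sigma_t\hat\s_1\cdot\vv+\epsilonv\cdot\vv|^2 = (\sigma_t\hat\s_1+\epsilonv)^\top\vv\vv^\top(\sigma_t\hat\s_1+\epsilonv)$ gives expectation $\|\ellv_1\|_2^2$. Second, $\E_{p(\vv)}[\sigma_t^2\vv^\top\hat\s_{jvp}+\|\vv\|_2^2] = \sigma_t^2\tr(\hat\s_2)+d = \tr(\ellv_2)$ via the Hutchinson identity. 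Third, writing $((\sigma_t\hat\s_1+\epsilonv)^\top\vv)(\sigma_t^2\hat\s_{jvp}+\vv) = (\sigma_t^2\hat\s_2+\Iv)\vv\vv^\top(\sigma_t\hat\s_1+\epsilonv)$ yields expectation $\ellv_2\ellv_1$. Summing the three contributions, I obtain
\begin{equation*}
\E_{p(\vv)}[\ellv_3(\x_t,t,\epsilonv,\vv)] \;=\; \|\ellv_1\|_2^2\,\ellv_1 - \tr(\ellv_2)\,\ellv_1 - 2\ellv_2\ellv_1 \;=\; \bigl(\|\ellv_1\|_2^2\Iv - \tr(\ellv_2)\Iv - 2\ellv_2\bigr)\ellv_1,
\end{equation*}
which is exactly the $\ellv_3$ of Eqn.~\eqref{eqn:ell_3}.

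Having matched the objectives, I would then invoke Theorem \ref{thrm:third} applied to the score model $\s_3(\x_t,t;\theta) = \nabla_{\x}\tr(\nabla_{\x}\s_\theta(\x_t,t))$ with lower-order estimators $\hat\s_1$ and $\hat\s_2 = \nabla_{\x}\hat\s_1$. This immediately produces the claimed optimal-solution characterization (optimum at $\nabla_{\x}\tr(\nabla_{\x}^2\log q_t)$) and the claimed error bound, with $\delta_1,\delta_2,\delta_{2,\text{tr}}$ reading off from the definitions in the theorem statement. I expect the main (and essentially only) obstacle to be the bookkeeping in the moment computation, especially making sure the outer product $\vv\vv^\top$ is placed on the correct side of $\sigma_t^2\hat\s_2+\Iv$ when handling the $2(\sigma_t\hat\s_1\cdot\vv+\epsilonv\cdot\vv)(\sigma_t^2\hat\s_{jvp}+\vv)$ term; once that is laid out cleanly, the reduction to Theorem \ref{thrm:third} is mechanical.
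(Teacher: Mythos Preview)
Your approach is correct and in fact more direct than the paper's. You reduce Theorem~\ref{thrm:third-equiv} to Theorem~\ref{thrm:third} by verifying, via a second-moment computation in $\vv$, that $\E_{p(\vv)}[\ellv_3(\x_t,t,\epsilonv,\vv)]$ equals the deterministic $\ellv_3$ of Eqn.~\eqref{eqn:ell_3}; since the two objectives are then literally the same function of $\theta$ (under the substitution $\s_3=\nabla_{\x}\tr(\nabla_{\x}\s_\theta)$, $\hat\s_2=\nabla_{\x}\hat\s_1$), both the optimality claim and the error bound transfer immediately. The paper instead proves Theorem~\ref{thrm:third-equiv} from scratch: it introduces a separate lemma (Lemma~\ref{lemma:third-with-vector}) giving an expectation formula for $\nabla_{\x}(\vv^\top\nabla_{\x}^2\log q_t\vv)$, computes the $\vv$-conditional optimal solution $\s_3(\vv,\theta^*)$ and its deviation from $\nabla_{\x}(\vv^\top\nabla_{\x}^2\log q_t\vv)$, and only then averages over $\vv$ to recover the same expression~\eqref{eqn:appendix-third-order-optimal} as in Theorem~\ref{thrm:third}. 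The paper's route yields the per-$\vv$ identity as a byproduct, and the equivalence of the two objectives is stated afterwards as a separate corollary (Corollary~\ref{coro:third-equiv}) by matching optimal solutions; your route establishes that equivalence directly at the level of the objective function and avoids the extra lemma entirely.
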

\begin{proof}
For simplicity, we denote $q_{0t}\coloneqq q_{0t}(\x_t|\x_0)$, $q_{t0}\coloneqq q_{t0}(\x_0|\x_t)$, $q_t\coloneqq q_t(\x_t)$, $q_0\coloneqq q_0(\x_0)$, $\hat\s_1\coloneqq\hat\s_1(\x_t,t)$, $\hat\s_{jvp}\coloneqq\hat\s_{jvp}(\x_t,t)$, $\nabla\tr(\nabla\s(\theta))\coloneqq\nabla_{\x}\tr(\nabla_{\x}\s_\theta(\x_t,t))$.

As $\nabla\log q_{0t}=-\frac{\epsilonv}{\sigma_t}$ and $\nabla^2\log q_{0t}=-\frac{1}{\sigma_t^2}\Iv$, by rewriting the objective in Eqn.~\eqref{eqn:dsm-3-obj-by-estimation}, the optimization is equivalent to:
\begin{equation}
\begin{split}
    \theta^*&=\argmin_{\theta} \E_{q_t}\E_{q_{t0}}\left[\Bigg\|\nabla\tr(\nabla\s(\theta))-\E_{p(\vv)}\bigg[\left|(\nabla\log q_{0t}-\hat\s_1)^\top\vv\right|^2\left(\nabla\log q_{0t}-\hat\s_1\right)\right.\\
    &\left.-\left(\vv^\top(\nabla\hat\s_1-\nabla^2\log q_{0t})\vv\right)(\nabla\log q_{0t}-\hat\s_1)-2\left((\nabla\log q_{0t}-\hat\s_1)^\top\vv\right)(\nabla\hat\s_1-\nabla^2\log q_{0t})\vv\bigg]\Bigg\|^2_2\right]
\end{split}
\end{equation}
For fixed $t$ and $\x_t$, minimizing the inner expectation is a minimum mean square error problem for $\nabla\tr(\nabla\s(\theta))$, so the optimal $\theta^*$ satisfies
\begin{align}
    \nabla\tr(\nabla\s(\theta^*))&=\E_{q_{t0}}\E_{p(\vv)}\Big[\left|(\nabla\log q_{0t}-\hat\s_1)^\top\vv\right|^2\left(\nabla\log q_{0t}-\hat\s_1\right)\\
    &\quad-\left(\vv^\top(\hat\s_2-\nabla^2\log q_{0t})\vv\right)(\nabla\log q_{0t}-\hat\s_1)-2\left((\nabla\log q_{0t}-\hat\s_1)^\top\vv\right)(\hat\s_2-\nabla^2\log q_{0t})\vv\Big].
\end{align}
Denote
\begin{align}
    \s_3(\vv,\theta^*)&\coloneqq\E_{q_{t0}}\Big[\left|(\nabla\log q_{0t}-\hat\s_1)^\top\vv\right|^2\left(\nabla\log q_{0t}-\hat\s_1\right)\\
    &\quad-\left(\vv^\top(\hat\s_2-\nabla^2\log q_{0t})\vv\right)(\nabla\log q_{0t}-\hat\s_1)-2\left((\nabla\log q_{0t}-\hat\s_1)^\top\vv\right)(\hat\s_2-\nabla^2\log q_{0t})\vv\Big],
\end{align}
then we have $\E_{p(\vv)}[\s_3(\vv,\theta^*)]=\nabla\tr(\nabla\s(\theta^*))$. Similar to the proof in Appendix.~\ref{appendix:thrm_third}, combining with Lemma~\ref{lemma:third-with-vector}, we have
\begin{equation}
\begin{aligned}
    \s_3(\vv,\theta^*)-\nabla(\vv^\top\nabla^2\log q_t\vv)&=\s_3(\vv,\theta^*)-\E_{q_{t0}}\left[|(\nabla\log q_{0t}-\nabla\log q_t)^\top\vv|^2(\nabla\log q_{0t}-\nabla\log q_t)\right]\\
    &=2\left((\nabla\log q_t-\hat\s_1)^\top\vv\right)(\nabla\log q_t\nabla\log q_t^\top+\nabla^2\log q_t-\hat\s_2)\vv\\
    &\quad+\vv^\top(\nabla\log q_t\nabla\log q_t^\top+\nabla^2\log q_t-\hat\s_2)\vv(\nabla\log q_t-\hat\s_1)\\
    &\quad+\vv^\top(\hat\s_1\hat\s_1^\top-\nabla\log q_t\nabla\log q_t^\top)\vv\nabla\log q_t\\
    &\quad+2\left((\hat\s_1^\top\vv)(\nabla\log q_t^\top\vv)\hat\s_1-(\nabla\log q_t^\top\vv)^2\nabla\log q_t\right)\\
    &\quad+(\nabla\log q_t^\top\vv)^2\nabla\log q_t-(\hat\s_1^\top\vv)^2\hat\s_1\\
    &=\left(2(\nabla^2\log q_t-\hat\s_2)\vv\vv^\top+\vv^\top(\nabla^2\log q_t-\hat\s_2)\vv\right)(\nabla\log q_t-\hat\s_1)\\
    &\quad+\left|(\nabla\log q_t-\hat\s_1)^\top\vv\right|^2(\nabla\log q_t-\hat\s_1)
\end{aligned}
\end{equation}
As $\E_{p(\vv)}\left[\vv^\top A\vv\right]=\tr(A)$ and $\E_{p(\vv)}\left[\vv\vv^\top\right]=\E\left[\vv\right]\E\left[\vv\right]^\top+\Cov\left[\vv\right]=\Iv$, we have
\begin{align}
    \nabla\tr(\nabla\s(\theta^*))-\nabla\tr(\nabla^2\log q_t)&=\E_{p(\vv)}\left[\s_3(\vv,\theta^*)-\nabla(\vv^\top\nabla^2\log q_t\vv)\right]\\
    \nonumber
    &=\left(2\left(\nabla^2\log q_t-\hat\s_2\right)+\left(\tr(\nabla^2\log q_t)-\tr(\hat\s_2)\right)\right)(\nabla\log q_t-\hat\s_1)\\
    &\quad+\|\nabla\log q_t-\hat\s_1\|_2^2(\nabla\log q_t-\hat\s_1)\label{eqn:appendix-third-order-optimal}
\end{align}
Therefore, we have
\begin{align}
    \|\nabla\tr(\nabla\s(\theta))-\nabla\tr(\nabla^2\log q_t)\|_2&\leq \|\nabla\tr(\nabla\s(\theta))-\nabla\tr(\nabla\s(\theta^*))\|_2+\|\nabla\tr(\nabla\s(\theta^*))-\nabla\tr(\nabla^2\log q_t)\|_2\\
    &\leq\|\nabla\tr(\nabla\s(\theta))-\nabla\tr(\nabla\s(\theta^*))\|_2+\|\nabla\log q_t-\hat\s_1\|_2^3\\
    &\quad+\Big(2\big\|\nabla^2\log q_t-\hat\s_2\big\|_F+\big|\tr(\nabla^2\log q_t)-\tr(\hat\s_2)\big|\Big)\|\nabla\log q_t-\hat\s_1\|_2,
\end{align}
which completes the proof.
\end{proof}

Below we show that the objective in Eqn.~\eqref{eqn:dsm-3-obj} in Theorem~\ref{thrm:third} is equivalent to the objective in Eqn.~\eqref{eqn:dsm-3-obj-by-estimation} in Theorem~\ref{thrm:third-equiv}.
\begin{corollary}
\label{coro:third-equiv}
Suppose that $\hat\s_1(\x_t,t)$ is an estimation for $\nabla_{\x}\log q_t(\x_t)$, and its derivative $\hat\s_2(\x_t,t)\coloneqq\nabla_{\x}\hat\s_1(\x_t,t)$ is an estimation for $\nabla_{\x}^2\log q_t(\x_t)$. Denote the Jacobian-vector-product of $\hat\s_1$ as $\hat\s_{jvp}(\x_t,t,\vv)\coloneqq \nabla_{\x}\hat\s_1(\x_t,t)\vv$. Assume we have a neural network $\s_\theta(\cdot,t):\R^d\rightarrow\R^d$ parameterized by $\theta$, and we learn a third-order score model $\nabla_{\x}\tr(\nabla_{\x}\s_\theta(\x_t,t)): \R^{d}\rightarrow\R^{d}$ by the third-order DSM objectives. Then the objective in Eqn.~\eqref{eqn:dsm-3-obj-by-estimation} is equivalent to the objective in Eqn.~\eqref{eqn:dsm-3-obj} w.r.t. the optimization for $\theta$.
\end{corollary}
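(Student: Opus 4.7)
The plan is to show that the two objectives are literally equal as functions of $\theta$ (not merely equivalent up to a reparameterisation), by verifying the identity $\E_{p(\vv)}[\ellv_3(\x_t,t,\epsilonv,\vv)] = \ellv_3$, where the left-hand side is as defined in Eqn.~\eqref{eqn:ell_3_equiv} and the right-hand side is as in Eqn.~\eqref{eqn:ell_3}. Because the inner expectation sits inside the squared norm in Eqn.~\eqref{eqn:dsm-3-obj-by-estimation} and $\ellv_3$ does not depend on $\vv$, this identity immediately collapses that objective into Eqn.~\eqref{eqn:dsm-3-obj} under the substitution $\s_3(\x_t,t;\theta) := \nabla_{\x}\tr(\nabla_{\x}\s_\theta(\x_t,t))$ together with $\hat\s_2 := \nabla_{\x}\hat\s_1$, so the two problems have the same value at every $\theta$ and hence the same minimiser.

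The first step is a purely algebraic rewrite. Introducing $\ellv_1 = \sigma_t\hat\s_1+\epsilonv$ and $\ellv_2 = \sigma_t^2\hat\s_2+\Iv$ as in Eqn.~\eqref{eqn:ell_3}, and using $\hat\s_{jvp}=\hat\s_2\vv$ by construction, the three pieces appearing in Eqn.~\eqref{eqn:ell_3_equiv} become $\sigma_t\hat\s_1\cdot\vv+\epsilonv\cdot\vv = \vv^\top\ellv_1$, $\sigma_t^2\vv^\top\hat\s_{jvp}+\|\vv\|_2^2 = \vv^\top\ellv_2\vv$, and $\sigma_t^2\hat\s_{jvp}+\vv = \ellv_2\vv$, so that
\[
\ellv_3(\vv) \;=\; (\vv^\top\ellv_1)^2\,\ellv_1 \;-\; (\vv^\top\ellv_2\vv)\,\ellv_1 \;-\; 2\,(\vv^\top\ellv_1)\,\ellv_2\vv.
\]
The second step is term-by-term integration in $\vv$ using only the two Hutchinson-style identities $\E_{p(\vv)}[\vv\vv^\top]=\Iv$ and $\E_{p(\vv)}[\vv^\top A\vv]=\tr(A)$, both of which follow from the assumed $\E[\vv]=\vect{0}$ and $\Cov[\vv]=\Iv$. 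One obtains $\E_\vv[(\vv^\top\ellv_1)^2]=\|\ellv_1\|_2^2$, $\E_\vv[\vv^\top\ellv_2\vv]=\tr(\ellv_2)$, and (pulling the $\vv$-independent $\ellv_1,\ellv_2$ outside) $\E_\vv[(\vv^\top\ellv_1)\,\ellv_2\vv]=\ellv_2\,\E_\vv[\vv\vv^\top]\,\ellv_1 = \ellv_2\ellv_1$; summing and factoring $\ellv_1$ on the right reproduces $(\|\ellv_1\|_2^2\Iv-\tr(\ellv_2)\Iv-2\ellv_2)\ellv_1 = \ellv_3$ exactly as defined in Eqn.~\eqref{eqn:ell_3}.

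No deep obstacle is expected here: once the rewriting is done, the identity is mechanical. The only point requiring care is bookkeeping, namely keeping $\ellv_1$ and $\ellv_2$ (which depend only on $\x_t,\epsilonv,\theta$) separate from $\vv$ before applying the Hutchinson identities, and noting that $\hat\s_{jvp}=\hat\s_2\vv$ with $\hat\s_2=\nabla_{\x}\hat\s_1$ is exactly the relationship that allows Eqn.~\eqref{eqn:ell_3_equiv} to be recast in the $\ellv_1,\ellv_2$ notation in the first place.
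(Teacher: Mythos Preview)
Your proof is correct and in fact takes a cleaner route than the paper's. You show directly that $\E_{p(\vv)}[\ellv_3(\x_t,t,\epsilonv,\vv)]=\ellv_3$ via the two Hutchinson identities, which makes the two objectives \emph{literally equal} as functions of $\theta$ once one substitutes $\s_3(\x_t,t;\theta)=\nabla_{\x}\tr(\nabla_{\x}\s_\theta(\x_t,t))$ and uses $\hat\s_2=\nabla_{\x}\hat\s_1$. The paper instead argues indirectly: it invokes the optimal solutions computed separately in the proofs of Theorem~\ref{thrm:third} (Eqn.~\eqref{eqn:appendix-third-original-optimal}) and Theorem~\ref{thrm:third-equiv} (Eqn.~\eqref{eqn:appendix-third-order-optimal}), observes that these coincide, and then uses the MMSE decomposition to conclude that each objective equals $\E_{q_t}\|\nabla_{\x}\tr(\nabla_{\x}\s_\theta)-\nabla_{\x}\tr(\nabla_{\x}\s_{\theta^*})\|_2^2$ plus a $\theta$-independent constant. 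Your argument is self-contained and yields a stronger conclusion (equality of objectives, not merely equality up to an additive constant); the paper's argument has the minor advantage of reusing formulas already derived, but is longer and relies on two prior computations.
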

\begin{proof}
For simplicity, we denote $q_{0t}\coloneqq q_{0t}(\x_t|\x_0)$, $q_{t0}\coloneqq q_{t0}(\x_0|\x_t)$, $q_t\coloneqq q_t(\x_t)$, $q_0\coloneqq q_0(\x_0)$, $\hat\s_1\coloneqq\hat\s_1(\x_t,t)$, $\hat\s_{jvp}\coloneqq\hat\s_{jvp}(\x_t,t)$, $\nabla\tr(\nabla\s(\theta))\coloneqq\nabla_{\x}\tr(\nabla_{\x}\s_\theta(\x_t,t))$.

On the one hand, by Eqn.~\eqref{eqn:appendix-third-order-optimal}, the optimal solution of the objective in Eqn.~\eqref{eqn:dsm-3-obj-by-estimation} is:
\begin{align}
    \nonumber
    \nabla\tr(\nabla\s(\theta^*))&=\nabla\tr(\nabla^2\log q_t)+\left(2\left(\nabla^2\log q_t-\hat\s_2\right)+\left(\tr(\nabla^2\log q_t)-\tr(\hat\s_2)\right)\right)(\nabla\log q_t-\hat\s_1)\\
    &\quad+\|\nabla\log q_t-\hat\s_1\|_2^2(\nabla\log q_t-\hat\s_1),
\end{align}
so by the property of least mean square error, the objective in Eqn.~\eqref{eqn:dsm-3-obj-by-estimation} w.r.t. the optimization of $\theta$ is equivalent to
\begin{equation}
\begin{split}
    \min_{\theta}\E_{\x_0,\epsilonv}\left[\big\|\nabla_{\x}\tr(\nabla_{\x}\s_\theta(\x_t,t))-\nabla_{\x}\tr(\nabla_{\x}\s_{\theta^*}(\x_t,t))\big\|_2^2\right].
\end{split}
\end{equation}
On the other hand, by Eqn.~\eqref{eqn:appendix-third-original-optimal}, the optimal solution of the objective in Eqn.~\eqref{eqn:dsm-3-obj} is also:
\begin{align}
    \nonumber
    \nabla\tr(\nabla\s(\theta^*))&=\nabla\tr(\nabla^2\log q_t)+\left(2\left(\nabla^2\log q_t-\hat\s_2\right)+\left(\tr(\nabla^2\log q_t)-\tr(\hat\s_2)\right)\right)(\nabla\log q_t-\hat\s_1)\\
    &\quad+\|\nabla\log q_t-\hat\s_1\|_2^2(\nabla\log q_t-\hat\s_1),
\end{align}
so by the property of least mean square error, the objective in Eqn.~\eqref{eqn:dsm-3-obj} w.r.t. the optimization of $\theta$ is also equivalent to
\begin{equation}
\begin{split}
    \min_{\theta}\E_{\x_0,\epsilonv}\left[\big\|\nabla_{\x}\tr(\nabla_{\x}\s_\theta(\x_t,t))-\nabla_{\x}\tr(\nabla_{\x}\s_{\theta^*}(\x_t,t))\big\|_2^2\right].
\end{split}
\end{equation}
Therefore, the two objectives in Eqn.~\eqref{eqn:dsm-3-obj} and Eqn.~\eqref{eqn:dsm-3-obj-by-estimation} are equivalent w.r.t. $\theta$.
\end{proof}

Therefore, by \cref{coro:third-equiv}, we can derive an equivalent formulation of $\Jc_{\DSM}^{(3)}(\theta)$ by Theorem.~\ref{thrm:third-equiv}:
\begin{equation}
    \Jc_{\DSM}^{(3)}(\theta)= \E_{t,\x_0,\epsilonv}\Big[\big\|\sigma_t^3\nabla_{\x}\!\tr(\nabla_{\x}\s_\theta(\x_t,t))+\E_{p(\vv)}\left[\ellv_3(\x_t,t,\epsilonv,\vv)\right]\big\|_2^2\Big],
\end{equation}
where $\ellv_3(\x_t,t,\epsilonv,\vv)$ is defined in Eqn.~\eqref{eqn:ell_3_equiv}. Thus, we have
\begin{equation}
\begin{aligned}
    \Jc_{\DSM}^{(3)}(\theta)&= \E_{t,\x_0,\epsilonv}\Big[\big\|\sigma_t^3\E_{p(\vv)}\left[\nabla_{\x}\left(\vv^\top\nabla_{\x}\s_\theta(\x_t,t)\vv\right)\right]+\E_{p(\vv)}\left[\ellv_3(\x_t,t,\epsilonv,\vv)\right]\big\|_2^2\Big]\\
    &\leq \E_{t,\x_0,\epsilonv}\E_{p(\vv)}\Big[\big\|\sigma_t^3\nabla_{\x}\left(\vv^\top\nabla_{\x}\s_\theta(\x_t,t)\vv\right)+\ellv_3(\x_t,t,\epsilonv,\vv)\big\|_2^2\Big]\\
    &= \Jc_{\DSM,\text{estimation}}^{(3)}(\theta)
\end{aligned}
\end{equation}

\section{Training algorithm}
In this section, we propose our training algorithm for high-dimensional data, based on the high-order DSM objectives in Appendix.~\ref{appendix:ubiased_estimation}.

Let `JVP' and `VJP' denote forward-mode Jacobian-vector-product and reverse-mode vector-Jacobian-product with auxiliary data. More specifically, suppose $\fv$ is a function $\R^d\rightarrow\R^d$, and fn($\x$) = ($\fv(\x)$, aux), where aux is auxiliary data, then JVP(fn, $\x_0$, $\vv$) = ($\fv(\x_0)$, $\nabla\fv|_{\x=\x_0}\vv$, aux) and VJP(fn, $\x_0$, $\vv$) = ($\fv(\x_0)$, $\vv^\top\nabla\fv|_{\x=\x_0}$, aux).
\label{appendix:training_algorithm}
\begin{algorithm}[ht!]
\caption{Training of high-order denoising score matching}
\label{algorithm:1}
\textbf{Require:} score network $\s_\theta$, hyperparameters $\lambda_1,\lambda_2$, smallest time $\epsilon$, forward SDE, proposal distribution $p(t)$\\
\textbf{Input:} sample $\x_0$ from data distribution\\
\textbf{Output:} denoising score matching loss $\Jc_{\DSM}$

\quad Sample $\epsilonv$ from $\N(\vect{0},\Iv)$

\quad Sample $\vv$ from standard Gaussian or Rademacher distribution

\quad Sample $t$ from proposal distribution $p(t)$ ($\Uc(\epsilon,1)$ for VE SDEs)

\quad Get mean $\alpha_t$ and std $\sigma_t$ at time $t$ from the forward SDE.

\quad $\x_t\leftarrow \alpha_t\x_0+\sigma_t\epsilonv$
  
\quad \textbf{def} grad\_div\_fn($\x_t$, $t$, $\vv$):

\quad \quad \textbf{def} score\_jvp\_fn($\x_t$):

\quad \quad \quad $\s_\theta\leftarrow$ lambda $\x$: $\s_\theta(\x, t)$

\quad \quad \quad $\s_\theta(\x_t,t), \s_{jvp}\leftarrow$ JVP($\s_\theta$, $\x_t$, $\vv$)
    
\quad \quad \quad \textbf{return} $\s_{jvp}$, $\s_\theta(\x_t,t)$

\quad \quad $\s_{jvp}$, $\vv^\top\nabla\s_{jvp}$, $\s_\theta(\x_t,t)\leftarrow$ VJP(score\_jvp\_fn, $\x_t$, $\vv$)

\quad \quad \textbf{return} $\s_{jvp}$, $\vv^\top\nabla\s_{jvp}$, $\s_\theta(\x_t,t)$

\quad $\s_{jvp}$, $\vv^\top\nabla\s_{jvp}$, $\s_\theta(\x_t,t)\leftarrow$ grad\_div\_fn($\x_t$, $t$, $\vv$)

\quad $\hat\s_1\leftarrow$ stop\_gradient($\s_\theta(\x_t,t)$)

\quad $\hat\s_{jvp}\leftarrow$ stop\_gradient($\s_{jvp}$)

\quad Calculate $\Jc_{\DSM}^{(1)}(\theta),\Jc_{\DSM}^{(2)}(\theta),\Jc_{\DSM}^{(2,\text{tr})}(\theta),\Jc_{\DSM}^{(3)}(\theta)$ from Eqn.~\eqref{eqn:dsm-1-final}~\eqref{unbiased:dsm2}~\eqref{unbiased:dsm2tr}~\eqref{unbiased:dsm3}

\quad \textbf{return} $\Jc_{\DSM}^{(1)}(\theta)+\lambda_1\left(\Jc_{\DSM}^{(2)}(\theta)+\Jc_{\DSM}^{(2,\text{tr})}(\theta)\right)+\lambda_2\Jc_{\DSM}^{(3)}(\theta)$

\end{algorithm}
\section{Experiment details}
\label{appendix:experiments_details}

\subsection{Choosing of $\lambda_1,\lambda_2$}
\label{appendix:choose_lambda}
As we use Monto-Carlo method to unbiasedly estimate the expectations, we empirically find that we need to ensure the mean values of $\Jc_{\DSM}^{(1)}(\theta),\lambda_1\left(\Jc_{\DSM}^{(2)}(\theta)+\Jc_{\DSM}^{(2,\text{tr})}(\theta)\right)$ and $\lambda_2\Jc_{\DSM}^{(3)}(\theta)$ are in the same order of magnitude.
Therefore, for synthesis data experiments, we choose $\lambda_1=0.5$ and $\lambda_2=0.1$ for our final objectives. And for CIFAR-10 experiments, we simply choose $\lambda_1=\lambda_2=1$ with no further tuning.

Specifically, for synthesis data, we choose $\lambda_1=0.5$, $\lambda_2=0$ as the second-order score matching objective, and $\lambda_1=0.5$, $\lambda_2=0.1$ as the third-order score matching objective. For CIFAR-10, we simply choose $\lambda_1=1$, $\lambda_2=0$ as the second-order score matching objective, and $\lambda_1=\lambda_2=1$ as the third-order score matching objective.

In practice, the implementation of the first-order DSM objective in~\citep{song2020score, song2021maximum} divides the loss function by the data dimension (i.e. use $\frac{\|\cdot\|_2^2}{d}$ instead of $\|\cdot\|_2^2$). We also divide the second-order DSM and the third-order DSM objectives by the data dimension for the image data to balance the magnitudes of the first, second and third-order objectives. Please refer to the implementation in our released code for details.

\subsection{1-D mixture of Gaussians}
\label{appendix:mog_experiments}
We exactly compute the high-order score matching objectives in Sec.~\ref{sec:variance_reduction}, and choose the starting time $\epsilon=10^{-5}$.

The density function of the data distribution is
\begin{equation}
    q_0(\x_0)=0.4*\Nc(\x_0|-\frac{2}{9},\frac{1}{9^2})+0.4*\Nc(\x_0|-\frac{2}{3},\frac{1}{9^2})+0.2*\Nc(\x_0|\frac{4}{9},\frac{2}{9^2})
\end{equation}
We use the ``noise-prediction'' type model~\cite{kingma2021variational}, i.e. we use a neural network $\epsilonv_{\theta}(\x_t,t)$ to model $\sigma_t\s_\theta(\x_t,t)$. We use the time-embedding in~\citet{song2020score}. For the score model, we use a two-layer MLP to encode $t$, and a two-layer MLP to encode the input $\x_t$, then concatenate them together to another two-layer MLP network to output the predicted noise $\epsilonv_\theta(\x_t,t)$.

We use Adam~\cite{kingma2014adam} optimizer with the default settings in JAX. The batch size is $5000$. We train the model for $50$k iterations by one NVIDIA GeForece RTX 2080 Ti GPU card.

As our proposed high-order DSM algorithm has the property of bounded errors, we directly train our model from the default initialized neural network, without any pre-training for the lower-order models.

\subsection{2-D checkerboard data}
\label{appendix:checkerboard}
We exactly compute the high-order score matching objectives in Sec.~\ref{sec:variance_reduction}, and choose the starting time $\epsilon=10^{-5}$.

We use the same neural network and hyperparameters in Appendix.~\ref{appendix:mog_experiments}, and we train for $100$k iterations for all experiments by one NVIDIA GeForece RTX 2080 Ti GPU card.

Similar to Appendix.~\ref{appendix:mog_experiments}, we directly train our model from the default initialized neural network, without any pre-training for the lower-order models.

\subsection{CIFAR-10 experiments}
\label{appendix:cifar10}
Our code of CIFAR-10 experiments are based on the released code of \citet{song2020score} and \citet{song2021maximum}. We choose the start time $\epsilon=10^{-5}$ for both training and evaluation.

We train the score model by the proposed high-order DSM objectives both from $0$ iteration and from the pre-trained checkpoints in \citet{song2020score} to a fixed ending iteration, and we empirically find that the model performance by these two training procedure are nearly the same. Therefore, to save time, we simply use the pre-trained checkpoints to further train the second-order and third-order models by a few iteration steps and report the results of the fine-tuning models. Moreover, we find that further train the pre-trained checkpoints by first-order DSM cannot improve the likelihood of {\ode}, so we simply use the pre-trained checkpoints to evaluate the first-order models.

\paragraph{Model architectures}The model architectures are the same as~\cite{song2020score}. For VE, we use NCSN++ cont. which has 4 residual blocks per resolution. For VE (deep), we use NCSN++ cont. deep which has 8 residual blocks per resolution. Also, our network is the ``noise-prediction'' type, same as the implementation in \citet{song2020score}.

\paragraph{Training}We follow the same training procedure and default settings for score-based models as~\cite{song2020score}, and set the exponential moving average (EMA) rate to 0.999 as advised. Also as in~\cite{song2020score}, the input images are pre-processed to be normalized to $[0,1]$ for VESDEs. For all experiments, we set the ``\texttt{n\_jitted\_steps}=1'' in JAX code.

For the experiments of the VE model, we use 8 GPU cards of NVIDIA GeForece RTX 2080 Ti. We use the pre-trained checkpoints of $1200$k iterations of \citet{song2020score}, and further train $100$k iterations (for about half a day) and the model quickly converges. We use a batchsize of 128 of the second-order training, and a batchsize of 48 of the third-order training.

For the experiments of the VE(deep) model, we use 8 GPU cards of Tesla P100-SXM2-16GB. We use the pre-trained checkpoints of $600$k iterations of \citet{song2020score}, and further train $100$k iterations (for about half a day) and the model quickly converges. We use a batchsize of 128 of the second-order training, and a batchsize of 48 of the third-order training.

\paragraph{Likelihood and sample quality} We use the uniform dequantization for likelihood evaluation. For likelihood, we report the bpd on the test dataset with 5 times repeating (to reduce the variance of the trace estimator). For sampling, we find the ode sampler often produce low quality images and instead use the PC sampler~\cite{song2020score} discretized at 1000 time steps to generate 50k samples and report the FIDs on them. We use the released pre-trained checkpoints of the VESDE in \citet{song2020score} to evaluate the likelihood and sample quality of the first-order score matching models.

\paragraph{Computation time, iteration numbers and memory consumption} We list the computation time and memory consumption of the VE model (shallow model) on 8 NVIDIA GeForce RTX 2080 Ti GPU cards with batch size 48 in Table~\ref{tab:cifar10:cost}. The \texttt{n\_jitted\_steps} is set to 1. When training by second-order DSM, we only use \texttt{score\_jvp\_fn} in the training algorithm and remove \texttt{grad\_div\_fn}. The computation time is averaged over 10k iterations. We use  \texttt{jax.profiler} to trace the GPU memory usage during training, and report the peak total memory on 8 GPUs.

We find that the costs of the second-order DSM training are close to the first-order DSM training, and the third-order DSM training costs less than twice of the first-order training. Nevertheless, our method can scale up to high-dimensional data and improve the likelihood of {\ode}s.

\begin{table}
    \centering
    \caption{Benchmark of computation time, iteration numbers and memory consumption on CIFAR-10.}
    \vskip 0.15in
    \begin{small}
    \begin{tabular}{lcccc}
    \toprule
    Model & Time per iteration (s) & Memory in total (GiB) & Training iterations \\
    \midrule
        VE~\cite{song2020score} & 0.28 & 25.84 & 1200k\\
        VE (second)~(\bf{ours}) & 0.33 & 28.93 & 1200k(checkpoints)\ +\ 100k\\
        VE (third)~(\bf{ours}) & 0.44 & 49.12 & 1200k(checkpoints)\ +\ 100k\\
    \bottomrule
    \end{tabular}
    \end{small}
    \label{tab:cifar10:cost}
\end{table}

\subsection{ImageNet 32x32 experiments}
\label{appendix:imagenet32}
We adopt the same start time and model architecture for both VE and VE (deep) as CIFAR-10 experiments. Note that the released code of \citet{song2020score} and \citet{song2021maximum} provides no pretrained checkpoint of VE type for ImageNet 32x32 dataset, so we use their training of VP type as a reference, and train the first-order VE models from scratch. Specifically, we train the VE baseline for 1200k iterations, and the VE (deep) baseline for 950k iterations.

For the high-order experiments of both VE and VE (deep), we further train 100k iterations, using a batchsize of 128 for the second-order training and a batchsize of 48 for the third-order training. We use 8 GPU cards of NVIDIA GeForece RTX 2080 Ti for VE, and 8 GPU cards of NVIDIA A40 for VE (deep). We report the average bpd on the test dataset with 5 times repeating.

\section{Additional results for VE, VP and subVP types}
\label{appendix:additional_samples}

\begin{figure*}[t]
	\centering
	\begin{minipage}{.4\linewidth}
	\centering
	\includegraphics[width=.95\linewidth]{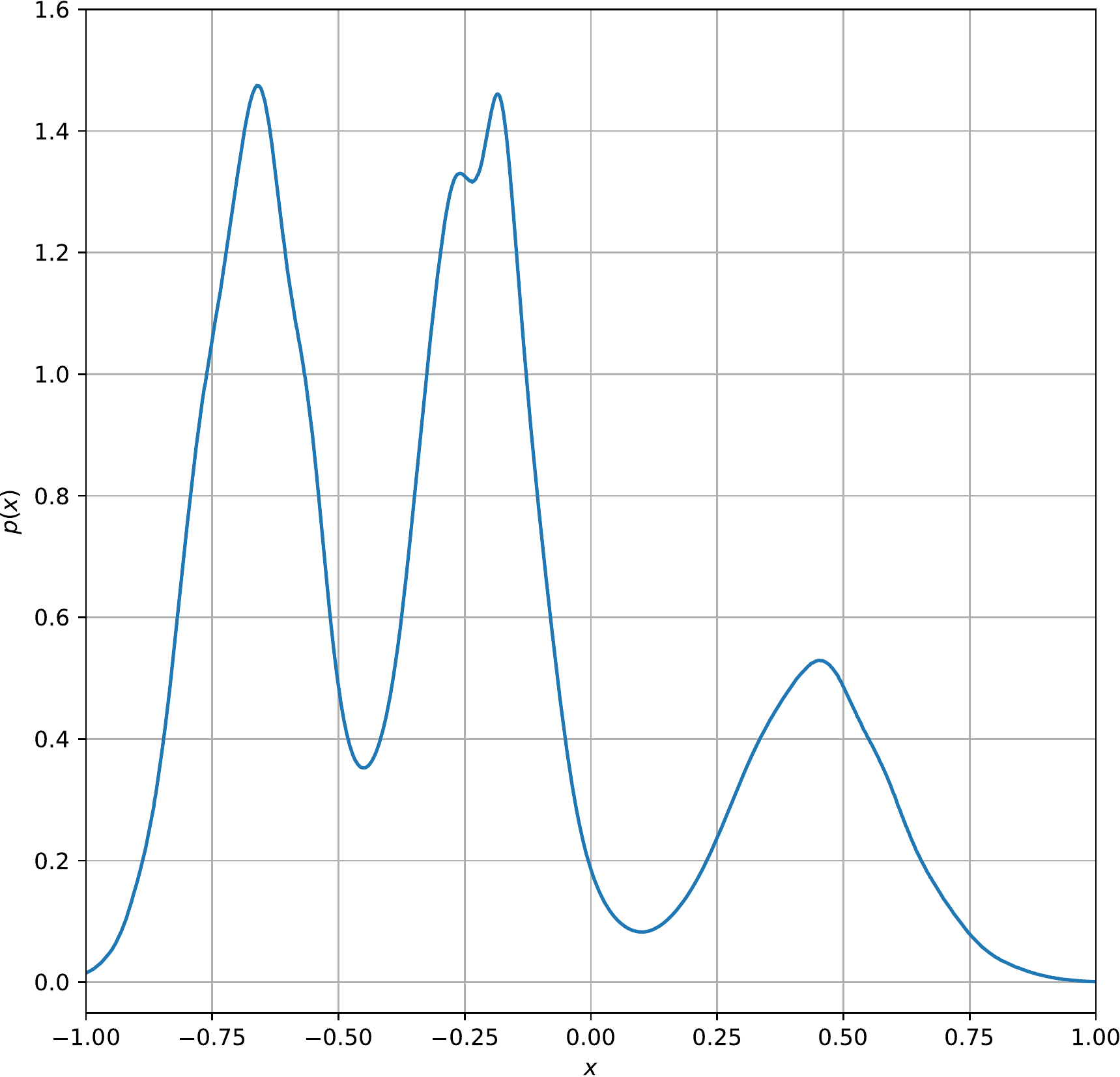}\\
\small{(e) First-order DSM, VP}
	\end{minipage}
	\begin{minipage}{.4\linewidth}
	\centering
	\includegraphics[width=.95\linewidth]{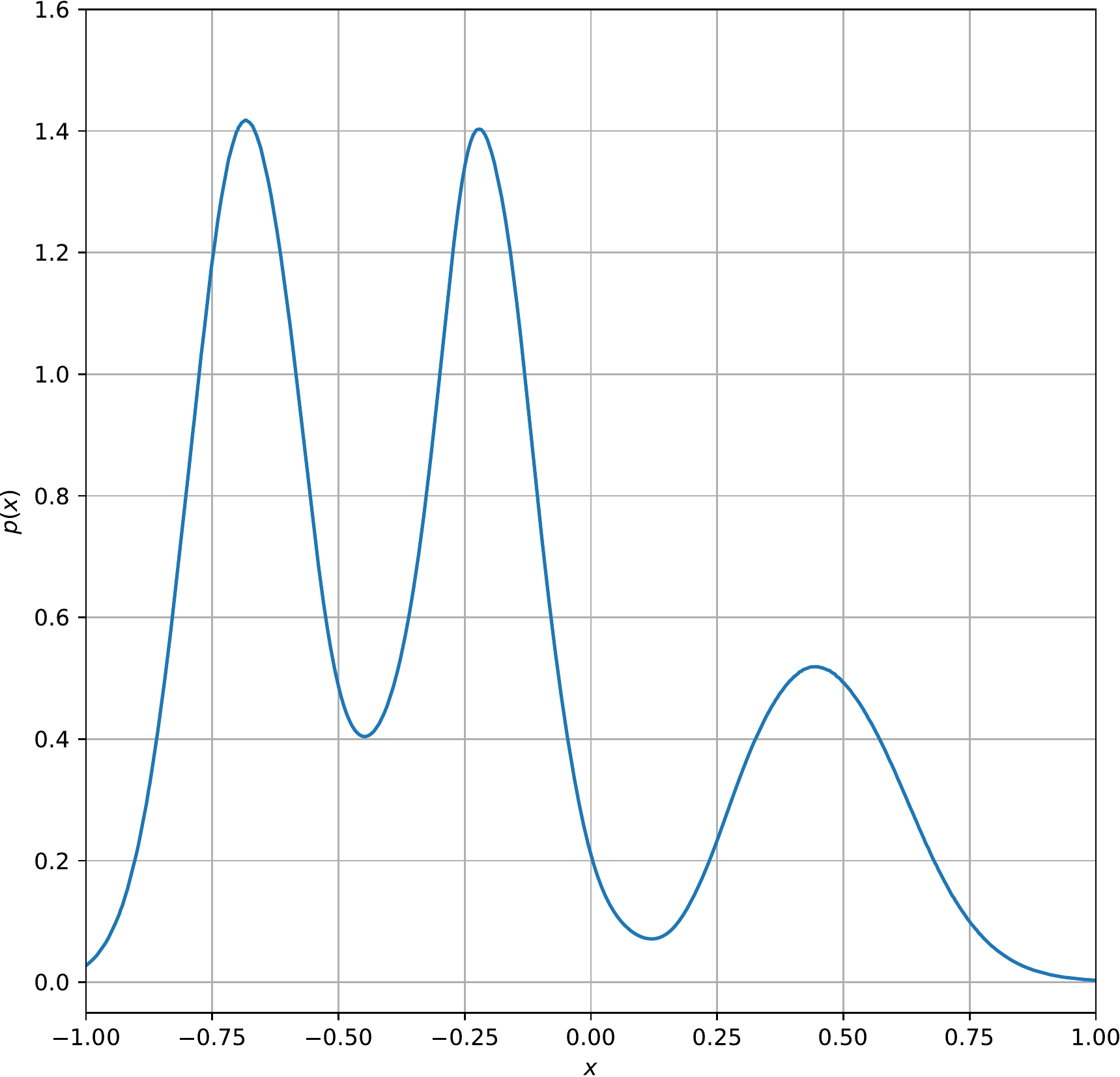}\\
\small{(f) Third-order DSM, VP}
\end{minipage}
	\caption{Model density of {\ode}s (VP type) on 1-D mixture-of-Gaussians data.\label{fig:mog_density_vp}}
\end{figure*}

\begin{figure*}[t]
	\centering
	\begin{minipage}{.48\linewidth}
		\centering		
 	\includegraphics[width=.95\linewidth]{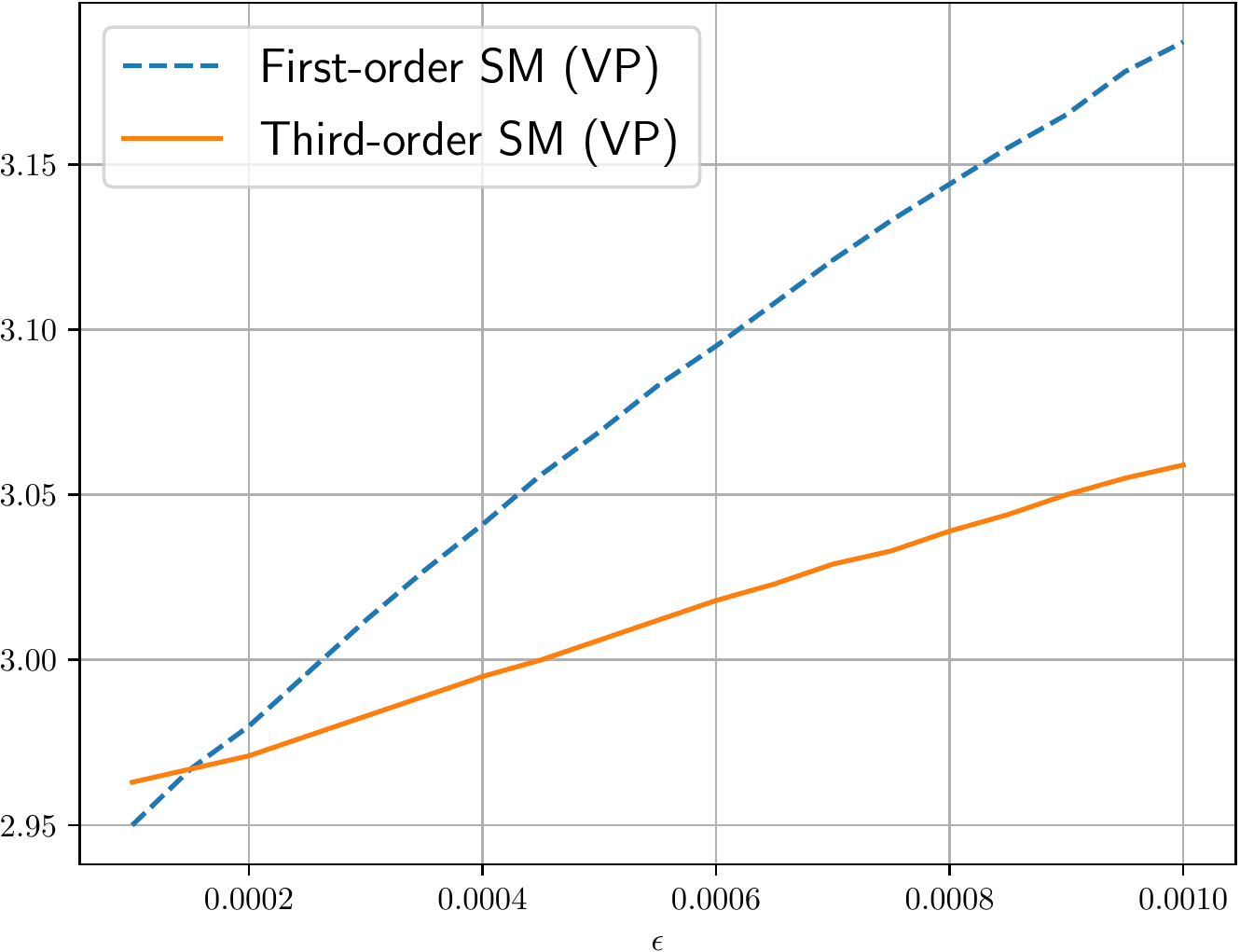}\\
\small{(a) CIFAR-10 bpd (VP)}
  \end{minipage}
	\begin{minipage}{.48\linewidth}
		\centering		
 	\includegraphics[width=.95\linewidth]{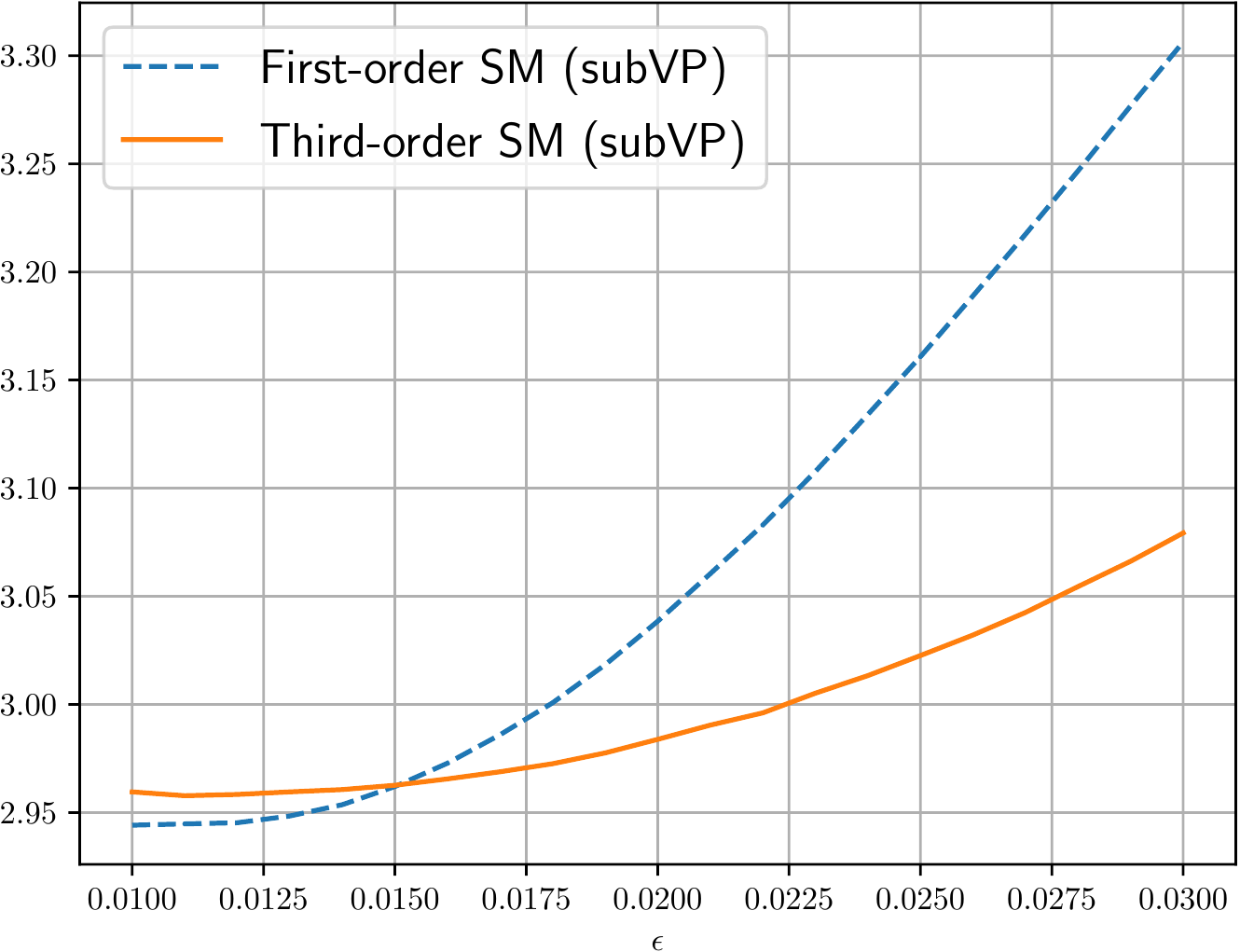}\\
\small{(b) CIFAR-10 bpd (subVP)}
  \end{minipage}
	\caption{Model NLL (negative-log-likelihood) in bits/dim (bpd) of {\ode}s for both VP type and subVP type trained by first-order DSM and our proposed third-order DSM on CIFAR-10 with the likelihood weighting functions and the importance sampling in~\citet{song2021maximum}, varying the start evaluation time $\epsilon$.\label{fig:cifar10_density_vp}}
\end{figure*}

\begin{figure*}[t]
	\centering
	\begin{minipage}{.48\linewidth}
		\centering
			\includegraphics[width=.91\linewidth]{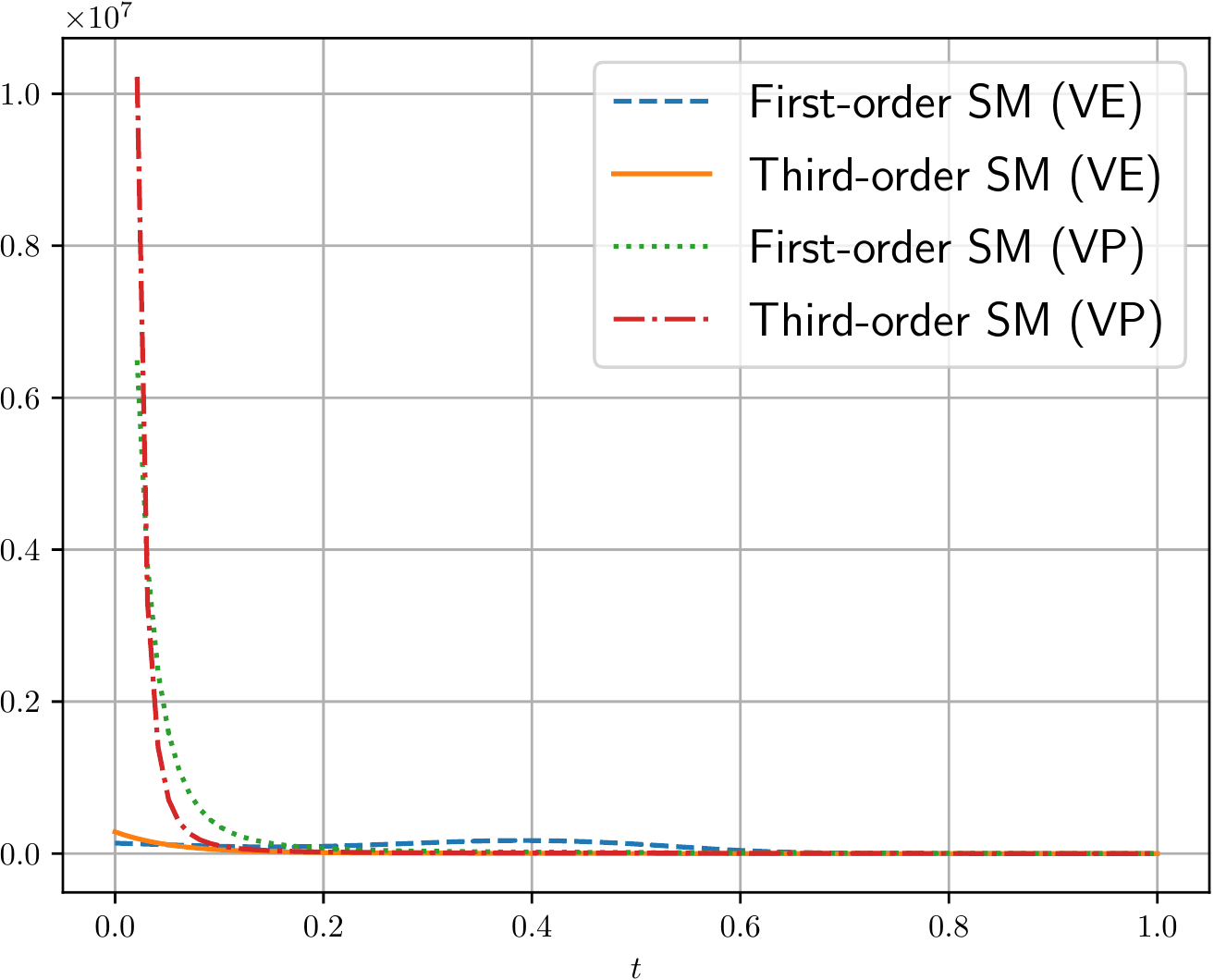}\\
\small{(a) $\ell_{\DIFF}(t)$.}
	\end{minipage}
	\begin{minipage}{.48\linewidth}
	\centering
	\includegraphics[width=\linewidth]{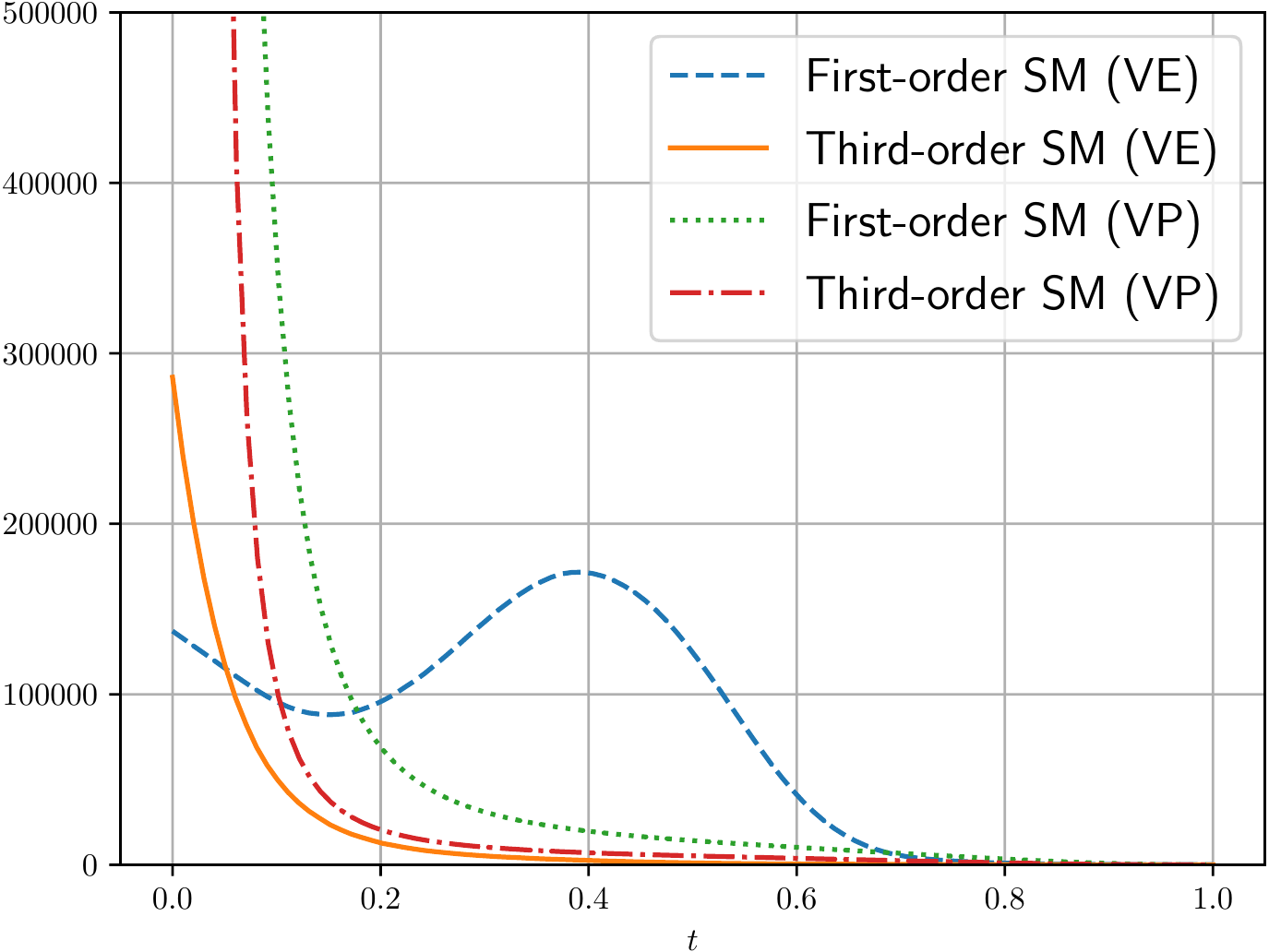}\\
\small{	(b) $\ell_{\DIFF}(t)$ (clipped by y-axis).}
\end{minipage}
	\caption{$\ell_{\DIFF}(t)$ of VP type and VE type on CIFAR-10, trained by first-order DSM with the likelihood weighting functions and importance sampling in~\citet{song2021maximum} and our proposed third-order DSM.\label{fig:s_ode_cifar10}}
\end{figure*}

\begin{figure*}[t]
	\centering
	\begin{minipage}{.28\linewidth}
		\centering
			\includegraphics[width=.96\linewidth]{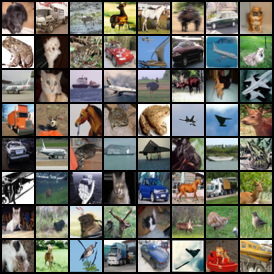}\\
\small{(a) First-order score matching}
	\end{minipage}
	\begin{minipage}{.28\linewidth}
	\centering
	\includegraphics[width=.96\linewidth]{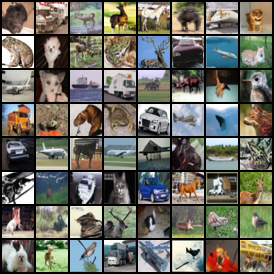}\\
\small{	(b) Second-order score matching}
\end{minipage}
	\begin{minipage}{.28\linewidth}
		\centering		
 	\includegraphics[width=.96\linewidth]{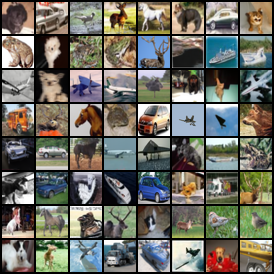}\\
\small{(c) Third-order score matching}
   \end{minipage}
	\caption{Random samples of SGMs (VE type) by PC sampler, trained by different orders of DSM.\label{fig:cifar10_samples}}
\end{figure*}

We also train VP and subVP types of ScoreODEs by the proposed high-order DSM method, with the maximum likelihood weighting function for ScoreSDE in~\citep{song2021maximum} (the weighting functions are detailed in \citep[Table 1]{song2021maximum}). Note that for the VE type, the likelihood weighting is exactly the DSM weighting used in our experiments.

We firstly show that for the ScoreODE of VP type, even on the simple 1-D mixture-of-Gaussians, the model density trained by the first-order DSM is not good enough and can be improved by the third-order DSM, as shown in Fig.~\ref{fig:mog_density_vp}.

We then train ScoreODE of VP and subVP types on the CIFAR-10 dataset. We use the pretrained checkpoint by first-order DSM in~\citep{song2021maximum} (the checkpoint of ``Baseline+LW+IS''), and further train 100k iterations. We use the same experiment settings as the VE experiments. We vary the start evaluation time $\epsilon$ and evaluate the model likelihood at each $\epsilon$, as shown in Fig.~\ref{fig:cifar10_density_vp}. For the ScoreODE trained by the first-order DSM, the model likelihood is poor when $\epsilon$ is slightly large. Note that \citet{song2020score,song2021maximum} uses $\epsilon=10^{-3}$ for sampling of VP type, and the corresponding likelihood is poor. Moreover, our proposed method can improve the likelihood for $\epsilon$ larger than a certain value (e.g. our method can improve the likelihood for the VP type with $\epsilon=10^{-3}$).
However, for very small $\epsilon$, our method cannot improve the likelihood for VP and subVP. We suspect that it is because of the ``unbounded score'' problem~\citep{dockhorn2021score}. The first-order score function suffers numerical issues for $t$ near $0$ and the first-order SM error is so large that it cannot provide useful information for the higher-order SM.

As the data score function $\nabla_{\x}\log q_t(\x_t)$ is unknown for CIFAR-10 experiments, we cannot evaluate $\Jc_{\FISH}(\theta)$. To show the effectiveness of our method, we evaluate the difference between $\s_\theta(\x_t,t)$ and $\nabla_{\x}\log p_t^{\ODE}(\x_t)$ in the $\Jc_{\DIFF}(\theta)$. Denote
\begin{equation}
    \ell_{\DIFF}(t)\coloneqq g(t)^2\E_{q_t(\x_t)}\|\s_\theta(\x_t,t)-\nabla_{\x}\log p_t^{\ODE}(\x_t)\|_2^2,
\end{equation}
we take $100$ time steps between $\epsilon=10^{-5}$ and $T=1$ to evaluate $\ell_{\DIFF}(t)$, where the ODE score function $\nabla_{\x}\log p_t^{\ODE}(\x_t)$ is computed by the method in Appendix.~\ref{appendix:change_of_score}, and the expectation w.r.t. $q_t(\x_t)$ are computed by the Monte-Carlo method, namely we use the test dataset for $q_0(\x_0)$ and then sample $\x_t$ from $q(\x_t|\x_0)$. We evaluate the $\ell_{\DIFF}(t)$ for the first-order DSM training (baseline) and the third-order DSM training for both the VP and VE types, as shown in Fig.~\ref{fig:s_ode_cifar10}. We can find that our proposed high-order DSM method can reduce $\ell_{\DIFF}(t)$ for {\ode}s for most $t\in[\epsilon,T]$. As $\s_\theta(\x_t,t)$ is an estimation of the true data score function $\nabla_{\x}\log q_t(\x_t)$, the results indicates that our proposed method can reduce $\ell_{\FISH}(t)$ of {\ode}s and then further reduce $\Jc_{\FISH}(\theta)$.
Also, we can find that the ``unbounded score'' problem of VE is much milder than that of VP, which can explain why our method can greatly improve the likelihood of VE type even for very small $\epsilon$.

Moreover, we randomly select a batch of generated samples by the PC sampler~\cite{song2020score} of the same random seed by the VE model of first-order, second-order and third-order DSM training, as shown in Fig.~\ref{fig:cifar10_samples}. The samples are very close for human eyes, which shows that after our proposed training method, the score model can still be used for the sample methods of SGMs to generate high-quality samples.

\end{document}